\def\eqref#1{equation~\ref{#1}}
\def\1{\bm{1}}
\DeclareMathAlphabet{\mathsfit}{\encodingdefault}{\sfdefault}{m}{sl}
\SetMathAlphabet{\mathsfit}{bold}{\encodingdefault}{\sfdefault}{bx}{n}
\newtcolorbox{definitionbox}{
  colback=white,
  colframe=black,
  boxrule=0.5pt,
  sharp corners,
  before skip=0pt,
  after skip=0pt,
  left=1mm,
  right=1mm,
  top=1mm,
  bottom=1mm,
  fontupper=\normalsize
}
\newtcolorbox{takeaway}{
  colback=gray!10,         
  colframe=gray!80!black,  
  title=\textbf{Takeaway},
  fonttitle=\bfseries,
  left=1em,
  right=1em,
  top=1em,
  bottom=1em,
  breakable
}
\newcommand{\cK}{\mathcal{K}}
\newtheorem{theorem}{Theorem}
\newtheorem{definition}{Definition}
\newtheorem{lemma}{Lemma}
\theoremstyle{plain}
\newtheorem{Proposition}[theorem]{Proposition}
\theoremstyle{definition}
\theoremstyle{remark}
\title{Covariance Density Neural Networks}
\author{%
  Om Roy\textsuperscript{1,*},\quad
  Yashar Moshfeghi\textsuperscript{1},\quad
  Keith Smith\textsuperscript{1}\\[1ex]
  \textsuperscript{1}Computer and Information Sciences, University of Strathclyde, Glasgow, Scotland, United Kingdom, G1 1XQ\\[1ex]
  \texttt{o.roy.2022@uni.strath.ac.uk}, 
}
\begin{document}

\maketitle

\begin{abstract}

Graph neural networks have re-defined how we model and predict on network data but there lacks a consensus on choosing the correct underlying graph structure on which to model signals. CoVariance Neural Networks (VNN) address this issue by using the sample covariance matrix as a Graph Shift Operator (GSO). Here, we improve on the performance of VNNs by constructing a Density Matrix where we consider the sample Covariance matrix as a quasi-Hamiltonian of the system in the space of random variables. Crucially, using this density matrix as the GSO allows components of the data to be extracted at different scales, allowing enhanced discriminability and performance. We show that this approach allows explicit control of the stability-discriminability trade-off of the network, provides enhanced robustness to noise compared to VNNs, and outperforms them in useful real-life applications where the underlying covariance matrix is informative. In particular, we show that our model can achieve strong performance in subject-independent Brain Computer Interface EEG motor imagery classification, outperforming EEGnet while being  faster. This shows how covariance density neural networks provide a basis for the notoriously difficult task of transferability of BCIs when evaluated on unseen individuals,  while providing a principled, tuneable control over the stability--discriminability trade-off via the inverse temperature parameter $\beta$.
\end{abstract}

\section{Introduction}

Graph Neural Networks (GNN) have served as an essential platform for the modelling of Network data \citep{gama,gnn,gat,ruiz,cheby,edge,kipf,vignac}. Much of the recent developments in GNNs have stemmed from Graph Signal Processing (GSP)\citep{gsp,moura} and other areas of Network Science \citep{peixoto,guimera,mdd,lamb,arenas}. 
A key component of GSP is the Graph Shift Operator (GSO), whose eigen-decomposition underlies spectral filtering operations analogous to the Discrete Fourier Transform.

However, in many application domains, including neuroscience, finance, and genomics, we observe multivariate signals \emph{without} a known underlying graph topology. Standard GNNs require a pre-specified graph, and learning one from 
data can introduce instability, overfitting, and noise sensitivity, especially under limited samples. A fundamental limitation of GSP is that the GSO is typically unrelated to the graph signal data itself \citep{gvsa}.

CoVariance Neural Networks (VNNs) \citep{cov} address this by using the sample covariance matrix, computed directly from the data, as the GSO. This approach has intuitive links to Principal Component Analysis (PCA) and inherits desirable transferability properties of GNNs to data of different dimensions \citep{cov}. However, VNNs face two key limitations: \begin{enumerate}
  \item \textbf{No control over the stability--discriminability trade-off}: VNNs inherently discriminate signal 
    differences lying in the \emph{low-variance} eigenspace, while high-variance principal components remain 
    indistinguishable. There is no mechanism to tune which spectral components the network can resolve.
    \item \textbf{Noise sensitivity}: Since VNNs discriminate in the low-variance subspace, noise concentrated in 
    these directions common in low signal-to-noise ratio (SNR) settings (such as EEG data) directly 
    degrades performance.
 \end{enumerate}

In this work, we propose \textbf{Covariance Density Neural Networks (CDNNs)}, which resolve both limitations. 
We construct a density matrix $\rho(\mathbf{C}) = e^{-\beta\mathbf{C}} / \mathrm{Tr}(e^{-\beta\mathbf{C}})$ 
from the sample covariance matrix $\mathbf{C}$, treating $\mathbf{C}$ as a quasi-Hamiltonian 
in the space of random variables. Using this density matrix as the GSO introduces qualitatively new capabilities.



Density matrices have long been used in Quantum Mechanics and information theory \citep{info,qt2} to describe physical systems as Quantum States \citep{manlio,qt,braunstein,lzde,hubner}. A density matrix is a positive semi-definite, self-adjoint matrix with a trace of one acting on the Hilbert Space of the system \citep{qt}. This gives a probabilistic interpretation to the eigenvalues of the matrix where the eigenmodes represent different probabilistic states while also allowing an entropic interpretation. The Hamiltonian of the system determines the evolution of the system defined by a density matrix. Naturally, the graph Laplacian has appeared as a candidate to represent this Hamiltonian operator. However, the question remains, what if the true underlying graph structure is unknown?

In this vein, we show that the sample covariance matrix, under reasonable assumptions, can act as spectral surrogate to the Graph Laplacian and thus play the role of a \textit{Quasi}-Hamiltonian. We propose a novel density matrix constructed from the sample covariance matrix and define convolutions on this operator as the basis of our Covariance Density Neural Networks (CDNN). As a consequence of this formulation we present novel information theoretic tools such as the multi-scale Von Neumann entropy for Covariance matrices, a measure of entropy that can be applied to singular, low-rank covariance matrices. We also create multi-scale filter banks on the sample covariance matrix which we show improves performance. Further, we show empirically and theoretically the importance of the $\beta$ (inverse temperature) parameter and how it allows us to control the discriminability and stability of the network. 

Empirically, we show that CDNNs  outperform VNNs in financial forecasting where the underlying covariance matrix may be informative \citep{exchange}. CDNNs also show enhanced performance in the analysis of neurological signals, in particular, we demonstrate strong performance in classifying unseen individuals' EEG brain signals in Motor Imagery (MI) tasks \citep{motor,bci5}. This leverages the transferability Property of VNNs and combines it with the increased stability and discriminability of CDNNs to classify 4-class motor imagery signals when evaluation is done on a test individual not seen in training \citep{motor}. We show that our approach outperforms benchmarks in the field of EEG signal classification such as EEGNet\citep{BCI} while being faster. This is a step towards real-time Brain Computer Interfaces (BCI) \citep{bci5,bci4,bci3,bci2,bci1} where high accuracy on unseen individuals coupled with low training times are highly desirable.

\section{Background and Motivation}

\subsection*{Overview}

\textbf{Graph Signal Processing in brief.}
In classical signal processing, a time series is a sequence of values indexed by time, and the 
\emph{shift operator} (delay by one sample) underpins the Fourier transform and filtering.
Graph Signal Processing (GSP) generalises this idea: a \emph{graph signal} assigns a value to each 
node of a graph, and the \emph{Graph Shift Operator} (GSO), typically the adjacency or Laplacian 
matrix, plays the role of the shift. Multiplying a signal by the GSO ``propagates'' values along 
edges, and repeated application (powers of the GSO) defines graph convolution filters that mix 
information at increasing neighborhoods. The eigen-decomposition of the GSO provides a 
\emph{graph Fourier basis} analogous to the DFT, enabling spectral analysis and filtering of 
signals on graphs.

\textbf{Graph Neural Networks as learnable graph filters.}
Graph Neural Networks (GNNs) are essentially graph filters with learnable coefficients. 
A typical GNN layer can be written as $\sigma\bigl(\sum_{k=0}^{K} h_k \mathbf{S}^k \mathbf{x}\bigr)$, 
where $\mathbf{S}$ is the GSO, $h_k$ are trainable filter coefficients, and $\sigma(\cdot)$ is a 
point-wise non-linearity (e.g., ReLU). The coefficients $\{h_k\}$ determine the spectral response 
of the filter (how strongly different graph frequencies are amplified or attenuated) and are learned 
from data via backpropagation. Thus, GNNs combine the representational power of polynomial graph 
filters with the flexibility of learned parameterization and non-linear activation functions.

\textbf{Covariance as a graph operator.}
When no graph topology is available, the sample covariance matrix $\mathbf{C}_n$ offers a natural 
data-driven alternative: its $(i,j)$-th entry measures the linear association between variables 
$i$ and $j$, providing implicit ``edge weights.'' Using $\mathbf{C}_n$ as the GSO means that 
filtering a signal propagates information according to the statistical dependencies in the data, 
and the eigenvectors of $\mathbf{C}_n$, which are the principal components, serve as the 
graph Fourier basis.

\textbf{Why a density matrix?}
A standard covariance-based filter can only discriminate signal differences in the 
\emph{low-variance} eigenspace (see Theorem~1). The density matrix transformation 
$\rho(\mathbf{C}) = e^{-\beta\mathbf{C}}/\mathrm{Tr}(e^{-\beta\mathbf{C}})$ \emph{inverts} 
this: for $\beta > 0$, large covariance eigenvalues map to \emph{small} density eigenvalues, 
shifting high-variance components into the discriminable region. The parameter $\beta$ thus 
provides explicit, tuneable control over which spectral components the network resolves and provides a noise-discriminability tradeoff.
By combining multiple $\beta$ values in a filter bank, CDNNs can discriminate across 
\emph{all} spectral scales simultaneously. The normalization by the trace induces scale invariance and stability to the operator while allowing interpretability and transfer of information-theoretic measures.

We now introduce formal definitions that link these frameworks together.

\begin{definition}[Graph Convolutional Filter]
Let $\bm{h} = [h_0, \ldots, h_K]^\top$ be filter coefficients. Let $\textbf{S}$ be the GSO. A graph convolutional filter of order $K$ is the linear map
\[
\textbf{H}(\textbf{x}) \;=\; \sum_{k=0}^K h_k \, \textbf{S}^k \textbf{x} \;=\; H(\textbf{S})\,\textbf{x},
\]
where $\textbf{H}(\textbf{S}) = \sum_{k=0}^K h_k S^k$.
\end{definition}

\begin{definition}[Graph Fourier Transform (GFT) \citep{elvin}]
For a diagonalizable GSO $\textbf{S} =\textbf{ V} \Lambda \textbf{V}^{-1}$ with eigenvectors $V$ and eigenvalues $\Lambda$, the GFT of a graph signal $\textbf{x}$ is 
\(
\tilde{\textbf{x}} = \textbf{V}^{-1} \textbf{x},
\)
and the inverse GFT is
\(
\textbf{x }= \textbf{V}\,\tilde{\textbf{x}}.
\)
\end{definition}

\begin{definition}[Covariance and Sample Covariance]
Let $\textbf{X} \in \mathbb{R}^d$ with mean \textbf{${\mu}$}. The covariance matrix is given by 
\begin{equation}
\textbf{C} = \mathbb{E}[(\textbf{X} - \mu)(\textbf{X} - \mu)^\top].
\end{equation}
From $n$ samples $\textbf{X}^{(1)},\dots,\textbf{X}^{(n)}$, the sample covariance is defined as
\begin{equation}
\textbf{C}_n = \frac{1}{n}\sum_{k=1}^n \bigl(\textbf{X}^{(k)} - \bar{\textbf{X}}\bigr)\bigl(\textbf{X}^{(k)} - \bar{\textbf{X}}\bigr)^\top,
\end{equation}
where 
\begin{equation}
\bar{\textbf{X}} = \frac{1}{n}\sum_{k=1}^n \textbf{X}^{(k)}.
\end{equation}
As $n \to \infty$, the sample covariance converges to the true covariance: \( \textbf{C}_n \to \textbf{C} \).
\end{definition}

\begin{definition}[coVariance Fourier Transform (VFT) \citep{cov}]
Let $\hat{\textbf{C}}_n =\textbf{ U} \Sigma \textbf{U}^\top$ be the eigendecomposition of the sample covariance. 
Then the VFT of a random sample $\textbf{x}$ is 
\(
\tilde{\textbf{x}} = \textbf{U}^\top \textbf{x}.
\)
\end{definition}

Quantum mechanical density matrices establish a probability structure for system states \citep{rao,manlio}. They must be
Hermitian ($\rho = \rho^\dagger$), positive semi-definite ($\rho \ge 0$), and have unit trace ($\mathrm{Tr}(\rho)=1$).

Importantly, the following density matrix has recently been Proposed\citep{manlio}:
\[
\rho = \frac{e^{-\beta \textbf{L}}}{Z}, \quad Z = \text{Tr}(e^{-\beta \textbf{L}}),
\]
where \( \textbf{L} \) is the graph Laplacian, and \( \beta > 0 \) is a parameter controlling the diffusion.

This allows the calculation of the spectral entropy of these matrices where eigenmodes define probabilistic routes of diffusion and $\beta$ allows the calculation of the entropy at different scales. Importantly, the sub-additivity of entropy is also maintained, a feat not achieved previously \citep{entropy,entropy2}. Taking inspiration from this we observe that there is a distinct relationship between graph Laplacian's and inverse covariance (precision) matrices and that graph learning is linked to sparse covariance estimation \citep{sce}.

It is established that the estimation of the combinatorial graph Laplacian (CGL) matrix from observed signals is usually performed via the CGL estimator (CGLE)~\citep{mle}. Specifically, the estimated Laplacian matrix $\hat{\mathbf{L}}$ is obtained as
\begin{equation}
    \hat{\mathbf{L}} = \arg \min_{\mathbf{L} \in \mathcal{L}_p(E)} 
    \bigl( -\log \det^{\dagger}(\mathbf{L}) 
    + \operatorname{tr}(\mathbf{L}\mathbf{C}_n) \bigr),
    \label{eq:mle}
\end{equation}
where $\mathcal{L}_p(E)$ denotes the set of valid combinatorial graph Laplacian matrices consistent with the edge set $E$, and $\mathbf{C}_n$ is the sample covariance matrix of the observed graph signals.

Note that the estimation problem in~\eqref{eq:mle} is grounded in a tight
spectral connection between the graph Laplacian and the covariance of
graph–stationary signals. Under the Gaussian–Markov assumption
$\mathbf{C}_n = \mathbf{L}^{\dagger}$, the covariance can be expressed as a spectral
function of $\mathbf{L}$ and therefore \emph{shares the same eigenvectors}
\citep{X.Dong,E.Pavez}.
This means that, when the number of observations $n$ greatly exceeds the graph order $N$,
the sample covariance $\mathbf{C}_{n}$ forms a consistent surrogate for the
Laplacian eigenspace.  The CGLE in~\eqref{eq:mle} can thus be interpreted as
selecting edge weights so that $\mathbf{L}$ aligns with the empirical
second-order statistics while retaining the common eigenbasis. It is thus intuitive to interpret the sample covariance matrix as a \textit{signed} or \textit{dual} data-driven Laplacian. We refer the reader to Appendix~\ref{app:cov-lap} for more details.  


It has also been shown that the eigenvectors of the Laplacian optimally decorrelate signals in a Gaussian Markov random field (GMRF) model where the precision matrix (inverse covariance matrix) is defined as the Laplacian \citep{zhang}. This Property makes precision matrix eigenvectors valuable for tasks like signal compression. However, the covariance matrix $\textbf{C}$ shares the same eigenvectors as the precision matrix (up to eigenvalue scaling), retaining the decorrelation Properties without requiring explicit inversion of the covariance matrix. Moreover, $\textbf{C}$ directly encodes global pairwise dependencies and serves as a more practical graph shift operator in graph neural networks (GNNs) \citep{kerivan,zugner,elvin,lecun}, as it avoids the sparsity constraints of the precision matrix and the numerical instability associated with its estimation \citep{loukas,mestre}.


 A real signal $x_i$ on a graph has Dirichlet energy  
\[
\mathbf{x}^{\top}\!\mathbf{L}\mathbf{x}
      =\sum_{(i,j)\in E} w_{ij}(x_i-x_j)^2
      =\underbrace{\sum_{i} d_i x_i^{2}}_{\text{self}}
       \;+\;
       \underbrace{\bigl(-\!\!\sum_{(i,j)\in E} w_{ij}\,x_i x_j\bigr)}_{\text{interaction}},
\]
where $d_i=\sum_j w_{ij}$ is the vertex degree. 
 For centered random variables $x_i$ with covariance matrix $\mathbf{C}$,  
\[
\mathbf{x}^{\top}\!\mathbf{C}\mathbf{x}
      =\sum_{i} \operatorname{Var}(x_i)\,x_i^{2}
      \;+\;
      \sum_{i\neq j} \operatorname{Cov}(x_i,x_j)\,x_i x_j,
\]
the diagonal variances $\operatorname{Var}(x_i)$ supply node potentials while off-diagonal covariances provide pairwise couplings.  Thus $\mathbf{C}$ may be viewed as a \emph{quasi-Hamiltonian}; its spectrum governs the same collective modes that minimise Dirichlet energy for $\mathbf{L}$.

\footnote{We emphasise that the term ``quasi-Hamiltonian'' is used as a spectral analogy---highlighting 
the role of $\mathbf{C}$ in shaping the system's spectral modes and governing signal 
diffusion,rather than implying a strict quantum-mechanical interpretation. The analogy is 
grounded in the shared mathematical structure: just as the Hamiltonian determines the Gibbs 
state in statistical mechanics, $\mathbf{C}$ determines our density operator.}

In certain use cases when the covariance matrix is used as a "graph" constructed from the data itself (i.e brain networks) the conceptualization of the covariance matrix as a "quasi-Hamiltonian" makes intuitive sense and may have practical benefits (the Hamiltonian is usually considered the energy operator of the system in quantum mechanics). Furthermore, when there is no predefined underlying graph structure the covariance matrix can act as a \textit{spectral surrogate} to the graph Laplacian. This is the motivation of Covariance Density Neural Networks.
\footnote{
While they exist, graph learning or pruning approaches can introduce instability, over-fitting, and noise sensitivity, especially when the true connectivity is uncertain or the number of samples is limited. By contrast, using the sample covariance (and thus the density matrix) as a fixed surrogate (similar to the Laplacian) provides a stable, interpretable operator without the risk of learning spurious edges.}
\begin{takeaway}
When no graph topology is available, we demonstrate that the sample covariance matrix can serve as a spectral proxy for the true graph Laplacian, effectively functioning as a \textbf{quasi-Hamiltonian} or \textit{energy operator} in our density-matrix formulation.
\end{takeaway}

\section{Covariance Density Neural Networks}

In this section we introduce the theoretical foundations of Covariance Density Neural Networks. We first introduce the Covariance Density Matrix and Filter.

\begin{definition}[Covariance Density Operator and Filter]
Define the density operator associated with a covariance matrix \(C\) as:
\[
\rho(\textbf{C}) = \frac{e^{-\beta \textbf{C}}}{\mathrm{Tr}(e^{-\beta \textbf{C}})}.
\]

For an input vector \(x\) and real-valued parameters \(\{h_k\}_{k=0}^K\), the output of the Covariance Density filter is defined as:
\[
\textbf{z} = \textbf{H}(\rho(\textbf{C}))x = \sum_{k=0}^K h_k \, [\rho(\textbf{C})]^k \textbf{x}.
\]
\end{definition}

Analogously, we can define the Covariance Density Perceptron and its filter bank representation as follows:

\begin{definition}[Covariance Density Perceptron]
Consider a Covariance Density filter \(H(\rho(\textbf{C}))\) and a given nonlinear activation function \(\sigma(\cdot)\). For an input \(\textbf{x}\), the Covariance Density Perceptron is defined as:
\[
\Phi(\textbf{x}; \textbf{C}, \textbf{H}) = \sigma(\textbf{H}(\rho(\textbf{C}))\textbf{x}).
\]

\end{definition}

\begin{definition}[Multi-Scale Covariance Density Filter Bank/Layer]
 For a covariance density filter bank consisting of $F_{out}$ scales with $F_{in}$ $m$-dimensional inputs for each $\beta_m$ the $m$-th scale is defined as:
\[
\textbf{x}_{\text{out}}[m] = \sigma\Biggl(\sum_{g=1}^{\text{Fin}} \textbf{H}^{(m)}_{g}\bigl(\rho(\textbf{C})\bigr)\, \textbf{x}_{\text{in}}[g]\Biggr),
\]
and the final multi-scale output of the layer is
\[
\textbf{x}_{\text{final}} = A\bigl(\textbf{x}_{\text{out}}[1],\, \textbf{x}_{\text{out}}[2],\, \dots,\, \textbf{x}_{\text{out}}[F_{out}]\bigr),
\]
where $A$ is an aggregation operator (e.g., concatenation, summation, or mean).
   
\end{definition}
In order to show that graph neural networks are stable, the assumptions of integral Lipschitz continuity of the network's GSO frequency response is traditionally assumed. For Covariance Density Networks however, we explicitly derive the Composite Lipschitz constant with respect to the density transformation as it is instructive for explaining the role of the inverse temperature $\beta$ in the network.

\begin{wrapfigure}{l}{0.7\textwidth}  
  \centering
  \includegraphics[width=\linewidth]{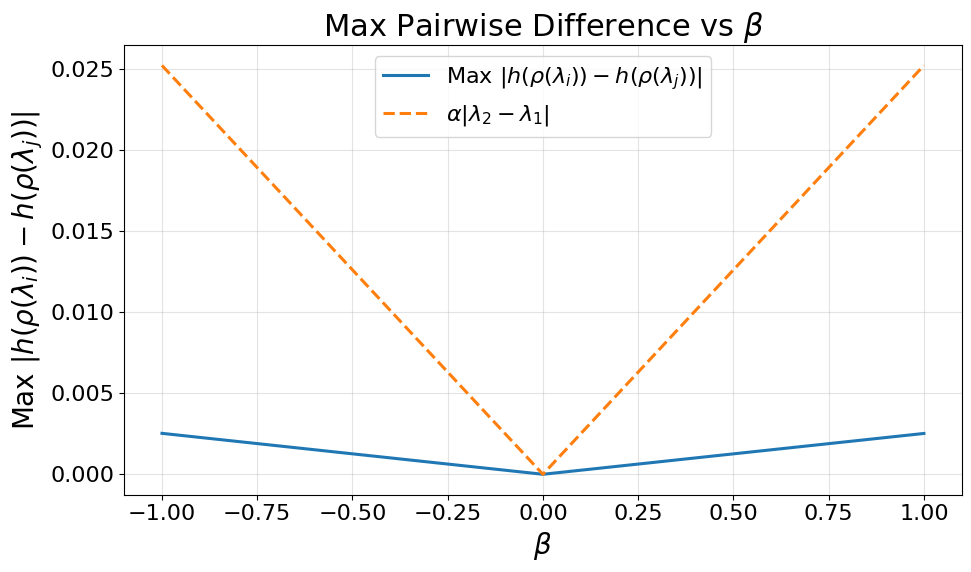}
  \caption{Composite Lipschitz Condition Empirical Validation}
  \label{fig:empneurips}
\end{wrapfigure}
\begin{theorem}[Composite Lipschitz Conditions for covariance density Filters]
\label{thm:codensity_lipschitz}
The composite frequency response of a Covariance Density Filter is given by
\[
h(\rho(\lambda)) = \sum_{k=1}^K \frac{h_k e^{-\beta \lambda k}}{Z^k},
\]
where \( h_k \) are finite coefficients, \( \beta > 0 \) is a tunable parameter, \( Z_k = \sum_{i=1}^m e^{-\beta \lambda_i k} \) is the partition function, and \( \{\lambda_i\}_{i=1}^m \) are the eigenvalues of the covariance matrix \( C \). Assuming \( \beta \) and \( h_k \) are finite constants, the covariance density filter satisfies:

The Lipschitz condition:
\[
|h(\rho(\lambda_2)) - h(\rho(\lambda_1))| \leq \alpha |\lambda_2 - \lambda_1|
\]

The Integral Lipschitz condition:
\[
|h(\rho(\lambda_2)) - h(\rho(\lambda_1))| \leq \theta \frac{|\lambda_2 - \lambda_1|}{\frac{|\lambda_1 + \lambda_2|}{2}}, \quad \alpha \leq \frac{\theta}{\sup \left( \frac{|\lambda_1 + \lambda_2|}{2} \right)}
\]
For some $\theta>0$ with \( \alpha = \sum_{k=0}^K |h_k| |\beta k| \).
\end{theorem}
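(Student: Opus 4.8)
The plan is to treat the composite frequency response $h(\rho(\lambda))$ as a smooth scalar function of a single spectral variable $\lambda$, bound its derivative uniformly over the relevant range of eigenvalues, and then convert that derivative bound into the two stated inequalities. First I would record the eigenvalue action of the density operator: since $\rho(\textbf{C}) = e^{-\beta\textbf{C}}/Z$ with $Z=\mathrm{Tr}(e^{-\beta\textbf{C}})=\sum_i e^{-\beta\lambda_i}$, any eigenvalue $\lambda$ of $\textbf{C}$ maps to the eigenvalue $e^{-\beta\lambda}/Z$ of $\rho(\textbf{C})$, and hence $[\rho(\textbf{C})]^k$ carries eigenvalue $e^{-\beta\lambda k}/Z^k$. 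Substituting into the filter definition $\textbf{H}(\rho(\textbf{C})) = \sum_k h_k[\rho(\textbf{C})]^k$ recovers the stated composite response $h(\rho(\lambda)) = \sum_k h_k e^{-\beta\lambda k}/Z^k$, where $Z$ is fixed by the spectrum and is therefore treated as a constant in $\lambda$; note the $k=0$ term is constant and will drop out under differentiation.

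The core estimate is the derivative bound. Differentiating term by term gives $\frac{d}{d\lambda}h(\rho(\lambda)) = -\sum_k h_k\,\beta k\, e^{-\beta\lambda k}/Z^k$, so the triangle inequality yields $|\frac{d}{d\lambda}h(\rho(\lambda))| \le \sum_k |h_k|\,|\beta k|\,(e^{-\beta\lambda}/Z)^k$. The decisive observation is that $e^{-\beta\lambda}/Z$ is exactly an eigenvalue of the density matrix $\rho(\textbf{C})$, which is positive semi-definite with unit trace; hence each such eigenvalue lies in $[0,1]$ and $(e^{-\beta\lambda}/Z)^k \le 1$ for every $k$. This collapses the bound to the constant $\alpha = \sum_k |h_k|\,|\beta k|$, uniformly in $\lambda$. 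Applying the mean value theorem (equivalently, integrating the derivative along the segment from $\lambda_1$ to $\lambda_2$) then gives $|h(\rho(\lambda_2)) - h(\rho(\lambda_1))| \le \alpha|\lambda_2-\lambda_1|$, which is the Lipschitz condition.

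For the integral Lipschitz condition I would simply factor the Lipschitz bound through the midpoint. Writing $|\lambda_2-\lambda_1| = \frac{|\lambda_1+\lambda_2|}{2}\cdot\frac{|\lambda_2-\lambda_1|}{|\lambda_1+\lambda_2|/2}$ and bounding the midpoint by its supremum over the spectrum gives $|h(\rho(\lambda_2)) - h(\rho(\lambda_1))| \le \alpha\,\sup(\tfrac{|\lambda_1+\lambda_2|}{2})\,\frac{|\lambda_2-\lambda_1|}{|\lambda_1+\lambda_2|/2}$, so the choice $\theta = \alpha\,\sup(\tfrac{|\lambda_1+\lambda_2|}{2})$ establishes the inequality and, rearranged, the stated relation $\alpha \le \theta/\sup(\tfrac{|\lambda_1+\lambda_2|}{2})$. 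Finiteness of this supremum is guaranteed because the eigenvalues of a covariance matrix are bounded.

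The main obstacle — and the step deserving the most care — is the legitimacy of holding $Z$ constant while differentiating in $\lambda$, together with ensuring the eigenvalue bound $e^{-\beta\lambda}/Z \le 1$ holds across the whole range over which the Lipschitz comparison is taken, not merely at discrete eigenvalues. I would resolve this by fixing the spectrum that defines $Z$ and observing that, since $\textbf{C}$ is positive semi-definite, $\lambda \ge \lambda_{\min} \ge 0$, so $e^{-\beta\lambda} \le e^{-\beta\lambda_{\min}} \le Z$ throughout the admissible interval; this guarantees the unit-interval bound uniformly and legitimizes the pointwise-in-$\lambda$ treatment of the normalization. The remaining manipulations are routine.
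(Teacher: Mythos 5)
Your proposal is correct and follows essentially the same route as the paper's proof: both reduce the Lipschitz bound to a mean-value/derivative estimate on $h(\rho(\lambda))$, use the triangle inequality together with the key normalization bound $e^{-\beta\xi k}/Z^k \le 1$ to extract $\alpha = \sum_{k}|h_k|\,|\beta k|$, and obtain the integral Lipschitz condition by the same algebraic rearrangement $\theta = \alpha\,\sup\bigl(\tfrac{|\lambda_1+\lambda_2|}{2}\bigr)$. The only difference is cosmetic (you differentiate the full sum and invoke the mean value theorem once, whereas the paper applies it termwise to each exponential), and your explicit justification that $e^{-\beta\lambda}\le e^{-\beta\lambda_{\min}}\le Z$ on the whole interval is a slightly more careful rendering of the paper's assertion that ``$Z^k$ is at least as large as $e^{-\beta\xi k}$.''
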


\footnote{Although it is rather un-intuitive to consider negative values of $\beta$, allowing its use allows a greater exploration of the spectral profile, nevertheless such interpretations are not missing in the literature, i.e consider the communicability matrix introduced by Estrada et al. \citep{estrada} or even the Softmax function \citep{deeplr,goodfellow}.}


Importantly, the theorem holds for both positive and negative~\(\beta\). We assume composite Lipschitz bounds on the density‐transformed eigenvalues, treating filter coefficients as fixed, even though learned filters may violate integral Lipschitz. Viewing \(\rho(C)\) as a GSO, linear filters cannot remain stable and distinguish signals beyond their Lipschitz cutoff—i.e., high‐variance principal components stay indistinguishable. Pointwise nonlinearities can help \citep{act}, but some high‐frequency content still eludes discrimination \citep{disc}.

Observe, in the covariance density context, that the eigenvalues of the covariance matrix $\textbf{C}$ determine high-variance vs low-variance directions in the eigenbasis. If we denote these eigenvalues by $\{\lambda_i\}_{i=1}^N$ in ascending order (i.e.\ $\lambda_1 \le \cdots \le \lambda_N$), then the density operator has eigenvalues $\{\rho_i\}$ that are decreasing functions of $\lambda_i$ when $\beta > 0$ and increasing when $\beta<0$ .

For the Covariance matrix itself  \textbf{larger} $\lambda_i$ corresponds to \emph{higher variance} directions.  Consequently, for the covariance density filter $H(\rho(\textbf{C}))$, the composite integral Lipschitz condition is equivalent to
\[
   \bigl\lvert \rho_i \, \partial_{\rho_i} \bigl( h(\rho(\lambda_i)) \bigr) \bigr\rvert\leq|\beta| \sum_{k=0}^K |h_k| | k|{\sup \left( \frac{|\lambda_1 + \lambda_2|}{2} \right)}
\]

this forces the derivative of the filter to shrink as $\rho_i$ grows. Consider the case when $\beta>0$, larger $\lambda_i$ leads to smaller $\rho_i$, so one sees a role reversal, i.e. large $\lambda_i$ are mapped to small density‐eigenvalue $\rho_i$.  
and small $\lambda_i$ are mapped to large density‐eigenvalue $\rho_i$.

Hence, imposing an integral Lipschitz bound in the covariance density setting tends to flatten the filter for the large values of $\rho_i$—i.e.\ for the low‐variance eigenvalues $\lambda_i$.  

\begin{wrapfigure}{l}{0.5\textwidth}
  \centering
  \includegraphics[width=0.48\textwidth]{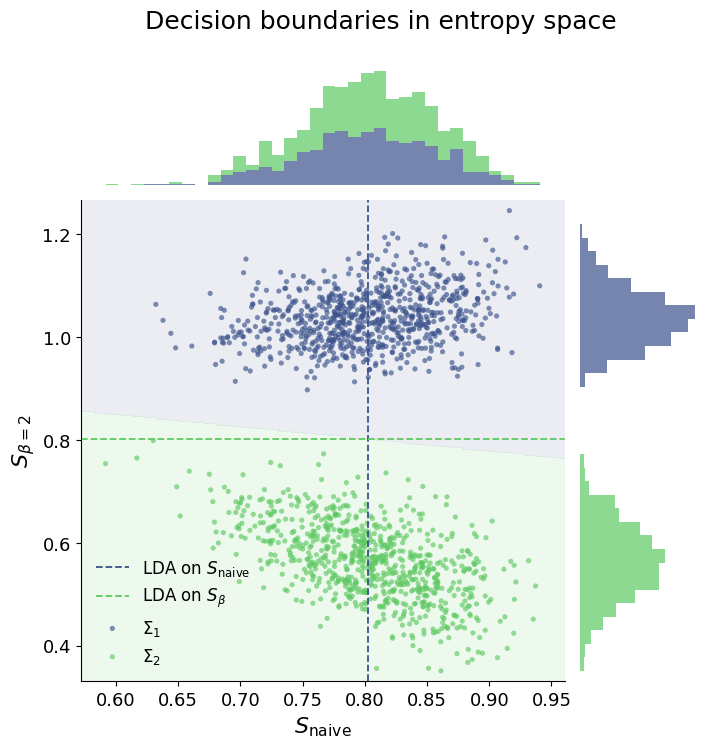}
  \caption{\textbf{Discrimination of two singular Gaussians using CVNE.} 
    We sampled $N{=}5\!\times\!10^4$ triples $(X,Y,Z)$, estimated covariances
    on sliding windows ($W{=}128$) and computed
    $S_{\mathrm{naive}}$, $S_\beta$ ($\beta$ = 2).  
    Background: 2D LDA decision regions. 
    Dashed: best 1D splits—Naive fails (AUC~0.5), von Neumann succeeds (AUC~1.0).}
  \label{fig:lda_singular}
\end{wrapfigure}

This leads to a different notion of “cut-off” frequency:  instead of failing to distinguish signals whose difference lie in the highest‐variance subspace, here the filter has trouble distinguishing components lying in the \emph{lowest‐variance} subspace. These non-discriminable subspaces are spanned by the eigenvectors whose variance in the covariance matrix is lower. In a PCA interpretation, the high‐variance principal components become the “low‐$\rho_i$ modes" and the covariance-density filter is able to discriminate signals that differ primarily in these high‐variance directions.  

The trade-off between stability and discriminability thus flips, precisely because $\lambda_i$ and $\rho_i$ are inversely related under the inverse exponential map.  

 For positive $\beta$ the exponential map ensures a 'compression' of high variance components into the discriminable region of the integral Lipschitz filter while for $\beta<0$ we return to the original covariance setting where signal \textit{differences} that lie in the low variance components eigenbasis can be discriminated but this time with a larger eigenvalue gap for higher variance components. i.e. the filter behaves similarly to a covariance filter. Importantly we can now construct multi-scale filter banks (and thus networks) that are discriminative in both high and low frequencies while being simultaneously more stable, although this increased discriminability could introduce noise into the system. The set of \emph{discriminable} signals for a CDNN is thus \citep{disc}:

\[
D_{\Phi} =
\begin{cases}
\bigl\{ 
(\textbf{x}, \textbf{y}) \in \mathbb{R}^N : 
\Phi(\textbf{x}; \rho(\textbf{C})) - \Phi(\textbf{y}; \rho(\textbf{C})) \in 
(V_{\text{low-}\lambda}) 
\bigr\}, 
& \beta > 0, \\[1em]
\bigl\{ 
(\textbf{x}, \textbf{y}) \in \mathbb{R}^N : 
\Phi(\textbf{x};\textbf{ C}) - \Phi(\textbf{y}; \textbf{C}) \in 
(V_{\text{high-}\lambda}) 
\bigr\}, 
& \beta < 0.
\end{cases}
\]

where $(V_{\!\cK})$ corresponds to the eigenspace associated to those density‐eigenvalues $\rho_i$ above the relevant cut-off. As these two spaces are orthogonal complements of each other, CDNNs constructed using multi-scale filter banks consisting of positive and negative $\beta$ values can discriminate signal differences in both high and low frequencies.

In this sense, it is also intuitive to allow the network to learn $\beta$ values from the data itself. In this vein, we  show that there is no significant drop in performance when this parameter is learned from the data and not hand-picked using domain knowledge (Appendix~\ref{app:lbeta}).

As shown in Theorem 1 the Lipschitz constant of the filter frequency response is explicitly controlled by $\beta$ and the filter coefficients. This is particularly useful as we can now directly manage the trade-off between the discriminability of the eigenvalues of the covariance matrix and the filters stability by tuning $\beta$.

\label{subsec:reconstruct}


Another benefit of the density matrix formulation is that it enables the use of  information-theoretical methods.

Given that the eigenvalues of the density matrix now form a probability distribution (i.e. summing to 1) we can now define a multi-scale Von Neumann Entropy for Covariance matrices (CVNE).

\begin{wrapfigure}{r}{0.4\columnwidth}  
  \vspace{-0.8em}
  \begin{tcolorbox}[defplain,
       title=\footnotesize Multiscale von Neumann Entropy for Covariance Matrices,  
       width=\linewidth,
      before skip=0pt, after skip=0pt]
    \footnotesize  
    Let \(\mathbf{C}\!\in\!\mathbb{R}^{N\times N}\) be a positive semidefinite covariance matrix.  
    The \emph{Multiscale von Neumann Entropy} of \(\mathbf{C}\) is
    \[
      S_\beta(\mathbf{C})
      = -\,\mathrm{Tr}\bigl(\rho(\mathbf{C})\,\log\rho(\mathbf{C})\bigr).
    \]
 
  \end{tcolorbox}
  \vspace{-0.8em}
\end{wrapfigure}

This allows us to determine the `regularity' of a covariance matrix. we show several desirable properties of this formulation, such as its ability to provide entropy values for low-rank, singular matrices, thus the CVNE and consequently CDNN's behave as inherent \textit{regularizers}, stabilizing ill-conditioned covariance structures. We also prove the sub-additivity of CVNE (Appendix \ref{app:vne-proof}) and show cases where it can be discriminative compared to naively normalizing the covariance matrix by its trace. For example, such an entropy is less effective when near global variance fluctuations occur (Appendix \ref{app:singularVNE}). As the limiting extreme case of pure global scaling Figure 2 shows that it is in fact \textit{scale-blind}. 


For completeness Theorem 2 shows that Covariance Density Neural networks are permutation equivariant, another typical requirement of a suitable GNN. 

\begin{theorem}[Permutation Equivariance of the Covariance Density Filter]

For any permutation matrix \(T \in \mathbb{R}^{m \times m}\), define the permuted Covariance matrix \(\hat{\textbf{C}} = \textbf{T}^T \textbf{C} \textbf{T}\) and the permuted signal \(\hat{\textbf{x}} = \textbf{T}^T \textbf{x}\). Then the filter \(\textbf{H}(\rho)\) is permutation equivariant, i.e.,
\[
\textbf{H}((\rho(\hat{\textbf{C}}))\hat{\textbf{x}} = \textbf{T}^T \textbf{H}((\rho(\textbf{C})) \textbf{x}.
\]
\end{theorem}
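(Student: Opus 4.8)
The plan is to reduce the statement to two structural facts: that a permutation matrix \(\textbf{T}\) is orthogonal, so that \(\textbf{T}^\top\textbf{T}=\textbf{T}\textbf{T}^\top=\textbf{I}\) and hence \(\textbf{T}^{-1}=\textbf{T}^\top\); and that both the matrix exponential and its normalizing trace behave well under conjugation by such a matrix. First I would establish that conjugation commutes with taking powers: for every integer \(k\ge 0\),
\[
(\textbf{T}^\top \textbf{C}\,\textbf{T})^k = \textbf{T}^\top \textbf{C}^k \textbf{T},
\]
because the interior factors \(\textbf{T}\textbf{T}^\top\) telescope to the identity. Summing the defining power series of the exponential then yields \(e^{-\beta \hat{\textbf{C}}} = \textbf{T}^\top e^{-\beta \textbf{C}}\textbf{T}\).

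Second, I would show the partition function is permutation-invariant, which is the one step genuinely specific to the density formulation. Using the cyclic property of the trace together with \(\textbf{T}\textbf{T}^\top=\textbf{I}\),
\[
\mathrm{Tr}\bigl(e^{-\beta\hat{\textbf{C}}}\bigr) = \mathrm{Tr}\bigl(\textbf{T}^\top e^{-\beta\textbf{C}}\textbf{T}\bigr) = \mathrm{Tr}\bigl(e^{-\beta\textbf{C}}\textbf{T}\textbf{T}^\top\bigr) = \mathrm{Tr}\bigl(e^{-\beta\textbf{C}}\bigr).
\]
Combining the two facts gives \(\rho(\hat{\textbf{C}}) = \textbf{T}^\top \rho(\textbf{C})\,\textbf{T}\): the density operator is itself permutation-equivariant. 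Applying the power identity once more to \(\rho(\hat{\textbf{C}})\) gives \([\rho(\hat{\textbf{C}})]^k = \textbf{T}^\top[\rho(\textbf{C})]^k\textbf{T}\), so that
\[
\textbf{H}(\rho(\hat{\textbf{C}})) = \sum_{k=0}^K h_k\,[\rho(\hat{\textbf{C}})]^k = \textbf{T}^\top\Bigl(\sum_{k=0}^K h_k\,[\rho(\textbf{C})]^k\Bigr)\textbf{T} = \textbf{T}^\top \textbf{H}(\rho(\textbf{C}))\,\textbf{T}.
\]
Finally, substituting \(\hat{\textbf{x}} = \textbf{T}^\top\textbf{x}\) and collapsing the trailing \(\textbf{T}\textbf{T}^\top=\textbf{I}\) gives \(\textbf{H}(\rho(\hat{\textbf{C}}))\hat{\textbf{x}} = \textbf{T}^\top\textbf{H}(\rho(\textbf{C}))\,\textbf{x}\), which is the claim.

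The bulk of this argument is routine linear algebra; the only conceptually load-bearing step is the invariance of the partition function. For an ordinary polynomial filter in a raw GSO one would never need it, but because \(\rho(\textbf{C})\) is normalized by \(\mathrm{Tr}(e^{-\beta \textbf{C}})\), equivariance could in principle break if relabeling the variables rescaled \(Z\). The cyclic-trace identity shows \(Z\) is a genuine scalar invariant that factors out cleanly, so I expect no further difficulty. I would also note that none of the steps require \(\textbf{C}\) to be invertible or full rank, so the result holds for the singular, low-rank covariance matrices emphasized elsewhere in the paper.
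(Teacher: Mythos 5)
Your proposal is correct and follows essentially the same route as the paper's proof: conjugation-equivariance of the matrix exponential (which you derive from the power series rather than cite), cyclic invariance of the trace to show the partition function is unchanged, and orthogonality of \(\textbf{T}\) to collapse \(\textbf{T}\textbf{T}^\top=\textbf{I}\) at the end. The only cosmetic difference is that you first establish \(\rho(\hat{\textbf{C}})=\textbf{T}^\top\rho(\textbf{C})\textbf{T}\) as an intermediate fact before applying the polynomial filter, whereas the paper works directly with the expanded form \(e^{-\beta\textbf{C}k}/Z^k\); the substance is identical.
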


\theoremstyle{plain}

While the sensitivity of the ensemble covariance matrix to perturbations is well-studied we must derive new bounds in this new covariance density representation.

For this, we begin by relating the operator norm of the covariance matrix with the covariance density matrix.

\begin{lemma}[Error Bound for Covariance Density Matrix]
Let \(\mathbf{C}\) be a covariance matrix and \(\delta\mathbf{C}\) its perturbation. Further, let \(\mathbf{E}\) be the density "error matrix", then 
\[
\|\mathbf{E}\|\;\le\;\frac{|\beta|\,\|\delta\mathbf{C}\|\,F(\beta,\mathbf{C},\delta\mathbf{C})}{R}
\Bigl(1 + m\,e^{\mathbf{1}_{\{\beta<0\}}|\beta|\,\|\mathbf{C}\|}\Bigr),
\]
where \(R>0\), \(m=\dim(\mathbf{C})\), \(\mathbf{1}_{\{\beta<0\}}\) is the indicator of \(\beta<0\), and
\[
F(\beta,\mathbf{C},\delta\mathbf{C}) =
\begin{cases}
1, & \beta\ge0,\\
\dfrac{e^{|\beta|\|\mathbf{C}\|}\bigl(e^{|\beta|(\|\mathbf{C}+\delta\mathbf{C}\|-\|\mathbf{C}\|)}-1\bigr)}{|\beta|\,(\|\mathbf{C}+\delta\mathbf{C}\|-\|\mathbf{C}\|)}, & \beta<0,
\end{cases}
\]
with \(\lim_{\beta\to0}F(\beta,\mathbf{C},\delta\mathbf{C})=1\).
\end{lemma}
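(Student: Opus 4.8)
The plan is to identify $\mathbf{E}$ with the density-operator perturbation $\mathbf{E} = \rho(\mathbf{C}+\delta\mathbf{C}) - \rho(\mathbf{C})$ and to separate the error into a \emph{numerator} contribution (from the matrix exponential) and a \emph{denominator} contribution (from the partition function). Writing $A = -\beta\mathbf{C}$, $B = -\beta(\mathbf{C}+\delta\mathbf{C})$, $Z = \mathrm{Tr}(e^{A})$ and $Z' = \mathrm{Tr}(e^{B})$, I would first record the exact telescoping identity
\[
\rho(\mathbf{C}+\delta\mathbf{C}) - \rho(\mathbf{C}) = \frac{e^{B}-e^{A}}{Z'} + \frac{(Z-Z')\,e^{A}}{Z Z'},
\]
which isolates the two sources of error and already explains why the final bound is a sum of two terms, namely the ``$1$'' and the ``$m\,e^{\cdots}$'' inside the parentheses.

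The core estimate is the matrix-exponential difference, which I would control with Duhamel's formula $e^{B}-e^{A} = \int_0^1 e^{sB}(B-A)\,e^{(1-s)A}\,ds$, where $B-A = -\beta\,\delta\mathbf{C}$. Taking operator norms and using that $A,B$ are Hermitian, so $\|e^{sB}\| = e^{s\lambda_{\max}(B)}$ and $\|e^{(1-s)A}\| = e^{(1-s)\lambda_{\max}(A)}$, gives
\[
\|e^{B}-e^{A}\| \le |\beta|\,\|\delta\mathbf{C}\| \int_0^1 e^{s\lambda_{\max}(B)+(1-s)\lambda_{\max}(A)}\,ds.
\]
Here the sign of $\beta$ is decisive. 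For $\beta\ge 0$ we have $A,B \preceq 0$ (since $\mathbf{C}$ and $\mathbf{C}+\delta\mathbf{C}$ are positive semidefinite), so both exponents are $\le 0$, the integral is $\le 1$, and the factor collapses to $F=1$. For $\beta<0$ we have $A = |\beta|\mathbf{C}$ and $B = |\beta|(\mathbf{C}+\delta\mathbf{C})$, whence $\lambda_{\max}(A) \le |\beta|\|\mathbf{C}\|$ and $\lambda_{\max}(B) \le |\beta|\|\mathbf{C}+\delta\mathbf{C}\|$; evaluating $\int_0^1 e^{su}\,ds = (e^{u}-1)/u$ with $u = |\beta|(\|\mathbf{C}+\delta\mathbf{C}\|-\|\mathbf{C}\|)$ reproduces exactly the piecewise expression for $F(\beta,\mathbf{C},\delta\mathbf{C})$. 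In both cases this yields $\|e^{B}-e^{A}\| \le |\beta|\,\|\delta\mathbf{C}\|\,F$.

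It then remains to feed this into the telescoping identity. For the denominator term I would bound the partition-function gap by $|Z-Z'| = |\mathrm{Tr}(e^{A}-e^{B})| \le m\,\|e^{A}-e^{B}\| \le m\,|\beta|\,\|\delta\mathbf{C}\|\,F$, using $|\mathrm{Tr}(M)|\le m\|M\|$ for Hermitian $M$, and bound $\|e^{A}\| \le e^{\mathbf{1}_{\{\beta<0\}}|\beta|\|\mathbf{C}\|}$ (which is $1$ when $\beta\ge0$ and $e^{|\beta|\|\mathbf{C}\|}$ when $\beta<0$). Substituting both estimates, collecting the common factor $|\beta|\|\delta\mathbf{C}\|F$, and absorbing the strictly positive partition functions (recall $e^{-\beta\mathbf{C}}\succ 0$, so $Z,Z'>0$) into a single lower-bound constant $R>0$ produces the stated $\frac{1}{R}\bigl(1 + m\,e^{\mathbf{1}_{\{\beta<0\}}|\beta|\|\mathbf{C}\|}\bigr)$ envelope. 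The $\beta\to 0$ limit $F\to 1$ follows from $e^{u}-1 = u + O(u^2)$.

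The main obstacle is the non-commutativity of $\mathbf{C}$ and $\delta\mathbf{C}$: one cannot simply differentiate $e^{-\beta\mathbf{C}}$ as in the scalar case, and Duhamel's formula is precisely the device that makes the first-order bound both rigorous and tight without assuming $[\mathbf{C},\delta\mathbf{C}]=0$. The secondary subtlety is purely the sign of $\beta$: for $\beta\ge 0$ the map $t\mapsto e^{-t\beta\mathbf{C}}$ is a contraction semigroup and all exponential prefactors collapse to $1$, whereas for $\beta<0$ it is norm-amplifying, forcing the $e^{|\beta|\|\mathbf{C}\|}$ growth factors that the indicator and the piecewise definition of $F$ are designed to carry.
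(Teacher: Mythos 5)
Your proposal is correct and follows essentially the same route as the paper's proof: the identical two-term decomposition of $\mathbf{E}$ into an exponential-difference part and a partition-function part, Duhamel's formula to bound $\|e^{-\beta(\mathbf{C}+\delta\mathbf{C})}-e^{-\beta\mathbf{C}}\|$ with the sign-of-$\beta$ case split producing the piecewise $F$, the trace bound $|\mathrm{Tr}(M)|\le m\|M\|$ for the gap $|Z-Z'|$, and the bound $\|e^{-\beta\mathbf{C}}\|\le e^{\mathbf{1}_{\{\beta<0\}}|\beta|\|\mathbf{C}\|}$. The only cosmetic difference is the treatment of $R$: you absorb lower bounds on $Z'$ and $ZZ'$ into a generic constant $R>0$, whereas the paper posits $Z'=RZ$ with $Z\ge1$ (justified by a spectral-shift regularization), but since the lemma only asserts existence of some $R>0$, both readings are legitimate.
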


\begin{theorem}[Stability of Covariance–Density Network]
Consider an \(L\)-layer, \(F\)-channel multi-scale CDNN \(\Phi(\mathbf{x};\widehat{\mathbf{P}}_N,\mathbf{H})\) with Lipschitz constants \(\{\alpha_i\}\) and inverse‐temperatures \(\{\beta_i\}\). Let $Q>0$ be a constant. Assume \(\lvert h^{(\ell)}_{fg}(\rho)\rvert\le1\), \(\sigma\) is 1‐Lipschitz, and let \(\kappa\) be as in \citep{cov}.  For any \(\varepsilon\in(0,\tfrac12]\), with probability \(\ge1 - n^{-2\varepsilon} - 2\kappa m/n\),  
\begin{equation*}
\begin{split}
\bigl\|\Phi(\mathbf{x};\widehat{\mathbf{P}}_N,\mathbf{H}) - \Phi(\mathbf{x};\mathbf{P},\mathbf{H})\bigr\|
\;\le\;&
L\Bigl(\prod_{i=1}^F\alpha_i .\,Q\bigl(
  \underbrace{\tfrac{\|\delta\mathbf{C}\|}{R}\,\overbrace{F(\beta_i,\mathbf{C},\delta\mathbf{C})}^{\mathclap{\text{penalty for }\beta<0}}}
    _{\mathclap{\text{density estimation error}}}
\\[-4pt]
&\quad\times
  \underbrace{\sqrt{m}\,\bigl(1+m\,\overbrace{e^{\mathbf{1}_{\{\beta_i<0\}}|\beta_i|\|\mathbf{C}\|}}^{\mathclap{\text{penalty for }\beta<0}}\bigr)}
    _{\mathclap{\text{density estimation error}}}
  +\underbrace{\tfrac{m}{n^{\tfrac12-\varepsilon}}}_{\mathclap{\text{dimensionality impact}}}\bigr)
+\;O\bigl(n^{-1}\bigr)\Bigr)^{L-1}.
\end{split}
\end{equation*}
\end{theorem}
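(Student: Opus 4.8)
The plan is to adapt the layer-wise GSO-perturbation argument used for coVariance Neural Networks in \cite{cov}, replacing the covariance GSO by the density operator $\rho(\mathbf{C})$ and tracking the extra distortion that the matrix exponential injects. I would separate two error sources at the outset: the statistical error $\delta\mathbf{C}=\widehat{\mathbf{C}}_n-\mathbf{C}$ of the sample covariance, and the induced perturbation $\mathbf{E}=\rho(\widehat{\mathbf{C}}_n)-\rho(\mathbf{C})$ of the density GSO, so that $\widehat{\mathbf{P}}_N=\rho(\widehat{\mathbf{C}}_n)$ and $\mathbf{P}=\rho(\mathbf{C})$. The Error Bound for Covariance Density Matrix lemma already converts $\|\delta\mathbf{C}\|$ into a bound on $\|\mathbf{E}\|$, supplying the factor $\tfrac{1}{R}F(\beta_i,\mathbf{C},\delta\mathbf{C})\,(1+m\,e^{\mathbf{1}_{\{\beta_i<0\}}|\beta_i|\|\mathbf{C}\|})$, with the indicator term acting as the penalty for $\beta<0$, where the partition function no longer damps the growing spectrum.

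Next I would unroll the network recursion. Writing $\mathbf{x}_\ell,\hat{\mathbf{x}}_\ell$ for the layer-$\ell$ outputs under $\mathbf{P}$ and $\widehat{\mathbf{P}}_N$, the triangle inequality with the $1$-Lipschitz $\sigma$ and the normalization $|h^{(\ell)}_{fg}(\rho)|\le1$ gives the standard split into a propagated-error term $\|\hat{\mathbf{H}}_\ell(\hat{\mathbf{x}}_{\ell-1}-\mathbf{x}_{\ell-1})\|$ and a fresh filter-error term $\|(\hat{\mathbf{H}}_\ell-\mathbf{H}_\ell)\mathbf{x}_{\ell-1}\|$. Iterating over $L$ layers produces the $L(\cdots)^{L-1}$ envelope, with $\prod_{i=1}^F\alpha_i$ collecting the per-channel integral-Lipschitz amplification within each layer. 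The fresh filter-error term is where Theorem~\ref{thm:codensity_lipschitz} is used: the integral-Lipschitz constant $\alpha_i$ bounds the contribution of the eigenvalue shift of $\rho$, so that each filter difference is controlled by $\alpha_i\|\mathbf{E}\|$ plus an eigenvector-misalignment remainder.

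I would then substitute the lemma's bound on $\|\mathbf{E}\|$ and invoke the finite-sample concentration of the sample covariance from \cite{cov}: with probability at least $1-n^{-2\varepsilon}-2\kappa m/n$ one has $\|\delta\mathbf{C}\|$ of order $m\,n^{-(1/2-\varepsilon)}$ up to constants. This single concentration step yields both the stated probability and the separate dimensionality term $m/n^{1/2-\varepsilon}$ arising from the eigenvector-misalignment remainder, while the $\sqrt{m}$ factor enters when passing from the operator-norm to the signal-norm bound over the $m$-dimensional eigenbasis. Folding the remaining absolute constants (powers of $|\beta_i|$, eigengap and Lipschitz prefactors) into $Q$ and collecting everything of order $n^{-1}$ and smaller into $O(n^{-1})$ then assembles the claimed bound.

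The main obstacle, as in every GSO-perturbation proof, is the eigenvector part rather than the eigenvalue part. Since $\rho(\mathbf{C})=e^{-\beta\mathbf{C}}/Z$ is a nonlinear analytic function of $\mathbf{C}$, perturbing $\mathbf{C}$ rotates the eigenbasis of $\rho$, and the integral-Lipschitz condition only tames the eigenvalue response, not this rotation. I would control the rotation with a Davis--Kahan / eigengap estimate as in \cite{cov}; the novelty is that the exponential can amplify the misalignment when $\beta<0$, where $Z$ fails to normalize the growing spectrum, which is exactly the origin of the $F(\beta_i,\mathbf{C},\delta\mathbf{C})$ and $e^{\mathbf{1}_{\{\beta_i<0\}}|\beta_i|\|\mathbf{C}\|}$ penalties. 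Verifying that these factors stay finite --- via the Duhamel representation $e^{A+B}-e^{A}=\int_0^1 e^{s(A+B)}\,B\,e^{(1-s)A}\,ds$ underlying the lemma --- is the technically delicate step, and it is precisely why the bound must distinguish the two signs of $\beta$.
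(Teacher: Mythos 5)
Your proposal follows the same skeleton as the paper's proof: a first-order expansion of the filter polynomial in \(\mathbf{E}=\widehat{\mathbf{P}}-\mathbf{P}\), a split into an eigenvalue-shift term and an eigenvector-misalignment term, Lemma 1 (proved via Duhamel) feeding the density-estimation error, the finite-sample eigenvector concentration from \cite{cov}/\cite{hdp}/\cite{loukas} supplying the probability \(1-n^{-2\varepsilon}-2\kappa m/n\) together with the \(m/n^{1/2-\varepsilon}\) term, second-order remainders absorbed into \(O(n^{-1})\), and a final unrolling over \(L\) layers and \(F\) channels. However, there is a genuine gap in how you treat the eigenvector part, and it is precisely the step you single out as delicate. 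You assume the matrix exponential creates a new eigenbasis-rotation problem, to be tamed by Davis--Kahan applied to \(\rho\) together with Duhamel, and you locate the origin of the \(F(\beta_i,\mathbf{C},\delta\mathbf{C})\) and \(e^{\mathbf{1}_{\{\beta_i<0\}}|\beta_i|\|\mathbf{C}\|}\) penalties there. In fact \(\rho(\widehat{\mathbf{C}})=e^{-\beta\widehat{\mathbf{C}}}/\widehat{Z}\) is a spectral function of \(\widehat{\mathbf{C}}\), so it has \emph{exactly} the same eigenvectors as \(\widehat{\mathbf{C}}\): the exponential rotates nothing and amplifies no misalignment. This is what the paper's perturbation lemma for the density matrix exploits (``the density matrix shares the same eigenbasis as the covariance matrix''), and it is what allows the eigenvector concentration results of \cite{cov}/\cite{hdp}/\cite{loukas} to be reused verbatim with the \emph{covariance} eigengaps \(|\lambda_i-\lambda_j|\) in the denominator. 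If you instead ran an eigengap argument on the spectrum of \(\rho\) itself, the relevant gaps would be \(|e^{-\beta\lambda_i}-e^{-\beta\lambda_j}|/Z\), which collapse exponentially as \(\beta\) grows, and the bound would degrade exactly in the regime the theorem is meant to cover.

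Relatedly, you invert the roles of the two key ingredients. In the paper, Theorem 1's Lipschitz property (with respect to \(\lambda\), not \(\rho\)) is used \emph{inside the eigenvector term}: a telescoping identity reduces the relevant diagonal entries to \(h(\rho_i)-h(\rho_j)\), and the bound \(|h(\rho_i)-h(\rho_j)|\le\alpha|\lambda_i-\lambda_j|\) is what pairs with the misalignment estimate to keep that term of order \(\alpha m Q/n^{1/2-\varepsilon}\) — so, contrary to your claim, the integral-Lipschitz condition is exactly what tames the rotation contribution. Conversely, the \(F(\beta)\) factor and the indicator-exponential penalty enter \emph{only} through Lemma 1's bound on \(\|\mathbf{E}\|\), which controls the eigenvalue shifts \(|\delta\rho_i|\le\|\mathbf{E}\|\) via Weyl's inequality; they have nothing to do with eigenvector misalignment. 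Correcting these two attributions — shared eigenbasis for the eigenvector term, Weyl plus Lemma 1 for the eigenvalue term — turns your outline into the paper's proof; as written, the ``technically delicate step'' you plan to fight through is an unnecessary detour that would not yield the stated bound.
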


From the stability bound, the terms \(F(\beta, \textbf{C}, \delta \textbf{C})\) and \(\exp\bigl\{\mathbf{1}_{\{\beta<0\}} |\beta| \|\textbf{C}\|\bigr\}\) quantify penalties associated with negative \(\beta\) in the uncertainty in the covariance density estimate, amplifying the covariance uncertainty. These terms vanish for positive beta indicating the network is more stable for positive values of beta.
As expected, the uncertainty of the covariance density estimate is controlled by the Lipschitz constant $\alpha$ which, recall, is controlled completely by the filter coefficients and beta. Thus as beta tends to 0 the network becomes increasingly stable. However each filter in the filter bank of size $F$ can have a different $\beta_i$ thus including larger or negative values of $\beta$ will reduce the stability of the network. Also note, that even if there are large errors in the estimation of the true covariance matrix (indicated by $\delta \textbf{C}$) this can be offset by using smaller values of $\beta$, confirming the de-noising effect. The dimensionality impact term \(\frac{m}{n^{\frac{1}{2}-\varepsilon}}\) reflects the relationship between the sample size \(n\), the dimensionality \(m\), and the convergence rate.

\begin{takeaway}
We propose multi‐scale Covariance Density Neural Networks (CDNNs), show that $\beta$ controls the stability-discriminability tradeoff of the network, and formulate the Multiscale Von Neumann Entropy for Covariance Matrices.
\end{takeaway}

\section{Experimental Results}

\subsection{Covariance Matrix Construction}

\textbf{Financial data.} Each asset/feature constitutes a random variable (node), and each time step within a sliding window constitutes one observation. The sample covariance matrix is computed once over the full training set and captures cross-asset correlations.

\textbf{EEG data.} Each EEG channel is a random variable (node), and each time step in a trial constitutes one observation. The covariance matrix is computed once over all training subjects' trials timesteps, repeated for all participants and averaged over, producing a global cross-channel covariance that captures shared spatial patterns across individuals.

\subsection{Financial Forecasting}



Covariance matrices have been deemed useful in predicting financial asset behaviour, offering insights into asset interactions and enabling better risk management strategies. Building on this, covariance-based neural networks show significant promise, this promise however, is largely affected by the low signal to noise ratio (SNR) present in financial data and thus the estimates of it's underlying covariance structure.

The S\&P 500 dataset\citep{sp500} spans 10 years of multivariate time series data, including features like stock volatility and daily option volumes. The Exchange Rate dataset \citep{exchange} consists of daily exchange rates for different currencies across 7588 open days. Covering the period from January 2, 2020, to February 2, 2024, the US Stock dataset\citep{usstock} includes various variables across major stocks, indices, commodities, and cryptocurrencies, offering insights into pricing and trading activity.

\begin{table}[h]
  \centering
  \footnotesize
  \setlength{\tabcolsep}{3pt}
  \renewcommand{\arraystretch}{0.85}
  \caption{Mean Absolute Error (MAE) ± std dev over 3 seeds $\downarrow$ for horizons 1, 3, 5. \textbf{Best}; \underline{\emph{second best}}.}
  \label{tab:mae-results}
  \begin{adjustbox}{max width=0.6\columnwidth}
    \begin{tabular}{l l c c c}
      \toprule
      Data        & Mthd   & H1          & H3          & H5          \\
      \midrule
      \multirow{6}{*}{Exchange}
                  & VNN & 0.1336±0.0075 & 0.1677±0.0212 & 0.1786±0.0070 \\
                  & CDNN    & \underline{\emph{0.1102±0.0033}} & \underline{\emph{0.1231±0.0065}} & \underline{\emph{0.1359±0.0137}} \\
                  & GCN       & 0.2415±0.0109 & 0.2334±0.0034 & 0.2371±0.0177 \\
                  & GIN       & 0.1939±0.0089 & 0.1950±0.0031 & 0.2075±0.0151 \\
                  & GAT       & 0.1819±0.0245 & 0.1852±0.0239 & 0.2075±0.0228 \\
                  & Hybrid& \textbf{0.1057±0.0073} & \textbf{0.1171±0.0120} & \textbf{0.1257±0.0077} \\
      \midrule
      \multirow{6}{*}{S\&P 500}
                  & VNN & 0.6198±0.0184 & 0.6392±0.0016 & 0.6553±0.0095 \\
                  & CDNN    & 0.5848±0.0031 & \textbf{0.5983±0.0049} & \textbf{0.6090±0.0026} \\
                  & GCN       & 0.6031±0.0193 & 0.6399±0.0007 & 0.6719±0.0094 \\
                  & GIN       & 0.6682±0.0115 & 0.7314±0.0123 & 0.7103±0.0090 \\
                  & GAT       & 0.5930±0.0245 & 0.6309±0.0151 & 0.6503±0.0198 \\
                  & Hybrid & \textbf{0.5772±0.0038} & \underline{\emph{0.6000±0.0075}} & \underline{\emph{0.6111±0.0027}} \\
      \midrule
      \multirow{6}{*}{US Stock}
                  & VNN & 0.4244±0.0159 & 0.4822±0.0083 & 0.6056±0.0063 \\
                  & CDNN    & \textbf{0.3412±0.0179} & \textbf{0.4375±0.0129} & \textbf{0.5290±0.0185} \\
                  & GCN       & 0.5572±0.0366 & 0.5934±0.0259 & 0.6592±0.0615 \\
                  & GIN       & 0.6532±0.1248 & 0.6627±0.0955 & 0.7113±0.0907 \\
                  & GAT       & 0.4346±0.0503 & 0.5141±0.0542 & 0.5832±0.0275 \\
                  & Hybrid    & \underline{\emph{0.3542±0.0058}} & \underline{\emph{0.4506±0.0115}} & \underline{\emph{0.5547±0.0211}} \\
      \bottomrule
    \end{tabular}
  \end{adjustbox}

\end{table}
Table 1 compares traditional graph-based models (e.g., Graph Attention Networks (GAT), Graph Isomorpshism Network (GIN), Graph Convolutional Networks (GCN))) with covariance-based models (VNN, CDNN).
We also include a hybrid model that first applies the covariance density filter at multiple scales before applying an attention layer. 

We attribute the improved performance of CDNN over a standard VNN to the versatility of the multi-scale filter bank, capturing information at different scales. Over all datasets, either the CDNN or hybrid model performed the best. This indicates that the added covariance information is indeed informative and that enhancing attention based models with this information improves performance.

The robustness of covariance models likely stems from their ability to leverage stable cross-asset correlations driven by macroeconomic or sector factors. This inductive bias makes them inherently \textit{risk-aware}, ensuring better diversification and consideration of correlated risks. In addition, the noise-robustness of CDNNs shows improved performance over all datasets.




\subsection{Transferability of covariance density Networks in Brain Computer Interfaces }



Brain-Computer Interfaces (BCIs) have long faced challenges in transferability \citep{bci2,bci3,bci4}, i.e., ensuring a model trained on multiple individuals generalizes to unseen individuals' brain signals. The sample covariance matrix computed purely over training individuals offers a promising solution by capturing stable, global correlations common across individuals. Using this as a shift operator for unseen individuals allows individual differences to be "filtered" by these shared patterns \citep{FAST,Roy2023}, this can be best related to the neuro-scientific concept known as \textit{functional alignment} \citep{fa}.


Our model is bench-marked on two BCI tasks: the 4-class BCI-2A motor imagery dataset (9 subjects) and the PhysioNet 2-class dataset (105 subjects). For BCI-2A, we train on data from all but one held-out subject, rotating through each as the test individual. On PhysioNet, we evaluate across 10 non-overlapping folds, ensuring each fold’s subjects are unseen during training. In both cases, we compute a global covariance matrix from the training subjects and use it as a graph-shift operator to classify the held-out test data.

\begin{table}[h]
  \centering
  \small
  \setlength{\tabcolsep}{4pt}
  \renewcommand{\arraystretch}{0.9}
  \caption{Average Classification accuracies (\%) ± std dev over subjects/folds and training speed (s/epoch) on BCI 2A (4 Class). \textbf{Best}; \underline{\emph{Second best}}.}
  \label{tab:classification-results}
  \begin{adjustbox}{max width=\linewidth}
    \begin{tabular}{l c c c c c}
      \toprule
      Method   & 2A (2)   & 2A (4)    & PhysioNet & Average  & Speed \\
      \midrule
      GIN      & 51.1 ± 3.33     & 32.6 ± 2.78      & 78.9 ± 4.03     & 54.20& 0.91  \\
      GAT      & \underline{\emph{62.67 ± 9.36}} & 44.1 ± 6.77 & \textbf{81.0 ± 5.02} & \underline{\emph{64.10}}& 2.20  \\
      VNN      & 59.3 ± 4.86     & 30.1 ± 2.72      & 79.4 ± 3.01      & 56.30& 0.90\\
      CDNN     & \textbf{73.4 ± 8.53}          & \textbf{50.4 ± 7.13}           & \underline{\emph{80.02 ± 2.74}}      & \textbf{67.90}& 1.10\\
      EEGNet   & 61.4 ± 6.09     & \underline{\emph{45.2 ± 5.47}}      & 75.1 ± 0.29     & 60.60& 4.40\\
      CSP+LDA  & 56.1 ± 4.21    & 33.37 ± 8.99      & 52.08 ± 3.86      & 47.20& N/A  \\
      \bottomrule
    \end{tabular}
  \end{adjustbox}
\end{table}
Table \ref{tab:classification-results} presents our results, comparing CDNNs with a VNN, more traditional Graph Based approaches such as GIN and GAT, a CSP approach and the lightweight EEGNet benchmark. For each graph model, we take the convolved signals—one time series per channel—and concatenate every channel’s full sequence of time points into a single long feature vector in order to retain the high temporal resolution of EEG signals. That flattened vector is then fed into a simple fully-connected network (MLP), and we apply a Tanh nonlinearity followed by the classification layer.

The classical CSP + LDA approach performed the worst, EEGnet in fact underperformed on the PhysioNet dataset compared to simpler graph based models highlighting that, for \textit{subject-independent} BCIs, less complex models are preferable to avoid over-fitting. While GAT tops the PhysioNet leader board, CDNNs are the strongest model overall, outdoing VNN both in accuracy and efficiency (1.1 s/epoch vs. EEGNet’s 4.5 s), making them a suitable choice for rapid retraining and real-world BCI deployment.

The large gains on the BCI-2A 2-class and 4-class datasets are explained by the multi-scale filter bank's ability to capture patterns at different spectral scales of the cross-subject covariance matrix (Figure 3) while also being robust to noise at positive values of $\beta$ (Theorem 3).

\subsection{Computational cost.}

\begin{figure*}[t]
    \centering
    \includegraphics[width=0.9\textwidth]{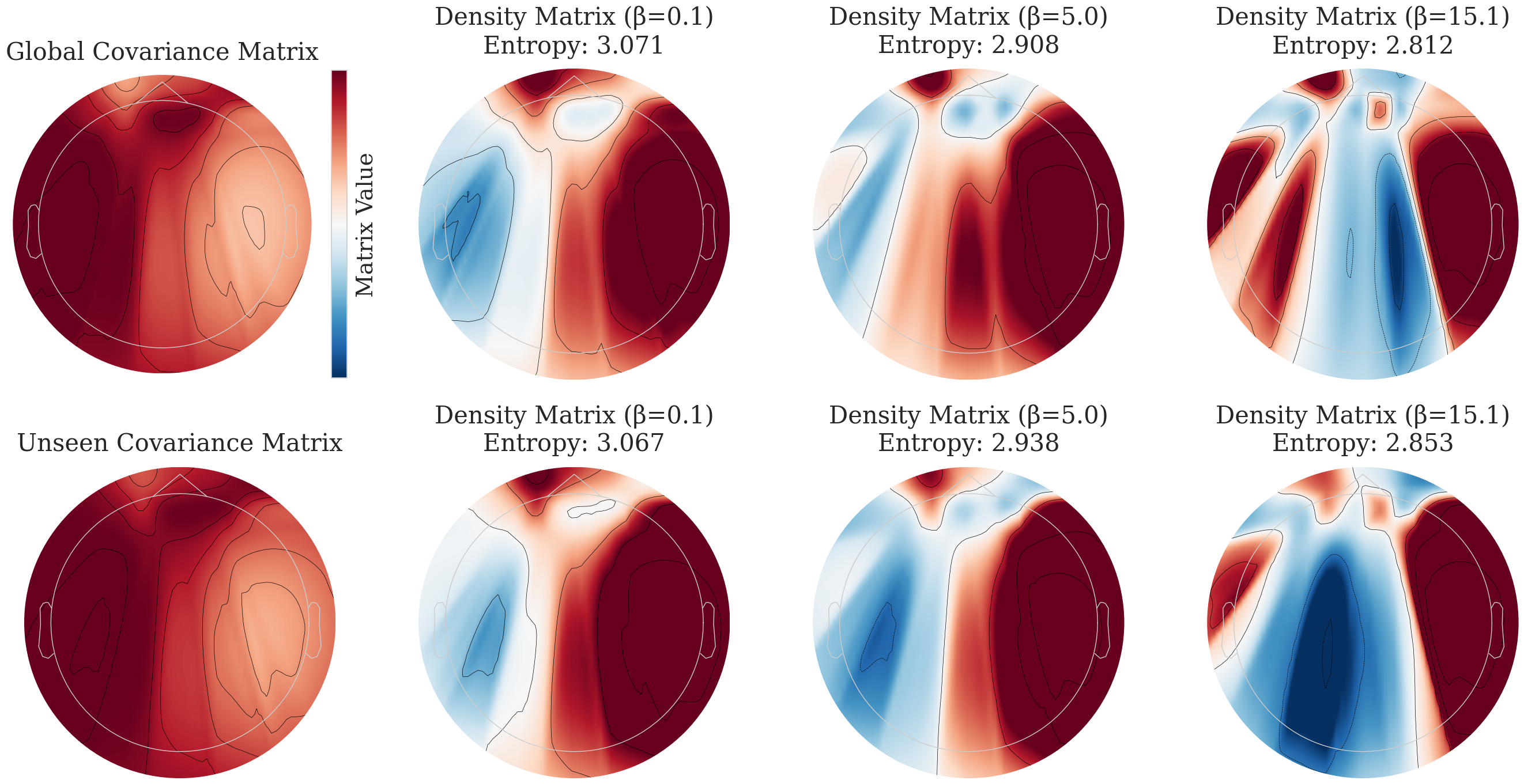}
    \caption{Contour Plot showing distribution of Covariance and Covariance Density Matrix on the Scalp at different scales. The top row indicates the averaged training covariance matrix (Excluding the Unseen Individual) and the bottom row indicates the unseen Individuals test covariance matrix. We can see that the training covariance matrix is relatively similar to the Unseen Individual and that different scales unveil further potential similarities thus increasing classification performance }
    \label{fig:neuripsplot}
\end{figure*}
The primary additional cost of CDNNs over VNNs is the matrix exponential 
$\exp(-\beta \mathbf{C}_n)$. Since the covariance matrix is fixed, this 
is computed \emph{once} via eigen-decomposition at $O(m^3)$ cost and cached. 
Subsequent forward and backward passes have identical per-step cost to VNNs: 
$O(Km^2)$ per order-$K$ filter, where $m$ is the number of channels/variables. 
Memory overhead is a single additional $m \times m$ matrix per $\beta$ scale. 
Empirically, with caching enabled, CDNN runtime matches VNN even for 
$m > 1000$ (see Appendix~A.5, Figure~7). Compared to non-graph baselines, 
CDNN trains at 1.1\,s/epoch on BCI-2A versus EEGNet's 4.4\,s/epoch---a 
$4\times$ speedup (Table~2).

\section{Conclusions and Limitations}

We introduce Covariance Density Neural Networks (CDNNs), which treat the covariance matrix as a Hamiltonian-like object to build a density matrix that acts as a graph shift operator in a Graph Neural Network. We showed that the sample Covariance Matrix can act as a spectral surrogate to the true graph Laplacian and thus constructed a novel density matrix, allowing us to use tools from information theory to define a new entropy measure for covariance matrices with notable benefits. We’ve also established rigorous stability bounds for the Covariance Density Filter and the network itself, showing how $\beta$ explicitly controls the stability of the network.

In real-world applications where covariance structures matter (e.g., financial or neurological networks), CDNNs show strong performance against traditional benchmarks and deliver significant gains over VNNs. In subject-independent motor imagery classification for Brain-Computer Interfaces (BCIs), they outperform EEGNet while being significantly faster. Although we relied on simple low-parameter architectures here, we believe that folding CDNNs into more complex neural network architectures can yield further benefits.

There are some clear limitations in the use of the Covariance matrix  as it only encodes second-order statistics, it misses higher-order or non-linear dependencies. Future work could integrate mutual-information matrices or Gaussian-process kernels into the density matrix formulation. Furthermore, time-varying or non-stationary signals (e.g.\ drift in EEG) need dynamic covariance models or online adaptation and exploration beyond static matrices should be explored and can be particularly beneficial in the study of Dynamic Functional Connectivity (DFC) in the brain. We have to also consider that the computational cost scales with the size of the covariance matrix and matrix exponentials thus may become harder to compute. Since we only need the product of the matrix-vector multiplication approaches such as Krylov Subspace Approximation methods may be useful.


\section{Impact Statement}

This work may enhance Brain-Computer Interfaces (BCIs) by enabling more reliable brain signal decoding for assistive technologies. However, ensuring secure, anonymized handling of neurological data and addressing fairness across diverse users remain important challenges to prevent biases in medical and neuro-technological applications.

\balance
\bibliographystyle{tmlr}
\bibliography{tmlr}


\appendix
\onecolumn
\section{Appendix}
\subsection{Hardware Specifications}
\label{sec:appendix-hardware}

All experiments were performed on an NVIDIA A100 Tensor Core GPU. The A100 features 6 912 CUDA cores, 3 456 FP64 CUDA cores, and 432 third-generation Tensor Cores, delivering up to 19.5 TFLOPS of FP32 throughput. It is equipped with 40 GB of HBM2e memory and offers a peak memory bandwidth of approximately 1.6 TB/s 

\subsection{Hyperparamter Selection and Model Details}

\begin{table}[ht]
\centering
\caption{Hyperparameters for Financial Datasets}
\resizebox{\linewidth}{!}{%
\begin{tabular}{|l|c|c|c|c|c|c|c|c|}
\hline
\textbf{Model} & \textbf{Learning Rate} & \textbf{Epochs} & \textbf{Batch Size}
  & \textbf{Hidden Dim} & \textbf{Num Layers}
  & \textbf{Activation} & \textbf{Dropout} & \textbf{Betas} \\ \hline

GAT  & 0.001 & 500 & 64  & 128 & 1 & ELU & 0.2 & N/A \\ \hline
VNN  & 0.001 & 500 & 64  & 128 & 1 & ELU & 0.2 & N/A \\ \hline
CDNN & 0.001 & 500 & 64  & 128 & 1 & ELU & 0.2
     & Learned 4 $\beta$ filterbank, init: [-0.01,0.01,0,0] \\ \hline
GIN  & 0.001 & 500 & 64 & 128 & 1 & ELU & 0.2 & N/A \\ \hline

\end{tabular}%
}
\end{table}

\footnote{All Graph Based models treats temporal data as node features on the underlying graph. For GAT and GIN the underlying graph a
fully-connected network. For CDNN and VNN the underlying graph is the sample covariance matrix and is computed once over the
complete training set and fixed during training.This matrix is then untouched and used for both training and testing. For VNN’s the matrix
is trace normalized for a fair comparison with CDNN’s. For the BCI 2A and Physionet datasets,after graph convolution, we  concatenate every channel’s full sequence of time points into a single long feature vector which is fed into a simple fully-connected network (MLP), and we apply a Tanh nonlinearity followed by the classification layer.This is done for all graph models. For CDNN in all Datasets
we ignore the first un-filtered term (k = 0) to reduce noise.

The financial datasets were z-score normalized and involve the same architecture except we use an ELU activation for all models . The
Adam \citep{adam} optimizer was used in all cases. For the financial data we use a 60/20/20 Train/Val/Test split and for all datasets we evaluate on
test data using the model with lowest validation loss during training.

 }
\begin{table}[H]
\centering
\caption{Hyperparameters for BCI 2A and PhysioNet Datasets}
\resizebox{\linewidth}{!}{%
\begin{tabular}{|l|c|c|c|c|c|c|c|c|}
\hline
\textbf{Model} & \textbf{Learning Rate} & \textbf{Epochs} & \textbf{Batch Size} & \textbf{Hidden Dim} & \textbf{Num Layers} & \textbf{Activation Functions} & \textbf{Dropout} & \textbf{Betas/Order} \\ \hline
EEGNet         & 0.001                 & 500             & as in \citep{eegnet}                 & as in \citep{eegnet}                 & as in \citep{eegnet}                   & as in \citep{eegnet}                  & as in \citep{eegnet}            & N/A                 \\ \hline
VNN            & 0.0001                 & 50              & 64                  & 128                 & 1                   & Tanh                     & 0.7              & N/A                 \\ \hline
         
CDNN           & 0.0001                 & 50              & 64                  & 128                 & 1                   & Tanh                & 0.7              & [0.1, 5.0, 15.1] \\ \hline
\end{tabular}%
}
\end{table}

\subsection{Data Availability}

 We used the TorchEEG Python package \citep{torcheeg} with the Mother of all BCI Benchmarks (MOABB)\citep{moabb} wrapper for the Motor Imagery data.
The BCI 2A data is also available freely at https://www.bbci.de/competition/iv/. The Physionet dataset is available at https://physionet.org/about/database/. The exchange rate dataset is from \citep{exchange} and was obtained from the Github repository for \citep{spatiotemp}.The S\&P 500 and US Stock market and commodities data are available freely from \citep{sp500} and \citep{usstock}. For the US Stock dataset only the price columns for each commodity was kept.

\subsubsection*{PhysioNet Dataset}
This dataset comprises EEG recordings from 109 healthy volunteers. In each trial, a visual target appears on either the left or right side of a display. The participant then mentally rehearses opening and closing the corresponding hand until the target disappears, and subsequently relaxes. Data were acquired using the BCI2000 system with 64 EEG channels, sampled at \SI{160}{Hz}, and each trial has a duration of \SI{3.1}{s}. Recordings from subjects 88, 89, 92, and 100 were excluded due to technical issues and excessive rest‐state activity, resulting in a final cohort of 105 participants, each completing approximately 45 trials. We ran experiments on a 10 fold split of all individuals, where the test fold participants are unseen during training. All experiments were performed on the same randomization seed for all models for reproducible results.Raw
EEG measurements without any preprocessing were used on this data set excpet what was already done in \citep{physionet}. 

\subsubsection*{BCI Competition IV 2a Dataset}
This dataset contains EEG data from nine healthy participants performing a four‐class motor imagery task (tongue, feet, right hand, left hand). Signals were recorded from 22 electrodes at \SI{250}{Hz}. Each participant completed two sessions (training and testing) on separate days, with each session comprising 288 trials of \SI{4}{s} each. Model evaluation employed leave‐one‐subject‐out cross‐validation: for each fold, all trials (both sessions) of one subject were held out as the test set, while the trials of the remaining eight subjects formed the training set.

The raw EEG recordings are first reduced to include only the channels of interest: all non‐EEG channels (such as MEG and stimulus lines) are removed, leaving only the EEG signals. Next, the remaining signals are scaled by a factor of $10^{6}$ to convert from volts to microvolts, ensuring numerical stability and consistency in subsequent processing. A zero‐phase Butterworth band-pass filter between \SI{0.01}{Hz} and \SI{38}{Hz} is then applied to eliminate slow drifts and high-frequency artifacts. Following filtering, each channel is standardized over time using an exponential moving estimate of its mean and variance, with an adaptation rate $\alpha=1\times10^{-3}$ and an initialization block of 1000 samples; this continuous normalization mitigates changes in signal amplitude and variance across the recording. Finally, fixed‐length epochs are extracted around each event marker by applying a trial start offset of \SI{-0.5}{s} before cue onset (with no post-cue offset), producing uniform windows ready for feature extraction and classification.

\subsubsection{Learnable $\beta$ Filter Bank}
\label{app:lbeta}
We evaluate a filter bank of three learnable \(\beta\) values and observe no degradation in accuracy compared to hand picking \(\beta\) values. We note that choosing values beforehand based on domain-knowledge can reduce training cost.

\begin{table}[ht]
  \centering
  \caption{Accuracy (\%) with learnable \(\beta\) filter bank}
  \label{tab:beta-filter-bank}
  \begin{tabular}{lc}
    \toprule
    Dataset    & Accuracy \\
    \midrule
    BCI~2A     & 49.48     \\
    PhysioNet  & 80.09     \\
    \midrule
    Avg        & 64.78     \\
    \bottomrule
  \end{tabular}
\end{table}

\FloatBarrier
\subsection{Ablation Analysis}

\subsubsection{EEG}

We perform our ablation study on the most challenging dataset, the BCI-2A MI dataset. We repeat the exact analysis over each independent participant using CDNNs with different values of~\(\beta\) and finally with a filter bank of \(\beta\)’s. We also compare this to the same model without the covariance-density transform (i.e.\ VNN). To correct for chance agreement in this multi-class task, we report Cohen’s kappa score.

\begin{figure}[H]
  \centering
  \includegraphics[width=0.6\textwidth]{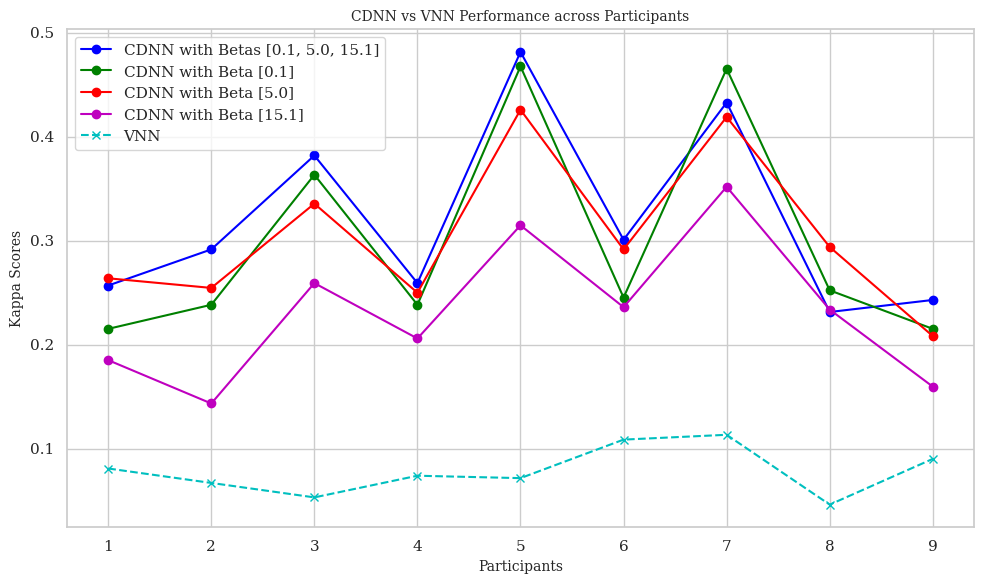}
  \caption{Ablation over different configurations for each subject.}
  \label{fig:ablation}
\end{figure}

\begin{figure}[H]
  \centering
  \includegraphics[width=0.6\textwidth]{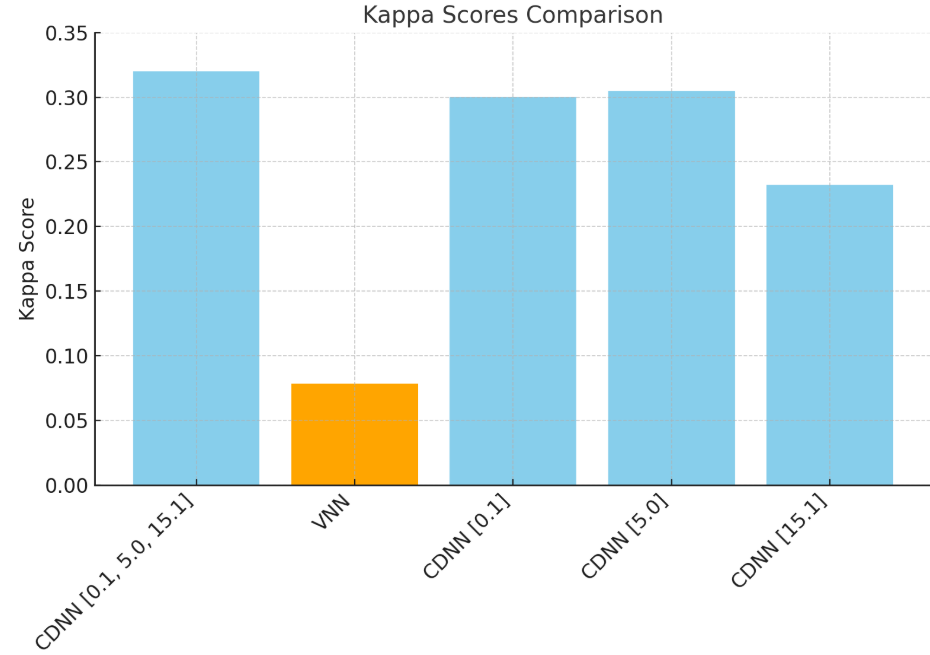}
  \caption{Mean Cohen’s kappa for each ablation configuration.}
  \label{fig:ablation-mean}
\end{figure}

We can clearly see that the covariance-density transform gives us a substantial improvement. Lower \(\beta\) values, in general, correspond to higher kappa (noise robustness), yet \(\beta=5\) outperforms \(\beta=0.1\) due to increased discriminability. Very large \(\beta\) remains noise‐sensitive. Filter banks inherit both benefits, yielding the best performance.

\subsubsection*{Financial Forecasting}

We perform an ablation study on the US Stock dataset (horizon $h=3$) to analyze the role of the diffusion parameter~$\beta$.

\begin{figure}[H]
  \centering
  \includegraphics[width=1\textwidth]{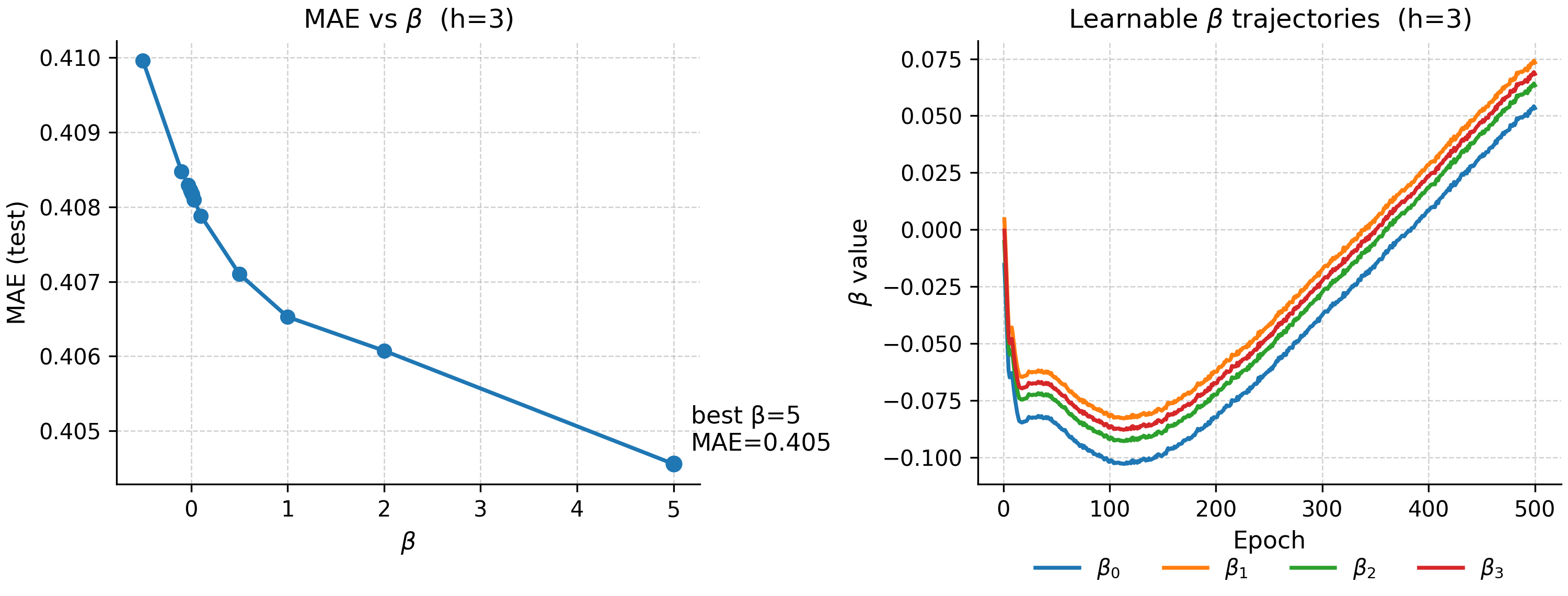}
  \caption{Ablation on the US Stock Exchange Dataset}
  \label{fig:ablationMAE}
\end{figure}

In the left panel of Fig.~\ref{fig:ablationMAE}, we sweep over fixed values of $\beta$ in a single--scale CDNN and report the mean absolute error (MAE) on the US Stock dataset. Performance steadily improves as $\beta$ increases, with the best score obtained at $\beta=5$. This indicates that stronger diffusion improves performance in this dataset.

In the right panel, we initialize a multi--scale CDNN with four distinct learnable $\beta$ values close to zero and plot their trajectories during training. Interestingly, the $\beta$ values initially drift slightly negative before steadily increasing towards small positive values ($\approx 0.05$–$0.075$ by epoch 500).

\subsection{Scalability Analysis}

\begin{figure}[H]
    \centering
    \includegraphics[width=0.7\textwidth]{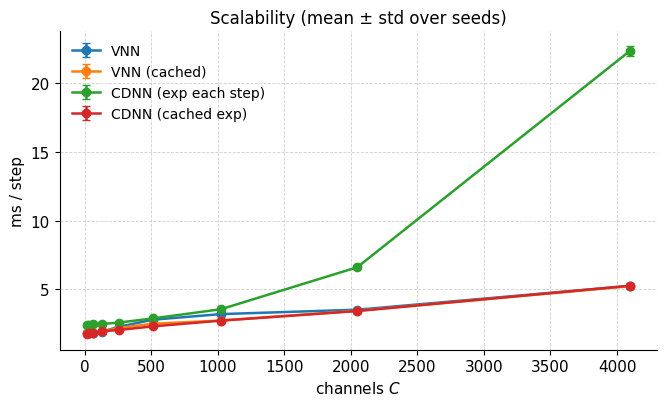}
    \caption{\textbf{Runtime scalability of CDNN vs.\ VNN.} 
    Mean~$\pm$~standard deviation of per-step runtime (ms) as a function of the number of channels~($C$). 
    CDNNs scale linearly up to about $C{=}1024$, after which the cost of repeatedly computing the matrix exponential $\exp(-\beta L)$ grows rapidly. 
    When caching the exponential, CDNN performance matches the near-linear scaling of VNNs, confirming that the additional cost can be effectively handled in practice.}
    \label{fig:scalability}
\end{figure}

We evaluated the computational scalability of CDNNs compared to standard VNNs by measuring the average per-step runtime (forward + backward) across varying numbers of channels~($C$). Four configurations were tested: (1)~VNN, (2)~VNN with cached Laplacian, (3)~CDNN computing $\exp(-\beta L)$ at each step, and (4)~CDNN using a precomputed exponential (\emph{cached exp}). Results are averaged over multiple seeds with standard deviation shown as error bars.

Overall, CDNNs maintain efficient scaling behaviour for moderate graph sizes, and caching $\exp(-\beta L)$ restores runtime similarity with VNNs even for large-scale settings ($C>1000$). Given that in all our experiments we pre-compute the covariance matrix exponential only once this shows that CDNN's are indeed fairly scalable.

\subsection{Relationship between Covariance and Laplacian Eigenbasis}
\label{app:cov-lap}

\begin{figure}[H]
    \centering
    \includegraphics[width=0.6\textwidth]{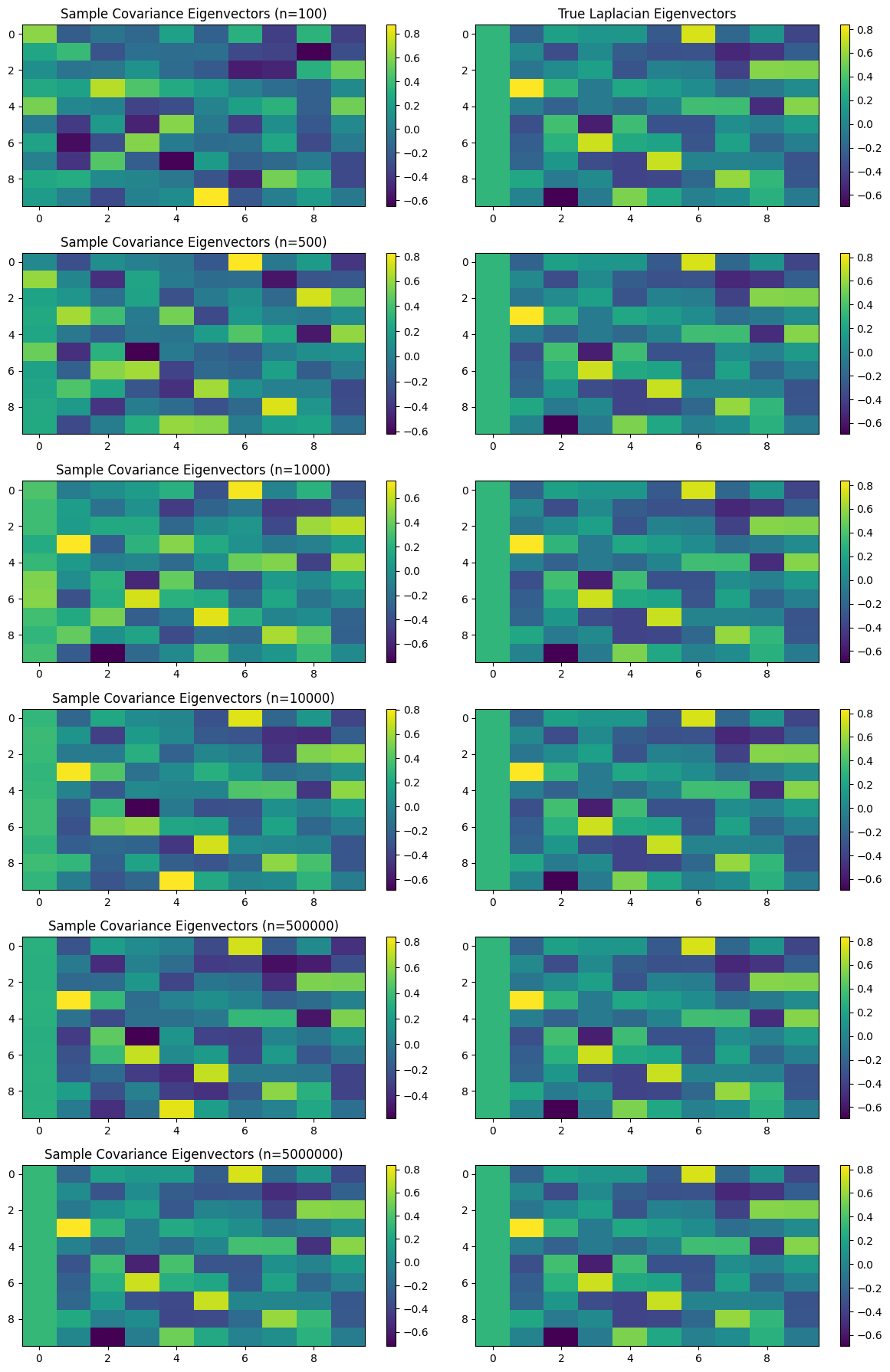} 
    \caption{Using signals generated from an Erdős-Rényi Laplacian convolution show convergence of the eigenvectors of the sample covariance matrix to those of the underlying graph Laplacian}
    \label{fig:empneurips}
\end{figure}

Let $\mathbf{L}\in\mathbb{R}^{N\times N}$ be a (combinatorial or normalized) graph Laplacian
with eigen-decomposition
\begin{equation}
  \mathbf{L}
  = \mathbf{U}\,\boldsymbol{\Lambda}\,\mathbf{U}^{\top},
  \qquad 
  \boldsymbol{\Lambda}
  = \operatorname{diag}\!\bigl(\lambda_{1},\dots,\lambda_{N}\bigr),
\end{equation}
where the columns of $\mathbf{U}$ are the Laplacian eigenvectors and
$0=\lambda_{1}\le\lambda_{2}\le\cdots\le\lambda_{N}$ are the Laplacian eigenvalues.
Throughout, $\mathbb{E}[\cdot]$ denotes statistical expectation and $\mathbf{I}$ is the identity matrix.

Following the graph–stationary framework of Dong \emph{et al.}\,\citep{X.Dong},
assume the graph signal $\mathbf{x}$ is produced by filtering white noise
$\mathbf{w}\sim\mathcal{N}(\mathbf{0},\mathbf{I})$ through an
\emph{order-$K$ polynomial graph filter}
\begin{equation}
  \mathbf{x}
  = \sum_{k=0}^{K} a_k\,\mathbf{L}^k\,\mathbf{w}
  = g(\mathbf{L})\,\mathbf{w},
  \qquad
  g(z)=\sum_{k=0}^{K} a_k\,z^k.
  \label{eq:poly-filter}
\end{equation}
Because $\mathbf{L}$ and $g(\mathbf{L})$ commute, they are simultaneously
diagonalizable by $\mathbf{U}$:
\begin{equation}
  \mathbf{C}
  = \mathbb{E}\bigl[\mathbf{x}\,\mathbf{x}^{\top}\bigr]
  = g(\mathbf{L})\,\mathbb{E}\bigl[\mathbf{w}\,\mathbf{w}^{\top}\bigr]\,
    g(\mathbf{L})^{\top}
  = g(\mathbf{L})\,g(\mathbf{L})
  = h(\mathbf{L}),
  \label{eq:C-commute}
\end{equation}
where $h(z)=g(z)^2$.  Equation~\eqref{eq:C-commute} implies the covariance
matrix $\mathbf{C}$ and Laplacian $\mathbf{L}$ share the same eigenvectors
$\mathbf{U}$; their eigenvalues are merely rescaled.\,\citep{X.Dong}

If the graph arises as the precision matrix of a GMRF,
$\mathbf{x}\sim\mathcal{N}\!\bigl(\mathbf{0},\mathbf{L}^{\dagger}\bigr)$,
then the covariance is the (Moore–Penrose) pseudo-inverse $\mathbf{L}^{\dagger}$.
Therefore
\begin{equation}
  \mathbf{L}^{\dagger}
  = \mathbf{U}\,\boldsymbol{\Lambda}^{\dagger}\,\mathbf{U}^{\top},
  \qquad
  \boldsymbol{\Lambda}^{\dagger}
  = \operatorname{diag}\!\bigl(0,\lambda_{2}^{-1},\dots,\lambda_{N}^{-1}\bigr),
\end{equation}
so the eigenbasis is \emph{exactly} preserved and the eigenvalues are reciprocals
(up to the zero mode).  Estimating $\mathbf{L}$ under Laplacian constraints
is therefore equivalent to learning a sparse inverse-covariance
matrix.\,\citep{E.Pavez}  Conversely, if one has a well-conditioned sample covariance,
its inverse supplies a valid Laplacian candidate.

More generally, any graph operator $\mathbf{S}$ that is a
\emph{spectral function} of $\mathbf{L}$, i.e.\ $\mathbf{S}=f(\mathbf{L})$
for some analytic $f$, commutes with $\mathbf{L}$ and
shares its eigenvectors:
\begin{equation}
  \mathbf{S}\mathbf{L}=\mathbf{L}\mathbf{S}\;\Longrightarrow\;
  \mathbf{S} = \mathbf{U}\,f(\boldsymbol{\Lambda})\,\mathbf{U}^{\top}.
\end{equation}
Hence, whenever the true covariance can be expressed (or approximated) as
$f(\mathbf{L})$, the sample covariance obtained from sufficiently many
realisations \emph{inherits} the Laplacian eigenbasis.  Results quantifying the
perturbation of eigenvectors under finite-sample noise
(e.g.\ matrix–Bernstein or Davis–Kahan bounds) then show that
$\widehat{\mathbf{C}}$ converges to $\mathbf{C}$ not only in Frobenius norm but
also in the subspace spanned by~$\mathbf{U}$\,\citep{Fontan}.

\begin{Proposition}[Shared eigenbasis of covariance and Laplacian]
  \label{Proposition:shared-eig}
  Let $\mathbf{L}$ be symmetric and $\mathbf{C}=g(\mathbf{L})$ for some
  function~$g$ with $g(\lambda_i)\!\ge\!0$.  Then $\mathbf{C}$ and $\mathbf{L}$
  are simultaneously diagonalisable by the same orthonormal matrix
  $\mathbf{U}$, and
  $\operatorname{rank}\mathbf{C}
     = |\{\,i\;|\;g(\lambda_i)>0\}|$.
  Moreover, if $g$ is monotone decreasing, the ordering of variances in
  $\mathbf{C}$ is the \emph{reverse} of the Laplacian frequency ordering.
\end{Proposition}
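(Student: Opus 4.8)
The plan is to reduce everything to the spectral theorem for the symmetric matrix $\mathbf{L}$ and then transport the structure through the functional calculus. First I would invoke the spectral theorem: since $\mathbf{L}$ is symmetric it admits an orthonormal eigenbasis, so I may write $\mathbf{L} = \mathbf{U}\boldsymbol{\Lambda}\mathbf{U}^{\top}$ with $\mathbf{U}^{\top}\mathbf{U} = \mathbf{I}$ and $\boldsymbol{\Lambda} = \operatorname{diag}(\lambda_1,\dots,\lambda_N)$. The central observation is that $\mathbf{U}$ diagonalizes every power of $\mathbf{L}$: because $\mathbf{U}^{\top}\mathbf{U}=\mathbf{I}$, we have $\mathbf{L}^{k} = \mathbf{U}\boldsymbol{\Lambda}^{k}\mathbf{U}^{\top}$, and hence $g(\mathbf{L}) = \mathbf{U}\,g(\boldsymbol{\Lambda})\,\mathbf{U}^{\top}$ where $g(\boldsymbol{\Lambda}) = \operatorname{diag}(g(\lambda_1),\dots,g(\lambda_N))$. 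For polynomial or analytic $g$ this is immediate from term-by-term expansion; for a general spectral function it is precisely the definition of the matrix functional calculus. This already establishes the first claim: $\mathbf{C} = g(\mathbf{L})$ is diagonalized by the same orthonormal $\mathbf{U}$, with eigenvalues $g(\lambda_i)$.

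For the rank claim, I would note that the rank of a diagonalizable matrix equals the number of nonzero eigenvalues counted with multiplicity. Since $\mathbf{C}$ is orthogonally diagonalized with eigenvalues $\{g(\lambda_i)\}_{i=1}^{N}$ and the hypothesis $g(\lambda_i)\ge 0$ forbids any cancellation, an eigenvalue is nonzero exactly when $g(\lambda_i) > 0$. Thus $\operatorname{rank}\mathbf{C} = |\{\,i : g(\lambda_i) > 0\,\}|$, where the index set automatically records multiplicities (repeated $\lambda_i$ produce repeated $g(\lambda_i)$). The monotonicity claim is the simplest: interpreting the variances as the spectral variances $\{g(\lambda_i)\}$ along the principal directions encoded by $\mathbf{U}$, a monotone decreasing $g$ sends the Laplacian ordering $\lambda_1 \le \cdots \le \lambda_N$ to $g(\lambda_1) \ge \cdots \ge g(\lambda_N)$, so the largest variance sits on the lowest-frequency (smallest-$\lambda$) eigenvector and the ordering is exactly reversed.

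The only genuine subtlety, and what I would treat most carefully, is the meaning of $g(\mathbf{L})$ when $g$ is not a polynomial. If the proposition is read at the level of the analytic or spectral functional calculus, then $g(\mathbf{L}) = \mathbf{U}\,g(\boldsymbol{\Lambda})\,\mathbf{U}^{\top}$ holds by definition and no convergence argument is needed; if instead $g$ is supplied as a power series, I would note that convergence of $\sum_k a_k \mathbf{L}^{k}$ on the spectrum of $\mathbf{L}$ is inherited from scalar convergence of $\sum_k a_k \lambda^{k}$ at each $\lambda_i$, so the diagonalization passes to the limit. A secondary point worth a sentence is to confirm that \emph{variances} here refers to the eigenvalues of $\mathbf{C}$ (its principal-component variances) rather than its diagonal entries, consistent with the PCA framing used throughout the paper; the diagonal-entry reading would not respect the eigenvector ordering and is not what the statement intends.
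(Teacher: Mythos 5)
Your proof is correct, but it takes a different route than the paper's. The paper argues via commutation: since $\mathbf{C}=g(\mathbf{L})$ is a (polynomial or analytic) function of $\mathbf{L}$, the two matrices commute, and it then invokes the standard lemma that commuting real symmetric matrices are simultaneously diagonalisable by an orthogonal matrix; the rank and ordering claims are then read off from the eigenvalues. You instead bypass the commutation lemma entirely and construct the shared diagonalization explicitly: $\mathbf{L}=\mathbf{U}\boldsymbol{\Lambda}\mathbf{U}^{\top}$ gives $g(\mathbf{L})=\mathbf{U}\,g(\boldsymbol{\Lambda})\,\mathbf{U}^{\top}$ by functional calculus, which simultaneously proves the shared eigenbasis \emph{and} identifies the eigenvalues of $\mathbf{C}$ as $g(\lambda_i)$. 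Your route is arguably tighter: the commutation argument only guarantees the \emph{existence} of some common orthonormal basis, and to pin down the eigenvalues of $\mathbf{C}$ as $g(\lambda_i)$ (which both the rank count and the ordering reversal require) the paper implicitly needs your functional-calculus identity anyway; you make that step explicit, and your handling of repeated eigenvalues and of the definition of $g(\mathbf{L})$ for non-polynomial $g$ closes small gaps the paper leaves informal. What the paper's approach buys in exchange is generality of viewpoint: commutation is the natural hypothesis in the graph-stationarity literature the appendix cites, where one may know $\mathbf{S}\mathbf{L}=\mathbf{L}\mathbf{S}$ without an explicit representation $\mathbf{S}=f(\mathbf{L})$. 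Two cosmetic remarks: your observation that $g(\lambda_i)\ge 0$ ``forbids cancellation'' is unnecessary for the rank claim (the rank of any diagonalizable matrix equals its number of nonzero eigenvalues regardless of sign; the non-negativity is only needed for $\mathbf{C}$ to be a valid covariance), and your reading of ``variances'' as the eigenvalues of $\mathbf{C}$ rather than its diagonal entries is indeed the intended one.
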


\begin{proof}
  Since $\mathbf{C}=g(\mathbf{L})$ is a polynomial (or analytic) function of
  $\mathbf{L}$, it commutes with $\mathbf{L}$.  Any two real symmetric matrices
  that commute are simultaneously diagonalisable by an orthogonal matrix; hence
  they share~$\mathbf{U}$.  The rank statement follows because
  $g(\lambda_i)=0$ if and only if the corresponding eigenvalue of
  $\mathbf{C}$ vanishes.  If $g$ is decreasing, then
  $\lambda_i<\lambda_j\implies g(\lambda_i)>g(\lambda_j)$, reversing the order.
\end{proof}

When the number of available graph-signal realisations $n$ is much larger than the graph order $N$, the sample covariance $\widehat{\mathbf{C}}_{n}$ itself becomes a \emph{spectral surrogate} for the Laplacian $\mathbf{L}$.  Hence, one can substitute its leading eigenvectors for those of $\mathbf{L}$ and thereby circumvent the combinatorial optimisation required by sparse‐Laplacian estimation procedures\,\citep{E.Pavez}.  Moreover, because graph filters implemented in the covariance domain remain diagonal in the shared eigenbasis $\mathbf{U}$, frequency responses may be designed interchangeably on either operator, allowing practitioners to work with whichever spectrum is numerically better conditioned or easier to estimate in a given application.  Finally, even when the inverse covariance $\mathbf{C}^{-1}$ fails to satisfy the strict degree-constraints of a Laplacian and is merely positive-semidefinite, its pseudo-inverse $\mathbf{L}^{\dagger}$ still admits a meaningful \emph{signed-Laplacian} interpretation that preserves the common eigenbasis while relaxing node-degree constraints.

Under broad conditions, stationary graph processes, GMRFs with
Laplacian precision, or any model where the covariance is a spectral
function of~$\mathbf{L}$ the eigenvectors of the sample covariance
\emph{converge} to those of the graph Laplacian.
Consequently, algorithms that exploit the covariance matrix inherit a graph-spectral interpretation, while graph filters
may be implemented directly in the covariance eigenbasis.


\medskip

\subsection*{Proof of Theorem 1}

We aim to prove that the density filter frequency response $h(\rho_i)$ satisfies the Lipschitz condition:
\[
|h(\rho(\lambda_2) - h(\rho(\lambda_2)| \leq \alpha {|\lambda_2 - \lambda_1|}
\]
for a constant $\alpha$ that depends on the filter parameters. The frequency response is defined as:
\[
h(\rho(\lambda_i) = \sum_{k=0}^K \frac{h_k e^{-\beta \lambda k}}{Z^k},
\]
where:
\[
Z = \sum_{i=1}^n e^{-\beta \lambda_i }.
\]
Our focus is to bound the difference $|h(\rho(\lambda_2)) - h(\rho(\lambda_1))|$ and derive a meaningful constant $\alpha$.

The difference between $h(\rho(\lambda_2))$ and $h(\rho(\lambda_1))$ is:
\[
|h(\rho(\lambda_2)) - h(\rho(\lambda_1))| = \left| \sum_{k=0}^K \left( \frac{h_k e^{-\beta \lambda_2 k}}{Z^k} - \frac{h_k e^{-\beta \lambda_1 k}}{Z^k} \right) \right|.
\]
Factoring out the common terms, this becomes:
\[
|h(\rho(\lambda_2)) - h(\rho(\lambda_1))| = \left| \sum_{k=0}^K \frac{h_k}{Z^k} \left( e^{-\beta \lambda_2 k} - e^{-\beta \lambda_1 k} \right) \right|.
\]

Using the Mean Value Theorem, we know that for each $k$, there exists a $\xi \in (\lambda_1, \lambda_2)$ such that:
\[
e^{-\beta \lambda_2 k} - e^{-\beta \lambda_1 k} = -\beta k (\lambda_2 - \lambda_1) e^{-\beta \xi k}.
\]
Substituting this into the expression, we have:
\[
|h(\rho(\lambda_2)) - h(\rho(\lambda_1))| = \left| \sum_{k=0}^K \frac{h_k}{Z^k} \left( -\beta k (\lambda_2 - \lambda_1) e^{-\beta \xi k} \right) \right|.
\]

Using the Triangle inequality:
\[
\left| \sum_{k=0}^K a_k \right| \leq \sum_{k=0}^K |a_k|,
\]
we can bound the summation:
\[
|h(\rho(\lambda_2)) - h(\rho(\lambda_1))| \leq \sum_{k=0}^K \left| \frac{h_k}{Z^k} \left( -\beta k (\lambda_2 - \lambda_1) e^{-\beta \xi k} \right) \right|.
\]

Now, factor out \( |\lambda_2 - \lambda_1| \):
\[
|h(\rho(\lambda_2)) - h(\rho(\lambda_1))| \leq |\lambda_2 - \lambda_1| \sum_{k=0}^K \left| \frac{h_k \beta k e^{-\beta \xi k}}{Z^k} \right|.
\]

By definition of the partition function, $Z^k$ is at least as large as $e^{-\beta \xi k}$, ensuring that $\frac{e^{-\beta \xi k}}{Z^k} \leq 1.$
 Using this bound, we simplify:
\[
\left| \frac{h_k \beta k e^{-\beta \xi k}}{Z^k} \right| \leq |h_k| |\beta k|.
\]

Substituting this into the inequality, we get:
\[
|h(\rho(\lambda_2)) - h(\rho(\lambda_1))| \leq |\lambda_2 - \lambda_1| \sum_{k=0}^K |h_k| |\beta k|.
\]

Define the constant $\alpha$ as:
\[
\alpha = \sum_{k=0}^K |h_k| |\beta k|.
\]
Thus, we have:

\[
|h(\rho(\lambda_2)) - h(\rho(\lambda_1))| \leq \alpha {|\lambda_2 - \lambda_1|}.
\]

Thus the covariance density filter is Lipchitz Continuous with respect to the denisty transformation, assuming constant filter coefficients.

While stability bounds can be obtained for relative perturbations using  Lipschitz filters more illuminating bounds can be obtained using integral Lipschitz filters. It can be shown that under certain conditions the covariance density filter satisfies being integral Lipschitz continuous.

The \textit{Integral Lipschitz condition} imposes the additional requirement:
\[
|h(\lambda_2) - h(\lambda_1)| \leq \theta \cdot \frac{|\lambda_2 - \lambda_1|}{\frac{|\lambda_1 + \lambda_2|}{2}}.
\]

To satisfy this condition, the constant \( \alpha \) must be bounded by:
\[
\alpha \leq \frac{1}{\sup \left( \frac{| \lambda_1 + \lambda_2 |}{2} \right)},
\]
where the supremum is taken over all pairs of eigenvalues \(\lambda_1\) and \(\lambda_2\).

I.e the following condition must hold:
\[
\sum_{k=0}^K |h_k| |\beta k| \leq \frac{1}{\sup \left( \frac{|\lambda_1 + \lambda_2|}{2} \right)}.
\]

If this condition is satisfied, the filter satisfies the Composite Integral Lipschitz condition with a constant of 1. Further observe The Integral Lipschitz condition can be generalized to allow any real positive constant \(\theta > 0\). In this case, the condition becomes:

\[
\alpha \leq \frac{\theta}{\sup \left( \frac{|\lambda_1 + \lambda_2|}{2} \right)}.
\]

This means that the filter satisfies the Integral Lipschitz condition with a constant \(\theta\) as long as $\alpha$ satisfies the above bound.

It follows that as long as \(\alpha\) is finite, the filter will satisfy the Integral Lipschitz condition for some choice of \(\\theta\). The constant \(\theta\) can be tuned by adjusting the parameters \(|h_k|\) (the filter coefficients) and \(\beta\).

\subsection*{Proof of Theorem 2}

We aim to prove that the filter
\[
\textbf{H}(\rho(\textbf{S})) = \sum_{k=0}^K h_k \frac{e^{-\beta \textbf{S} k}}{(\text{Tr}(e^{-\beta \textbf{S)}})^{k}}
\]
is permutation equivariant in the vertex domain. Specifically, for a permutation matrix \( \textbf{T} \), we want to show:
\[
\textbf{H}(\rho(\hat{\textbf{S}})) \hat{\textbf{x}} = \textbf{T}^T \textbf{H}(\rho(\textbf{S}))\textbf{x},
\]
where \( \hat{\textbf{S}} = \textbf{T}^T \textbf{S} \textbf{T} \) is the permuted graph shift operator and \( \hat{\textbf{x}} = \textbf{T}^T\textbf{ x} \) is the permuted input signal.

This demonstrates that applying a permutation to the graph and the graph signal results in a permutation of the filter output.

For the permuted graph, the shift operator is \( \hat{\textbf{S}} = \textbf{T}^T \textbf{S} \textbf{T} \). Substituting this into the filter definition, we have:
\[
\textbf{H}(\rho(\hat{\textbf{S}})) = \sum_{k=0}^K h_k \frac{e^{-\beta \hat{\textbf{S}} k}}{\text{(Tr}(e^{-\beta \hat{\textbf{S}}}))^k}.
\]

Using the Property of matrix exponentials under similarity transformations:
\[
e^{-\beta \hat{\textbf{S}} k} = e^{-\beta (\textbf{T}^T \textbf{S} \textbf{T}) k} =\textbf{T}^T e^{-\beta \textbf{S} k} \textbf{T}.
\]

The trace of a matrix is invariant under cyclic permutations:
\[
(\text{Tr}(e^{-\beta \hat{\textbf{S}}}))^k = (\text{Tr}(\textbf{T}^T e^{-\beta \textbf{S} } \textbf{T}))^k = (\text{Tr}(e^{-\beta \textbf{S} }))^k.
\]

Substituting these Properties into the filter definition:
\[
\textbf{H}(\rho(\hat{\textbf{S}})) = \sum_{k=0}^K h_k \frac{\textbf{T}^T e^{-\beta \textbf{S}k} \textbf{T}}{\text{(Tr}(e^{-\beta S }))^{k}}.
\]

Factor \( \textbf{T}^T \) and \( \textbf{T} \) outside the summation:
\[
H(\hat{\textbf{S}}) = \textbf{T}^T \left( \sum_{k=0}^K h_k \frac{e^{-\beta \textbf{S} k}}{\text{(Tr}(e^{-\beta \textbf{S} }))^{k}} \right) \textbf{T} = \textbf{T}^T \textbf{H}(\textbf{S}) \textbf{T}.
\]

Let the input signal \( \textbf{x} \) and its permuted version \( \hat{\textbf{x}} \) satisfy \( \hat{\textbf{x}} = \textbf{T}^T \textbf{x} \). Applying \( \textbf{H}(\hat{\textbf{S}}) \) to \( \hat{\textbf{x}} \):
\[
\hat{\textbf{z}} = \textbf{H}(\hat{\rho(\textbf{S}})) \hat{\textbf{x}} = \textbf{T}^T \textbf{H}(\rho(\textbf{S})) \textbf{T} \textbf{T}^T \textbf{x}.
\]

Using the orthogonality of \( \textbf{P} \), where \( \textbf{T} \textbf{T}^T = \textbf{I} \), this simplifies to:
\[
\hat{\textbf{z}} = \textbf{T}^T H(\textbf{S}) \textbf{x}.
\]

Thus, the filter output for the permuted graph and signal is the permuted version of the original filter output:
\[
\hat{\textbf{z}} = \textbf{T}^T \textbf{z},
\]
where \( \textbf{z} = \textbf{H}(\rho(\textbf{S}))\textbf{x} \).

\subsection*{Proof of Lemma 1}
\begin{proof}
We decompose the error as
\[
\mathbf{E}=\frac{e^{-\beta\hat{\mathbf{C}}}-e^{-\beta \mathbf{C}}}{Z'}+e^{-\beta \mathbf{C}}\Bigl(\frac{1}{Z'}-\frac{1}{Z}\Bigr).
\]
Taking the operator norm and applying the triangle inequality yields
\[
\|\mathbf{E}\|\le \frac{\|e^{-\beta\hat{\mathbf{C}}}-e^{-\beta \mathbf{C}}\|}{Z'}+\|e^{-\beta \mathbf{C}}\|\cdot\left|\frac{1}{Z'}-\frac{1}{Z}\right|.
\]

Next, after an application of Duhamel’s formula \citep{kato} we have
\[
e^{-\beta(\mathbf{C}+\delta \mathbf{C})}-e^{-\beta \mathbf{C}}=-\int_0^\beta e^{-s(\mathbf{C}+\delta \mathbf{C})}\,\delta \mathbf{C}\,e^{-(\beta-s)\mathbf{C}}\,ds.
\]
Changing the variable via \(s=\beta t\) (so that \(ds=\beta\,dt\), \(t\in[0,1]\)) yields
\[
e^{-\beta(\mathbf{C}+\delta \mathbf{C})}-e^{-\beta \mathbf{C}} = -\beta\int_0^1 e^{-\beta t(\mathbf{C}+\delta \mathbf{C})}\,\delta \mathbf{C}\,e^{-\beta(1-t)\mathbf{C}}\,dt.
\]
Taking the operator norm and using submultiplicativity gives
\[
\|e^{-\beta(\mathbf{C}+\delta \mathbf{C})}-e^{-\beta \mathbf{C}}\|\le |\beta|\,\|\delta \mathbf{C}\|\int_0^1 \|e^{-\beta t(\mathbf{C}+\delta \mathbf{C})}\|\cdot\|e^{-\beta(1-t)\mathbf{C}}\|\,dt.
\]
Define
\[
I:=\int_0^1 \|e^{-\beta t(\mathbf{C}+\delta \mathbf{C})}\|\|e^{-\beta(1-t)\mathbf{C}}\|\,dt.
\]
We now consider the two cases:

\medskip

\emph{(a) If \(\beta\geq0\):}  
Since \(\mathbf{C}\) and \(\mathbf{C}+\delta \mathbf{C}\) are positive semidefinite,
\[
\|e^{-\beta t(\mathbf{C}+\delta \mathbf{C})}\|\le 1,\quad \|e^{-\beta(1-t)\mathbf{C}}\|\le 1,\quad \forall\,t\in[0,1].
\]
Hence, 
\[
I\le \int_0^1 1\,dt = 1.
\]

\medskip

\emph{(b) If \(\beta<0\):}  
Write \(\beta=-|\beta|\). Then
\[
\|e^{-\beta t(\mathbf{C}+\delta \mathbf{C})}\|=\|e^{|\beta|t(\mathbf{C}+\delta \mathbf{C})}\|\le e^{|\beta|t\,\|\mathbf{C}+\delta \mathbf{C}\|},
\]
and
\[
\|e^{-\beta(1-t)\mathbf{C}}\|\le e^{|\beta|(1-t)\,\|\mathbf{C}\|}.
\]
Thus,
\[
I\le \int_0^1 e^{|\beta|t\,\|\mathbf{C}+\delta \mathbf{C}\|}\,e^{|\beta|(1-t)\,\|\mathbf{C}\|}\,dt.
\]
Since
\[
|\beta|t\,\|\mathbf{C}+\delta \mathbf{C}\|+|\beta|(1-t)\,\|\mathbf{C}\|=|\beta|\|\mathbf{C}\|+|\beta|t\Bigl(\|\mathbf{C}+\delta \mathbf{C}\|-\|\mathbf{C}\|\Bigr),
\]
we factor out the constant term:
\[
I\le e^{|\beta|\|\mathbf{C}\|}\int_0^1 \exp\!\Bigl\{|\beta|t\Bigl(\|\mathbf{C}+\delta \mathbf{C}\|-\|\mathbf{C}\|\Bigr)\Bigr\}\,dt.
\]
Setting
\[
a:=|\beta|\Bigl(\|\mathbf{C}+\delta \mathbf{C}\|-\|\mathbf{C}\|\Bigr),
\]
we have
\[
\int_0^1 e^{at}\,dt = \frac{e^a-1}{a}\,.
\]
Thus, for \(\beta<0\)
\[
I\le e^{|\beta|\|\mathbf{C}\|}\,\frac{e^{|\beta|(\|\mathbf{C}+\delta \mathbf{C}\|-\|\mathbf{C}\|)}-1}{\,|\beta|\Bigl(\|\mathbf{C}+\delta \mathbf{C}\|-\|\mathbf{C}\|\Bigr)}.
\]

We now define the unified factor
\[
F(\beta,\mathbf{C},\delta \mathbf{C})=
\begin{cases}
1, & \beta\geq0,\\[1mm]
e^{|\beta|\|\mathbf{C}\|}\,\dfrac{e^{|\beta|(\|\mathbf{C}+\delta \mathbf{C}\|-\|\mathbf{C}\|)}-1}{\,|\beta|\Bigl(\|\mathbf{C}+\delta \mathbf{C}\|-\|\mathbf{C}\|\Bigr)}, & \beta<0.
\end{cases}
\]

To find the limit as \(\beta \to 0\):

\[
\lim_{\beta \to 0} F(\beta, \mathbf{C}, \delta \mathbf{C})
\]

For the Right-Hand Limit(\(\beta \to 0^+\)):

\[
\lim_{\beta \to 0^+} F(\beta, \mathbf{C}, \delta \mathbf{C}) = 1
\]

For the Left-Hand Limit (\(\beta \to 0^-\)), Let \(\beta = -x\) where \(x \to 0^+\):

\[
F(-x, \mathbf{C}, \delta \mathbf{C}) = e^{x\|\mathbf{C}\|} \cdot \dfrac{e^{x(\|\mathbf{C}+\delta \mathbf{C}\| - \|\mathbf{C}\|)} - 1}{x(\|\mathbf{C}+\delta \mathbf{C}\| - \|\mathbf{C}\|)}
\]

Let \(a = \|\mathbf{C}+\delta \mathbf{C}\| - \|\mathbf{C}\|\):

\[
F(-x, \mathbf{C}, \delta \mathbf{C}) = e^{x\|\mathbf{C}\|} \cdot \dfrac{e^{x a} - 1}{x a}
\]

Taking the limit as \(x \to 0^+\) a standard result gives us:

\[
\lim_{x \to 0^+} e^{x\|\mathbf{C}\|} = 1
\]

\[
\lim_{x \to 0^+} \dfrac{e^{x a} - 1}{x a} = 1
\]

Therefore:

\[
\lim_{\beta \to 0^-} F(\beta, \mathbf{C}, \delta \mathbf{C}) = 1 \times 1 = 1
\]

Both the right-hand and left-hand limits as \(\beta \to 0\) are equal to 1. Hence,

\[
\lim_{\beta \to 0} F(\beta, \mathbf{C}, \delta \mathbf{C}) = 1
\]

Continuing,
\[
\|e^{-\beta(\mathbf{C}+\delta \mathbf{C})}-e^{-\beta \mathbf{C}}\|\le |\beta|\,\|\delta \mathbf{C}\|\,F(\beta,\mathbf{C},\delta \mathbf{C}).
\]
In particular, setting \(\hat{\mathbf{C}}=\mathbf{C}+\delta \mathbf{C}\),
\[
\|e^{-\beta\hat{\mathbf{C}}}-e^{-\beta \mathbf{C}}\|\le |\beta|\,\|\delta \mathbf{C}\|\,F(\beta,\mathbf{C},\delta \mathbf{C}).
\]
Hence, the first term in our error decomposition is bounded by
\[
\frac{\|e^{-\beta\hat{\mathbf{C}}}-e^{-\beta \mathbf{C}}\|}{Z'}\le \frac{|\beta|\,\|\delta \mathbf{C}\|\,F(\beta,\mathbf{C},\delta \mathbf{C})}{Z'}.
\]

\medskip

Next, we bound the partition function difference. We have
\[
\left|\frac{1}{Z'}-\frac{1}{Z}\right|=\frac{|Z-Z'|}{Z\,Z'}.
\]
Since
\[
Z'=\operatorname{Tr}\!\Bigl(e^{-\beta(\mathbf{C}+\delta \mathbf{C})}\Bigr)=Z+\operatorname{Tr}\!\Bigl(e^{-\beta(\mathbf{C}+\delta \mathbf{C})}-e^{-\beta \mathbf{C}}\Bigr),
\]
and applying \(\operatorname{Tr}(A)\le m\,\|A\|\) yields
\[
\left|\operatorname{Tr}\!\Bigl(e^{-\beta(\mathbf{C}+\delta \mathbf{C})}-e^{-\beta \mathbf{C}}\Bigr)\right|\le m\,\|e^{-\beta(\mathbf{C}+\delta \mathbf{C})}-e^{-\beta \mathbf{C}}\|\le m\,|\beta|\,\|\delta \mathbf{C}\|\,F(\beta,\mathbf{C},\delta \mathbf{C}).
\]
Thus,
\[
|Z-Z'|\le m\,|\beta|\,\|\delta \mathbf{C}\|\,F(\beta,\mathbf{C},\delta \mathbf{C}).
\]
We assume \(Z' = R\,Z\) with \(R\ge 1\) and \(Z\ge 1\) (this holds in high dimensional settings where the covariance matrix is not of full rank and in any case this condition can by regularizing the covariance matrix such that its smallest eigenvalue is $0$), i.e the perturbed partition function is some multiple of the true partition function and thus it follows that
\[
\left|\frac{1}{Z'}-\frac{1}{Z}\right|\le \frac{m\,|\beta|\,\|\delta \mathbf{C}\|\,F(\beta,\mathbf{C},\delta \mathbf{C})}{Z\,Z'}\le \frac{m\,|\beta|\,\|\delta \mathbf{C}\|\,F(\beta,\mathbf{C},\delta \mathbf{C})}{R}\,.
\]

\medskip

Thus, combining the two parts gives
\[
\|\mathbf{E}\|\le \frac{|\beta|\,\|\delta \mathbf{C}\|\,F(\beta,\mathbf{C},\delta \mathbf{C})}{Z'}+\|e^{-\beta \mathbf{C}}\|\cdot\frac{m\,|\beta|\,\|\delta \mathbf{C}\|\,F(\beta,\mathbf{C},\delta \mathbf{C})}{R}\,.
\]
Since \(Z'= R\,Z\) and \(Z\ge 1\) implies \(1/Z'\le 1/R\), we obtain
\[
\|\mathbf{E}\|\le \frac{|\beta|\,\|\delta \mathbf{C}\|\,F(\beta,\mathbf{C},\delta \mathbf{C})}{R}\Bigl(1+m\,\|e^{-\beta \mathbf{C}}\|\Bigr).
\]
Thus $\forall\beta$ 
\[
m\,\|e^{-\beta \mathbf{C}}\| \le m\,\exp\Bigl\{\mathbf{1}_{\{\beta<0\}}\,|\beta|\,\|\mathbf{C}\|\Bigr\}.
\]
Finally, the error bound is given as
\[
\|\mathbf{E}\|\le \frac{|\beta|\,\|\delta \mathbf{C}\|\,F(\beta,\mathbf{C},\delta \mathbf{C})}{R}\Bigl(1+m\,\exp\Bigl\{\mathbf{1}_{\{\beta<0\}}\,|\beta|\,\|\mathbf{C}\|\Bigr\}\Bigr).
\]

\end{proof}

\begin{lemma}[Perturbation Theory for the Density Matrix]
Consider an ensemble density matrix \( \mathbf{P} \)  and a sample density matrix \( \hat{\mathbf{P}} \). For any eigenvalue \( \rho_i > 0 \) of \( \mathbf{P} \), the perturbation \( \mathbf{E} \) satisfies:
\[
\mathbf{E}\,\mathbf{v}_i = \gamma_i \,\delta \mathbf{v}_i + \delta \rho_i \,\mathbf{v}_i + (\delta \rho_i \mathbf{I}_m - \mathbf{E})\,\delta \mathbf{v}_i,
\]
where
\[
\gamma_i = (\rho_i \mathbf{I}_m - \mathbf{P}), \quad \delta \mathbf{v}_i = \mathbf{u}_i - \mathbf{v}_i, \quad \delta \rho_i = w_i - \rho_i.
\]
\end{lemma}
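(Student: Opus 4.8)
The plan is to treat this as an \emph{exact} algebraic identity obtained by combining the unperturbed and perturbed eigenvalue relations, not as a truncated asymptotic expansion. First I would fix notation consistent with the statement and with Lemma~1: write the density–matrix perturbation as $\mathbf{E} = \hat{\mathbf{P}} - \mathbf{P}$, so that $\hat{\mathbf{P}} = \mathbf{P} + \mathbf{E}$, and record the two eigen-relations $\mathbf{P}\mathbf{v}_i = \rho_i \mathbf{v}_i$ for the ensemble density matrix and $\hat{\mathbf{P}}\mathbf{u}_i = w_i \mathbf{u}_i$ for the sample density matrix. Substituting the stated definitions $\mathbf{u}_i = \mathbf{v}_i + \delta\mathbf{v}_i$ and $w_i = \rho_i + \delta\rho_i$ converts the perturbed relation into
\[
(\mathbf{P}+\mathbf{E})(\mathbf{v}_i+\delta\mathbf{v}_i) = (\rho_i+\delta\rho_i)(\mathbf{v}_i+\delta\mathbf{v}_i).
\]

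Next I would expand both sides and cancel the leading terms. The left side reads $\mathbf{P}\mathbf{v}_i + \mathbf{P}\,\delta\mathbf{v}_i + \mathbf{E}\mathbf{v}_i + \mathbf{E}\,\delta\mathbf{v}_i$ and the right side reads $\rho_i\mathbf{v}_i + \rho_i\,\delta\mathbf{v}_i + \delta\rho_i\,\mathbf{v}_i + \delta\rho_i\,\delta\mathbf{v}_i$; the terms $\mathbf{P}\mathbf{v}_i$ and $\rho_i\mathbf{v}_i$ cancel by the unperturbed relation. Isolating $\mathbf{E}\mathbf{v}_i$ then gives
\[
\mathbf{E}\mathbf{v}_i = \rho_i\,\delta\mathbf{v}_i - \mathbf{P}\,\delta\mathbf{v}_i + \delta\rho_i\,\mathbf{v}_i + \delta\rho_i\,\delta\mathbf{v}_i - \mathbf{E}\,\delta\mathbf{v}_i.
\]
The final step is purely a regrouping: the first two terms combine into the resolvent-like operator $(\rho_i\mathbf{I}_m - \mathbf{P})\,\delta\mathbf{v}_i = \gamma_i\,\delta\mathbf{v}_i$, the term $\delta\rho_i\,\mathbf{v}_i$ is kept as is, and the two remaining cross terms fold into $(\delta\rho_i\mathbf{I}_m - \mathbf{E})\,\delta\mathbf{v}_i$, yielding exactly the claimed expression.

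I do not expect a genuine analytic obstacle here, since the derivation is a one-line manipulation of eigen-equations; the care required is entirely in the bookkeeping. The key point to emphasize is that \emph{nothing is discarded}: the quadratic contributions $\delta\rho_i\,\delta\mathbf{v}_i$ and $\mathbf{E}\,\delta\mathbf{v}_i$ are retained exactly inside the operator $(\delta\rho_i\mathbf{I}_m - \mathbf{E})$, so the identity is valid without any small-perturbation approximation. The only structural subtlety is that the derivation presumes a well-defined pairing of each perturbed eigenpair $(\mathbf{u}_i, w_i)$ with the corresponding unperturbed pair $(\mathbf{v}_i,\rho_i)$; the hypothesis $\rho_i>0$ secures that the mode is nondegenerate and that $\gamma_i = \rho_i\mathbf{I}_m - \mathbf{P}$ is invertible on the orthogonal complement of $\mathbf{v}_i$, which is what makes this identity useful for the subsequent eigenvector-perturbation bound.
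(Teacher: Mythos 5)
Your proposal is correct and follows essentially the same route as the paper's proof: both start from the perturbed eigen-relation $(\mathbf{P}+\mathbf{E})(\mathbf{v}_i+\delta\mathbf{v}_i)=(\rho_i+\delta\rho_i)(\mathbf{v}_i+\delta\mathbf{v}_i)$, cancel the unperturbed relation $\mathbf{P}\mathbf{v}_i=\rho_i\mathbf{v}_i$, and regroup the remaining terms into $\gamma_i\,\delta\mathbf{v}_i+\delta\rho_i\,\mathbf{v}_i+(\delta\rho_i\mathbf{I}_m-\mathbf{E})\,\delta\mathbf{v}_i$ as an exact identity with no truncation. Your closing remark on the pairing of perturbed and unperturbed eigenpairs is a useful observation the paper leaves implicit, but it does not change the argument.
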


\begin{proof}
Note this proof follows analogously to the standard result in the perturbations of eigenvalues of the covariance matrix from \citep{cov}. As the density matrix shares the same eigenbasis as the covariance matrix the proof is straightforward with the only differences being in the transformed eigenvalues.

From the definition of eigenvectors and eigenvalues, we have
\[
\hat{\mathbf{P}}\,\mathbf{u}_i = w_i \,\mathbf{u}_i.
\]

We rewrite this in terms of perturbations with respect to the ensemble density matrix \( \mathbf{P} \) and the outputs of its eigendecomposition as:
\[
(\hat{\mathbf{P}} - \mathbf{P})(\mathbf{v}_i + \delta \mathbf{v}_i) + \mathbf{P}(\mathbf{v}_i + \delta \mathbf{v}_i) = (\rho_i + \delta \rho_i)(\mathbf{v}_i + \delta \mathbf{v}_i),
\]
where \( w_i = \rho_i + \delta \rho_i \) and \( \mathbf{u}_i = \mathbf{v}_i + \delta \mathbf{v}_i \).

Using the fact that \( \mathbf{P}\,\mathbf{v}_i = \rho_i \,\mathbf{v}_i \) and rearranging terms, we have:
\[
(\hat{\mathbf{P}} - \mathbf{P})\mathbf{v}_i = (\rho_i \mathbf{I}_m - \mathbf{P})\delta \mathbf{v}_i + \delta \rho_i(\mathbf{v}_i + \delta \mathbf{v}_i) - (\hat{\mathbf{P}} - \mathbf{P})\delta \mathbf{v}_i.
\]

By setting \( \mathbf{E} = \hat{\mathbf{P}} - \mathbf{P} \) and \( \gamma_i = \rho_i \mathbf{I}_m - \mathbf{P} \), this simplifies to:
\[
\mathbf{E}\,\mathbf{v}_i = \gamma_i \,\delta \mathbf{v}_i + \delta \rho_i \,\mathbf{v}_i + (\delta \rho_i \mathbf{I}_m - \mathbf{E})\,\delta \mathbf{v}_i.
\]
\end{proof}
\subsection*{Proof of Theorem 3}

We note that the density filters with respect to \( \hat{\mathbf{P}} \) and \( \mathbf{P} \) are given by
\[
\textbf{H}(\hat{\mathbf{P}}) = \sum_{k=0}^m h_k \hat{\mathbf{P}}^k \quad \text{and} \quad \textbf{H}(\mathbf{P}) = \sum_{k=0}^m h_k \mathbf{P}^k. \tag{1}
\]
We aim to study the stability of the density filters by analyzing the difference between \( \textbf{H}(\hat{\mathbf{P}}) \) and \( \textbf{H}(\mathbf{P}) \). For this purpose, we next establish the first-order approximation for \( \hat{\mathbf{P}}^k \) in terms of \( \mathbf{P} \) and \( \mathbf{E} \). Using \( \hat{\mathbf{P}} = \mathbf{P} + \mathbf{E} \), the first-order approximation of \( \hat{\mathbf{P}}^k \) is given by
\[
(\mathbf{P} + \mathbf{E})^k = \mathbf{P}^k + \sum_{r=0}^{k-1} \mathbf{P}^r \mathbf{E}\, \mathbf{P}^{k-r-1} + \tilde{\mathbf{E}}, \tag{2}
\]
where \( \tilde{\mathbf{E}} \) satisfies \( \|\tilde{\mathbf{E}}\| \leq \sum_{r=2}^k \binom{k}{r} \|\mathbf{E}\|^r \|\mathbf{P}\|^{k-r} \). Using (2), we have
\[
\textbf{H}(\hat{\mathbf{P}}) - \textbf{H}(\mathbf{P}) = \sum_{k=0}^m h_k \left[(\mathbf{P} + \mathbf{E})^k - \mathbf{P}^k \right], \tag{3}
\]
\[
= \sum_{k=0}^m h_k \sum_{r=0}^{k-1} \mathbf{P}^r \mathbf{E}\, \mathbf{P}^{k-r-1} + \tilde{\mathbf{D}}, \tag{4}
\]
where \( \tilde{\mathbf{D}} \) satisfies \( \|\tilde{\mathbf{D}}\|_2 = O(\|\mathbf{E}\|_2) \). The focus of our subsequent analysis will be the first term in (4). For a random data sample \( \mathbf{x} = [x_1, \dots, x_m]^T \), such that \( \|\mathbf{x}\| < R \), for some \( R > 0 \) and \( \mathbf{x} \in \mathbb{R}^{m \times 1} \), its VFT with respect to \( \mathbf{P} \) is given by \( \tilde{\mathbf{x}} = \mathbf{V}^T \mathbf{x} \), where \( \tilde{\mathbf{x}} = [\tilde{x}_1, \dots, \tilde{x}_m]^T \). The relationship between \( \tilde{\mathbf{x}} \) and \( \mathbf{x} \) can be expressed as
\[
\mathbf{x} = \sum_{i=1}^m \tilde{x}_i \mathbf{v}_i. \tag{5}
\]
Multiplying both sides of (4) by \( \mathbf{x} \) and leveraging (5), we get
\[
[\textbf{H}(\hat{\mathbf{P}}) - \textbf{H}(\mathbf{P})]\mathbf{x} = \sum_{k=0}^m h_k \sum_{r=0}^{k-1} \mathbf{P}^r \mathbf{E}\, \mathbf{P}^{k-r-1} \mathbf{x} + \mathbf{D}\,\tilde{\mathbf{x}}, \tag{6}
\]
\[
= \sum_{i=1}^m \tilde{x}_i \sum_{k=0}^m h_k \sum_{r=0}^{k-1} \mathbf{P}^r \rho_i^{k-r-1} \mathbf{E}\, \mathbf{v}_i + \mathbf{D}\,\tilde{\mathbf{x}}, \tag{7}
\]
\[
= \sum_{i=1}^m \tilde{x}_i \sum_{k=0}^m h_k \sum_{r=0}^{k-1} \mathbf{P}^r \rho_i^{k-r-1} \mathbf{\gamma}_i \,\delta \mathbf{v}_i
+ \sum_{i=1}^m \tilde{x}_i \sum_{k=0}^m h_k \sum_{r=0}^{k-1} \mathbf{P}^r \rho_i^{k-r-1} \delta \rho_i \,\mathbf{v}_i
+ \sum_{i=1}^m \tilde{x}_i \sum_{k=0}^m h_k \sum_{r=0}^{k-1} \mathbf{P}^r \rho_i^{k-r-1} (\delta \rho_i \mathbf{I}_m - \mathbf{E})\,\delta \mathbf{v}_i. \tag{8}
\]

Where we leveraged the result from Lemma 1 that expands \( \mathbf{E}\,\mathbf{v}_i \).

\medskip
\noindent
\textbf{Term 1:}

This follows very similarly from \citep{cov}.
Using \( \mathbf{\gamma}_i = \rho_i \mathbf{I}_m - \mathbf{P} \) and \( \delta \mathbf{v}_i = \mathbf{u}_i - \mathbf{v}_i \) in the first term in (39), we have
\[
\sum_{i=1}^m \tilde{x}_i \sum_{k=0}^m h_k \sum_{r=0}^{k-1} \mathbf{P}^r \rho_i^{k-r-1} (\rho_i \mathbf{I}_m - \mathbf{P})(\mathbf{u}_i - \mathbf{v}_i). \tag{9}
\]

Finally, using \( \mathbf{P}^r = \mathbf{V} \mathbf{\Lambda}^r \mathbf{V}^T \) in (9), the first term in (8) is equivalent to:
\[
\sum_{i=1}^m \tilde{x}_i \sum_{k=0}^m h_k \sum_{r=0}^{k-1} \rho_i^{k-r-1} \mathbf{V} \mathbf{\Lambda}^r (\rho_i \mathbf{I}_m - \mathbf{\Lambda}) \mathbf{V}^T (\mathbf{u}_i - \mathbf{v}_i), \tag{10}
\]
\[
= \sum_{i=1}^m \tilde{x}_i \,\mathbf{V} \mathbf{L}_i \mathbf{V}^T (\mathbf{u}_i - \mathbf{v}_i), \tag{11}
\]
where \( \mathbf{L}_i \) is a diagonal matrix whose \( j \)-th element is given by:
\[
[\mathbf{L}_i]_j = \sum_{k=0}^m h_k \sum_{r=0}^{k-1} (\rho_i - \rho_j) \rho_i^{k-r-1} \rho_j^r, \tag{12}
\]
\[
= \sum_{k=0}^m h_k (\rho_i - \rho_j) \frac{\rho_i^k - \rho_j^k}{\rho_i - \rho_j}, \tag{13}
\]
\[
= \sum_{k=0}^m h_k \rho_i^k - \sum_{k=0}^m h_k \rho_j^k, \tag{14}
\]
\[
= h(\rho_i) - h(\rho_j). \tag{15}
\]
After applying the operator norm:
\[
\left\| \sum_{i=1}^m \tilde{x}_i \sum_{k=0}^m h_k \sum_{r=0}^{k-1} \mathbf{P}^r \rho_i^{k-r-1} \mathbf{\gamma}_i \,\delta \mathbf{v}_i \right\| 
\leq \sqrt{m}\sum_{i=1}^m |\tilde{x}_i| \max_{j, i \neq j} |h(\rho_i) - h(\rho_j)| \|\mathbf{v}_j^T \mathbf{u}_i\|.
\]
Here, \( \mathbf{v}_j^T \mathbf{u}_i \) is the inner product between the eigenvector \( \mathbf{v}_j \) of the ensemble density matrix \( \mathbf{P} \) and the eigenvector \( \mathbf{u}_i \) of the sample density matrix \( \hat{\mathbf{P}} \). 

Using the result from [\citep{hdp}, Theorem 4.1 and \citep{loukas}], we conclude that if 
\[
\operatorname{sgn}(\lambda_j - \lambda_i)^2 w_j > \operatorname{sgn}(\lambda_j - \lambda_i)(\lambda_j - \lambda_i),
\]
for \( \lambda_i \neq \lambda_j \), the following condition holds:
\[
\left\| \sum_{i=1}^m \tilde{x}_i \sum_{k=0}^m h_k \sum_{r=0}^{k-1} \mathbf{P}^r \rho_i^{k-r-1} \mathbf{\gamma}_i \,\delta \mathbf{v}_i \right\| 
\leq \sqrt{m}\sum_{i=1}^m |\tilde{x}_i| \max_{j, i \neq j} \frac{|h(\rho_i) - h(\rho_j)|}{|\lambda_i - \lambda_j|} \frac{2 k_i}{N^{1/2-\varepsilon}},
\]
with probability at least
\[
1 - \frac{1}{N^{2\varepsilon}},
\]
for some \( \varepsilon \in (0, 1/2] \), where
\[
k_i = \sqrt{\mathbb{E}\left[\|\mathbf{X}\mathbf{X}^T \mathbf{v}_i\|_2^2\right] - \rho_i^2}.
\]

From Theorem 1 we know that
\[
|h(\rho_2) - h(\rho_1)| \leq \alpha |\lambda_2 - \lambda_1|, \quad \alpha = \sum_{k=0}^K |h_k| |\beta k|,
\]

Note now that we can always choose  \( \alpha \) (i.e by adjusting \(\beta\)) such that:
\[
|h(\rho_i) - h(\rho_j)| \leq \frac{\alpha}{k_i} |\lambda_i - \lambda_j|.
\]
Substituting this bound into the above condition and applying a union bound, we have:
\[
\left\| \sum_{i=1}^m \tilde{x}_i \sum_{k=0}^m h_k \sum_{r=0}^{k-1} \mathbf{P}^r \rho_i^{k-r-1} \mathbf{\gamma}_i \,\delta \mathbf{v}_i \right\| 
\leq \frac{2 \sum_{k=0}^K |h_k| |\beta k|}{N^{1/2-\varepsilon}} \sum_{i=1}^m |\tilde{x}_i| ,
\]
with probability at least
\[
1 - \frac{1}{N^{2\varepsilon}} - \frac{2\kappa m}{N},
\]
where \( \kappa \) is as defined in [\citep{hdp}, Corollary 4.2 and \citep{cov}].

Furthermore, note that \( \sqrt{m}\sum_{i=1}^m |\tilde{x}_i| \leq m \|\mathbf{x}\|_2 \). If the random sample \( \mathbf{x} \) satisfies \( \|\mathbf{x}\|_2 \leq Q \), then
\[
\mathbb{P} \left( 
\left\| \sum_{i=1}^m \tilde{x}_i \sum_{k=0}^m h_k \sum_{r=0}^{k-1} \mathbf{P}^r \rho_i^{k-r-1} \mathbf{\gamma}_i \,\delta \mathbf{v}_i \right\| 
\leq \frac{2 \sum_{k=0}^K |h_k| |\beta k| m Q}{N^{1/2-\varepsilon}}
\right) 
\geq 1 - \frac{1}{N^{2\varepsilon}} - \frac{2\kappa m}{N}.
\]

\medskip
\noindent
\textbf{Term 2:}

We start with:
\[
\sum_{i=1}^m \tilde{x}_i \sum_{k=0}^m h_k \sum_{r=0}^{k-1} \mathbf{P}^r \rho_i^{k-r-1} \delta \rho_i \,\mathbf{v}_i 
= \sum_{i=1}^m \tilde{x}_i \sum_{k=0}^m h_k k \rho_i^{k-1} \delta \rho_i \,\mathbf{v}_i.
\]

Since \( \rho_i^{k-1} \leq 1 \) for all \( \rho_i \) (as eigenvalues of \( \mathbf{P} \) lie within the unit range of the covariance density matrix), we simplify:
\[
\sum_{i=1}^m \tilde{x}_i \sum_{k=0}^m h_k k \rho_i^{k-1} \delta \rho_i \,\mathbf{v}_i 
\leq \sum_{i=1}^m \tilde{x}_i \sum_{k=0}^m |h_k| k \delta \rho_i \,\mathbf{v}_i. 
\]

Using the triangle inequality, we relate \( \sum_{k=0}^m |h_k| |k| \) to the constant \( \alpha \) from Theorem 1, where:
\[
\sum_{k=0}^m |h_k| |k| \leq \frac{\alpha}{|\beta|}.
\]

Substituting this bound, we get:
\[
\sum_{i=1}^m \tilde{x}_i \sum_{k=0}^m h_k k \delta \rho_i \,\mathbf{v}_i 
\leq \frac{\alpha}{|\beta|} \sum_{i=1}^m |\tilde{x}_i| |\delta \rho_i| \|\mathbf{v}_i\|.
\]

Since \( \|\mathbf{v}_i\| = 1 \), this simplifies further to:
\[
\sum_{i=1}^m \tilde{x}_i \sum_{k=0}^m h_k k \delta \rho_i \,\mathbf{v}_i 
\leq \frac{\alpha}{|\beta|} \sum_{i=1}^m |\tilde{x}_i| |\delta \rho_i|. 
\]

From Lemma 2, we have the bound

\[
\|\mathbf{E}\|\le \frac{|\beta|\,\|\delta \mathbf{C}\|\,F(\beta,\mathbf{C},\delta \mathbf{C})}{R}\Bigl(1+m\,\exp\Bigl\{\mathbf{1}_{\{\beta<0\}}\,|\beta|\,\|\mathbf{C}\|\Bigr\}\Bigr).
\]
By Weyl’s theorem, \( |\delta \rho_i| \leq \|\mathbf{E}\| \) \citep{weyl}, so:
\[
|\delta \rho_i|\le \frac{|\beta|\,\|\delta \mathbf{C}\|\,F(\beta,\mathbf{C},\delta \mathbf{C})}{R}\Bigl(1+m\,\exp\Bigl\{\mathbf{1}_{\{\beta<0\}}\,|\beta|\,\|\mathbf{C}\|\Bigr\}\Bigr).
\]

Substituting this bound into our inequality:
\[
\sum_{i=1}^m \tilde{x}_i \sum_{k=0}^m h_k k \delta \rho_i \,\mathbf{v}_i 
\leq \frac{\alpha}{|\beta|} \sum_{i=1}^m |\tilde{x}_i| \frac{|\beta|\,\|\delta \mathbf{C}\|\,F(\beta,\mathbf{C},\delta \mathbf{C})}{R}\Bigl(1+m\,\exp\Bigl\{\mathbf{1}_{\{\beta<0\}}\,|\beta|\,\|\mathbf{C}\|\Bigr\}\Bigr).
\]

Canceling the \(|\beta|\) terms, we obtain
\begin{equation*}
\sum_{i=1}^m \tilde{x}_i \sum_{k=0}^m h_k\, k\, \delta \rho_i\, \mathbf{v}_i 
\leq \frac{\alpha\|\delta \mathbf{C}\|\,F(\beta,\mathbf{C},\delta \mathbf{C})}{R}\left(1+m\,\exp\Bigl\{\mathbf{1}_{\{\beta<0\}}\,|\beta|\,\|\mathbf{C}\|\Bigr\}\right)
\sum_{i=1}^m |\tilde{x}_i|.
\end{equation*}

Finally, noting that \( \sum_{i=1}^m |\tilde{x}_i| \leq \sqrt{m} \|\mathbf{x}\|_2 \) and if \( \|\mathbf{x}\|_2 \leq Q \), we have:
\begin{equation*}
\sum_{i=1}^m \tilde{x}_i \sum_{k=0}^m h_k\, k\, \delta \rho_i\, \mathbf{v}_i 
\leq \frac{\alpha\|\delta \mathbf{C}\|\,F(\beta,\mathbf{C},\delta \mathbf{C})}{R}\left(1+m\,\exp\Bigl\{\mathbf{1}_{\{\beta<0\}}\,|\beta|\,\|\mathbf{C}\|\Bigr\}\right)
\sqrt{m}Q.
\end{equation*}

\medskip
\noindent
\textbf{Term 3:}
With the same argument regarding the invariance to shifts in eigenbasis it follows from [3] that:

\begin{align*}
\left\| \delta \rho_i \mathbf{I}_m - \mathbf{E} \right\| &\leq 2 \left\| \mathbf{E} \right\|, \\
\left\| \delta \mathbf{v}_i \right\| &= \mathcal{O}\left( \frac{1}{\sqrt{N}} \right) \quad \text{with high probability}.
\end{align*}

Furthermore using  the fact that for a random instance \(\mathbf{x}\) of random vector \(\mathbf{X}\) whose probability distribution is supported within a bounded region w.l.o.g, such that \(||\mathbf{x}||\leq 1\), for some constant \(B>0\) and \(u> 0\), we have

\begin{align*}
\mathbb{P}\left(
    \left\| \delta \mathbf{C} \right\|
    \leq 
    B\,\|\mathbf{C}\|\sqrt{\frac{\log m + u}{N}} 
    + 
    \bigl(1 + \|\mathbf{C}\|\bigr)\,\frac{\log m + u}{N}
\right)
\geq 
1 - 2^{-u},
\end{align*}

We can expand out Lemma 1 as:

\begin{align*}
\|\mathbf{E}\|
&\leq
\frac{|\beta|}{R}\;\|\delta \mathbf{C}\|\;F(\beta, \mathbf{C}, \delta \mathbf{C})\;
\!\biggl(
  1 + m \exp\bigl[
    \,1_{\{\beta < 0\}}\;\lvert\beta\rvert\;\|\mathbf{C}\|
  \bigr]
\biggr)
\\[6pt]
&\leq
\frac{|\beta|}{R}
\biggl[
  B\,\|\mathbf{C}\|\sqrt{\frac{\log m + u}{N}}
  + 
  \bigl(1 + \|\mathbf{C}\|\bigr)\frac{\log m + u}{N}
\biggr]
F(\beta, \mathbf{C}, \delta \mathbf{C})\;
R\!\biggl(
  1 + m \exp\bigl[
    n\,1_{\{\beta < 0\}}\;\lvert\beta\rvert\;\|\mathbf{C}\|
  \bigr]
\biggr)
\\[6pt]
&=
\frac{|\beta|}{R}\,B\,\|\mathbf{C}\|\sqrt{\frac{\log m + u}{N}}
F(\beta, \mathbf{C}, \delta \mathbf{C})\;
\!\biggl(
  1 + m \exp\bigl[
    \,1_{\{\beta < 0\}}\;\lvert\beta\rvert\;\|\mathbf{C}\|
  \bigr]
\biggr)
\\[4pt]
&\quad+
\frac{|\beta|}{R}\,\bigl(1 + \|\mathbf{C}\|\bigr)\frac{\log m + u}{N}
F(\beta, \mathbf{C}, \delta \mathbf{C})\;
\!\biggl(
  1 + m \exp\bigl[
    \,1_{\{\beta < 0\}}\;\lvert\beta\rvert\;\|\mathbf{C}\|
  \bigr]
\biggr)
\,.
\end{align*}

Thus:

\[
\left\| \mathbf{E} \right\| = \mathcal{O}\left( \frac{1}{\sqrt{N}} \right) \text{ with high probability}.
\]

Which in turns implies:

\[
\left( \delta \rho_i \mathbf{I}_m - \mathbf{E} \right) \delta \mathbf{v}_i = \mathcal{O}\left( \frac{1}{N} \right),
\]

Note that for positive \(\beta\) this always holds and for negative \(\beta\) since from Lemma 1 \(F(\beta, \mathbf{C}, \delta \mathbf{C})\) tends to \(1\) as \(\beta\) tends to \(0\) we can always pick small \(\beta\) to ignore the \(F\) term.  
Thus term 3 diminishes faster with \(N\) as compared to terms 1 and terms 2 and thus terms 1 and 2 dominate the scaling behaviour of the overall upper bound.

\medskip

The overall proof is completed by noting that the condition on 
\(\|[H(\hat{\mathbf{P}}) - H(\mathbf{P})]\mathbf{x}\|\) simplifies to the condition on the operator norm 
\(\|[H(\hat{\mathbf{P}}) - H(\mathbf{P})]\|\) for any \(\|\mathbf{x}\|\leq 1\). 

By unrolling this bound through \(L\) layers, and noting that each layer can at most 
amplify the perturbation by a factor of \(F\), we obtain the overall network stability 
bound. The factor \(L F^{L-1}\) appears naturally from composing \(L\) layers each 
with at most \(F\)-fold channel combination where for each \(i\in F\) there are potentially different \(\alpha_i\) and \(\beta_i\). The filter-level bound then carries through all layers.

Since the probability bounds and constants are inherited from the filter-level analysis, 
the final network-level bound follows directly, concluding the proof.

\subsection{Sub-Additivity for Multi-Scale Von Neumann Entropy for Covariance Matrices}
\label{app:vne-proof}
For this definition of entropy for covariance matrices to hold we would like to achieve the desirable sub-additivity Property of a valid entropy measure. The following theorem shows that this Property does indeed hold.

\begin{theorem}
Let \(\mathbf{C}^1,\dots,\mathbf{C}^n \in \mathbb{R}^{N\times N}\) be Individual Covariance Matrices. Define
\[
\boldsymbol{\Sigma}_n \;=\;\sum_{j=1}^n \mathbf{C}^j.
\]
For each matrix \(\mathbf{X}\), let
\[
\mathbf{\rho}_{\mathbf{X}} \;=\; \frac{\,e^{-\beta \mathbf{X}}}{\mathrm{Tr}[\,e^{-\beta \mathbf{X}}\bigr]}, 
\quad
Z_{\mathbf{X}} \;=\;\mathrm{Tr}[\,e^{-\beta \mathbf{X}}\bigr], 
\quad
S(\mathbf{X}) \;=\; \beta\,\mathrm{Tr}[\mathbf{X}\,\mathbf{\rho}_{\mathbf{X}}] \;+\;\ln\!\bigl(Z_{\mathbf{X}}\bigr).
\]
Then
\[
S\Bigl(\boldsymbol{\Sigma}_n=\sum_{j=1}^n \mathbf{C}^j\Bigr) 
\;\;\le\;\;
\sum_{j=1}^n S\bigl(\mathbf{C}^j\bigr).
\]

I.e.\ the Von-Neumann Entropy for Covariance Matrices satisfies the sub-additivity Property.
\end{theorem}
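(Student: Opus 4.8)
The plan is to first recognize that $S(\mathbf{X})$ is nothing but the von Neumann entropy of the Gibbs state $\rho_{\mathbf{X}}$. Since $\log\rho_{\mathbf{X}} = -\beta\mathbf{X} - (\ln Z_{\mathbf{X}})\mathbf{I}$ and $\mathrm{Tr}[\rho_{\mathbf{X}}]=1$, one has $-\mathrm{Tr}[\rho_{\mathbf{X}}\log\rho_{\mathbf{X}}] = \beta\,\mathrm{Tr}[\mathbf{X}\rho_{\mathbf{X}}] + \ln Z_{\mathbf{X}} = S(\mathbf{X})$. I would record this reformulation first, because it recasts the claim as subadditivity of the Gibbs-state entropy under summation of the generating quasi-Hamiltonians and thereby unlocks the standard information-theoretic toolbox.

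Second, I would reduce general $n$ to the two-matrix case by induction. Writing $\boldsymbol{\Sigma}_n = \boldsymbol{\Sigma}_{n-1} + \mathbf{C}^n$, a two-term bound $S(\mathbf{A}+\mathbf{B}) \le S(\mathbf{A}) + S(\mathbf{B})$ yields $S(\boldsymbol{\Sigma}_n) \le S(\boldsymbol{\Sigma}_{n-1}) + S(\mathbf{C}^n)$, and the induction hypothesis $S(\boldsymbol{\Sigma}_{n-1}) \le \sum_{j=1}^{n-1} S(\mathbf{C}^j)$ closes the chain. Thus the whole argument rests on the $n=2$ inequality.

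Third, for two matrices I would lean on the Gibbs variational principle, equivalently the non-negativity of quantum relative entropy (Klein's inequality \cite{entropy}): for any density $\sigma$ and Hermitian $\mathbf{Y}$, $S(\sigma) \le \beta\,\mathrm{Tr}[\mathbf{Y}\sigma] + \ln Z_{\mathbf{Y}}$, with equality iff $\sigma = \rho_{\mathbf{Y}}$. Evaluating $S(\mathbf{A}+\mathbf{B}) = S(\rho_{A+B})$ exactly and splitting $\mathrm{Tr}[(\mathbf{A}+\mathbf{B})\rho_{A+B}] = \mathrm{Tr}[\mathbf{A}\rho_{A+B}] + \mathrm{Tr}[\mathbf{B}\rho_{A+B}]$, the target reduces to comparing the partition-function terms and the internal-energy terms against their single-matrix counterparts $S(\mathbf{A}) = \beta\,\mathrm{Tr}[\mathbf{A}\rho_A] + \ln Z_A$ (and likewise for $\mathbf{B}$). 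For the partition-function piece I would invoke Golden--Thompson, $\mathrm{Tr}[e^{-\beta(\mathbf{A}+\mathbf{B})}] \le \mathrm{Tr}[e^{-\beta\mathbf{A}}e^{-\beta\mathbf{B}}]$, to control $\ln Z_{A+B}$ relative to $\ln Z_A$ and $\ln Z_B$ despite the non-commutativity of the exponentials.

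The hard part will be the internal-energy cross-terms, namely bounding $\beta\bigl(\mathrm{Tr}[\mathbf{A}\rho_{A+B}] + \mathrm{Tr}[\mathbf{B}\rho_{A+B}] - \mathrm{Tr}[\mathbf{A}\rho_A] - \mathrm{Tr}[\mathbf{B}\rho_B]\bigr)$. This quantity has no definite sign in general, since adding a second quasi-Hamiltonian can redistribute spectral weight either toward or away from the low-energy modes of $\mathbf{A}$; consequently a naive term-by-term estimate (e.g.\ merely summing $D(\rho_{A+B}\|\rho_A)\ge 0$ and $D(\rho_{A+B}\|\rho_B)\ge 0$) is too lossy and leaves an uncontrolled factor. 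I therefore expect the crux to be a careful joint treatment of these terms, exploiting the positive semidefiniteness of the $\mathbf{C}^j$ together with the sign of $\beta$ (and, where the matrices fail to commute, the Golden--Thompson/Lieb machinery), as it is precisely the interplay between the spectra and the inverse temperature that governs whether the inequality can be closed.
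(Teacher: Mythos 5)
Your proposal stops exactly where a proof is required. You correctly recast $S(\mathbf{X})$ as the von Neumann entropy of the Gibbs state $\rho_{\mathbf{X}}$, you use the same induction-to-$n=2$ reduction as the paper, and you name the same core tool the paper uses (the Gibbs variational principle, i.e.\ non-negativity of $D(\sigma\|\rho_{\mathbf{Y}})$); but you then declare the residual cross-terms ``the crux'' and leave them unresolved, gesturing at Golden--Thompson and ``Lieb machinery'' without carrying out any estimate. That is a genuine gap: the two-matrix inequality is never established. For comparison, the paper's own proof is precisely the ``naive'' route you dismiss: after shifting each matrix by its least eigenvalue (which leaves $S$ invariant), it sums $D(\rho_{\mathbf{A}+\mathbf{B}}\|\rho_{\mathbf{A}})\ge 0$ and $D(\rho_{\mathbf{A}+\mathbf{B}}\|\rho_{\mathbf{B}})\ge 0$, which yields $S(\mathbf{A}+\mathbf{B})\le \ln Z_{\mathbf{A}}+\ln Z_{\mathbf{B}}-\ln Z_{\mathbf{A}+\mathbf{B}}$, and then absorbs the deficit using the three non-negativity claims $\beta\,\mathrm{Tr}[\mathbf{A}\rho_{\mathbf{A}}]\ge 0$, $\beta\,\mathrm{Tr}[\mathbf{B}\rho_{\mathbf{B}}]\ge 0$ (positive semidefiniteness, $\beta>0$) and $\ln Z_{\mathbf{A}+\mathbf{B}}\ge 0$, the last justified by asserting that the regularized sum also has a zero eigenvalue.

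The useful news is that your suspicion that this summation is ``too lossy'' is well founded: the paper's patch is invalid, and in fact the statement is false without further hypotheses. Shifting $\mathbf{A}$ and $\mathbf{B}$ separately does not give $\mathbf{A}+\mathbf{B}$ a zero eigenvalue unless they share a null vector; by Weyl, $\lambda_{\min}(\mathbf{A}+\mathbf{B})\ge\lambda_{\min}(\mathbf{A})+\lambda_{\min}(\mathbf{B})$, typically strictly. Concretely, take $\beta=1$, $\mathbf{A}=\mathrm{diag}(0,a)$, $\mathbf{B}=\mathrm{diag}(a,0)$ (these commute, so Golden--Thompson is an equality and cannot rescue anything). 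Then $\mathbf{A}+\mathbf{B}=a\mathbf{I}$, so $\rho_{\mathbf{A}+\mathbf{B}}=\tfrac12\mathbf{I}$ and $S(\mathbf{A}+\mathbf{B})=\ln 2$, while $S(\mathbf{A})=S(\mathbf{B})=\beta a e^{-\beta a}/(1+e^{-\beta a})+\ln\bigl(1+e^{-\beta a}\bigr)\to 0$ as $a\to\infty$; here $\ln Z_{\mathbf{A}+\mathbf{B}}=\ln 2-\beta a<0$, which is exactly the term the paper wrongly claims is non-negative. So the gap in your write-up (no proof of the $n=2$ case) cannot be closed by any refinement of these tools as the theorem stands; it requires an added hypothesis such as a common null vector (more generally, $Z_{\mathbf{A}+\mathbf{B}}\ge 1$ after regularization). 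Your closing remark, that the interplay between the spectra and $\beta$ governs whether the inequality can be closed, is the correct diagnosis of where both your outline and the paper's argument break down.
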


\begin{proof}
The main chunk of this proof is the same as the one provided by Domenico et al.\ \citep{manlio} for the Laplacian-based density matrix, however we use a regularization trick to ensure that the partition function of the covariance density matrix is always \(\ge1\), ensuring non-negativity.

We will prove this by induction:

\textbf{Base case \((n=2)\).}  
We first prove the result for two covariance matrices \(\mathbf{C}^1\) and \(\mathbf{C}^2\).  Let us set \(\boldsymbol{\Sigma}_2 := \mathbf{C}^1 + \mathbf{C}^2\).  We want to show
\[
S\bigl(\mathbf{C}^1 + \mathbf{C}^2\bigr) \;\;\le\;\; S(\mathbf{C}^1) + S(\mathbf{C}^2).
\]

We first regularize each matrix so that \(\min\lambda=0\).
For \(j=1,2\), define 
\[
m^j \;=\; \min\!\bigl\{\lambda \;\mid\;\lambda\text{ is an eigenvalue of }\mathbf{C}^j\bigr\}
\quad(\ge 0\text{ since }\mathbf{C}^j\text{ is p.s.d.}).
\]
Let
\[
\widetilde{\mathbf{C}}^j \;=\; \mathbf{C}^j - m^j \mathbf{I}.
\]
Then \(\min\!\lambda(\widetilde{\mathbf{C}}^j)=0\).  

Observe:

\[
S(\widetilde{\mathbf{C}}^j) \;=\; S(\mathbf{C}^j),
\]
because shifting by \(m^j \mathbf{I}\) only multiplies \(e^{-\beta \mathbf{C}^j}\) by a factor \(e^{+\beta m^j}\) which cancels in the normalized density matrix.  Concretely,
\[
e^{-\beta(\widetilde{\mathbf{C}}^j)} = e^{+\beta m^j}\,e^{-\beta \mathbf{C}^j}
\;\Longrightarrow\;
\mathbf{\rho}_{\widetilde{\mathbf{C}}^j} 
= 
\frac{\,e^{-\beta(\widetilde{\mathbf{C}}^j)}\,}{\mathrm{Tr}[\,e^{-\beta(\widetilde{\mathbf{C}}^j)}]}
=
\frac{\,e^{+\beta m^j}\,e^{-\beta \mathbf{C}^j}\,}{\,e^{+\beta m^j}\,\mathrm{Tr}[\,e^{-\beta \mathbf{C}^j}\bigr]}
=
\mathbf{\rho}_{\mathbf{C}^j},
\;\;
S(\widetilde{\mathbf{C}}^j)=S(\mathbf{C}^j).
\]

We define 
\[
\widetilde{\boldsymbol{\Sigma}}_2 \;=\; \widetilde{\mathbf{C}}^1 + \widetilde{\mathbf{C}}^2.
\]
We rename:
\[
\mathbf{A} := \widetilde{\mathbf{C}}^1,\quad
\mathbf{B} := \widetilde{\mathbf{C}}^2,\quad
\mathbf{C} := \mathbf{A}+\mathbf{B}.
\]
All three \(\mathbf{A},\mathbf{B},\mathbf{C}\) now have \(\min\ (\lambda)=0\).  Also
\[
S(\mathbf{A})=S(\mathbf{C}^1),\quad
S(\mathbf{B})=S(\mathbf{C}^2),\quad
S(\mathbf{A}+\mathbf{B})=S(\mathbf{C}^1 + \mathbf{C}^2).
\]

Define
\[
\mathbf{\rho}_{\mathbf{A}} = \frac{\,e^{-\beta \mathbf{A}}}{Z_{\mathbf{A}}},\quad
\mathbf{\rho}_{\mathbf{B}} = \frac{\,e^{-\beta \mathbf{B}}}{Z_{\mathbf{B}}},\quad
\mathbf{\rho}_{\mathbf{A}+\mathbf{B}} = \frac{\,e^{-\beta(\mathbf{A}+\mathbf{B})}}{Z_{\mathbf{A}+\mathbf{B}}},
\]
where \(Z_{\mathbf{X}} = \mathrm{Tr}[\,e^{-\beta \mathbf{X}}\bigr]\) and \(S(\mathbf{X}) = \beta\,\mathrm{Tr}[\mathbf{X}\,\mathbf{\rho}_{\mathbf{X}}]+\ln(Z_{\mathbf{X}})\).  Because \(\mathbf{A},\mathbf{B}\) each has a zero eigenvalue (or rank deficiency), \(Z_{\mathbf{A}},Z_{\mathbf{B}},Z_{\mathbf{A}+\mathbf{B}}\ge1\).  

We now consider two KL divergences:
\[
D\bigl(\mathbf{\rho}_{\mathbf{A}+\mathbf{B}}\;\|\;\mathbf{\rho}_{\mathbf{A}}\bigr) 
= 
\mathrm{Tr}\bigl[\mathbf{\rho}_{\mathbf{A}+\mathbf{B}}\,(\ln\mathbf{\rho}_{\mathbf{A}+\mathbf{B}} - \ln\mathbf{\rho}_{\mathbf{A}})\bigr]
\;\;\ge 0,
\]
\[
D\bigl(\mathbf{\rho}_{\mathbf{A}+\mathbf{B}}\;\|\;\mathbf{\rho}_{\mathbf{B}}\bigr)
\;\;\ge 0.
\]
Expanding each yields:

\medskip
\noindent

Hence
\[
\ln\mathbf{\rho}_{\mathbf{A}+\mathbf{B}} = -\beta(\mathbf{A}+\mathbf{B}) - \ln Z_{\mathbf{A}+\mathbf{B}},
\quad
\ln\mathbf{\rho}_{\mathbf{A}}=-\beta \mathbf{A} - \ln Z_{\mathbf{A}}.
\]
So
\[
\ln\mathbf{\rho}_{\mathbf{A}+\mathbf{B}}-\ln\mathbf{\rho}_{\mathbf{A}}
= -\beta\bigl[(\mathbf{A}+\mathbf{B})-\mathbf{A}\bigr] - \ln Z_{\mathbf{A}+\mathbf{B}} + \ln Z_{\mathbf{A}}
= -\beta \mathbf{B} - \ln Z_{\mathbf{A}+\mathbf{B}} + \ln Z_{\mathbf{A}}.
\]
Thus
\[
D\bigl(\mathbf{\rho}_{\mathbf{A}+\mathbf{B}}\|\mathbf{\rho}_{\mathbf{A}}\bigr)
=
\mathrm{Tr}\Bigl[
  \mathbf{\rho}_{\mathbf{A}+\mathbf{B}}\bigl(\ln\mathbf{\rho}_{\mathbf{A}+\mathbf{B}} - \ln\mathbf{\rho}_{\mathbf{A}}\bigr)
\Bigr]
=
\mathrm{Tr}\Bigl[
  \mathbf{\rho}_{\mathbf{A}+\mathbf{B}}\bigl(
    -\beta \mathbf{B}
    - \ln Z_{\mathbf{A}+\mathbf{B}}
    + \ln Z_{\mathbf{A}}
  \bigr)
\Bigr].
\]
Because \(\mathrm{Tr}[\mathbf{\rho}_{\mathbf{A}+\mathbf{B}}]=1\),
\[
D\bigl(\mathbf{\rho}_{\mathbf{A}+\mathbf{B}}\|\mathbf{\rho}_{\mathbf{A}}\bigr)
=
-\,\beta\,\mathrm{Tr}[\mathbf{B}\,\mathbf{\rho}_{\mathbf{A}+\mathbf{B}}]
-\ln Z_{\mathbf{A}+\mathbf{B}}
+\ln Z_{\mathbf{A}}.
\]
Meanwhile \(S(\mathbf{A}+\mathbf{B})=\beta\,\mathrm{Tr}[(\mathbf{A}+\mathbf{B})\,\mathbf{\rho}_{\mathbf{A}+\mathbf{B}}] + \ln Z_{\mathbf{A}+\mathbf{B}}\).  Observe
\[
-\,\beta\,\mathrm{Tr}[\mathbf{B}\,\mathbf{\rho}_{\mathbf{A}+\mathbf{B}}]
-\ln Z_{\mathbf{A}+\mathbf{B}}
=
-\,\beta\,\mathrm{Tr}[(\mathbf{A}+\mathbf{B})\,\mathbf{\rho}_{\mathbf{A}+\mathbf{B}}]
+\beta\,\mathrm{Tr}[\mathbf{A}\,\mathbf{\rho}_{\mathbf{A}+\mathbf{B}}]
-\ln Z_{\mathbf{A}+\mathbf{B}}
=
-\,S(\mathbf{A}+\mathbf{B})
+ \beta\,\mathrm{Tr}[\mathbf{A}\,\mathbf{\rho}_{\mathbf{A}+\mathbf{B}}].
\]
Hence
\[
D\bigl(\mathbf{\rho}_{\mathbf{A}+\mathbf{B}}\|\mathbf{\rho}_{\mathbf{A}}\bigr)
=
-\,S(\mathbf{A}+\mathbf{B})
+ \beta\,\mathrm{Tr}[\mathbf{A}\,\mathbf{\rho}_{\mathbf{A}+\mathbf{B}}]
+ \ln Z_{\mathbf{A}}.
\]
Since \(D(\mathbf{\rho}_{\mathbf{A}+\mathbf{B}}\|\mathbf{\rho}_{\mathbf{A}})\ge0\) we get
\[
-\,S(\mathbf{A}+\mathbf{B})
+ \beta\,\mathrm{Tr}[\mathbf{A}\,\mathbf{\rho}_{\mathbf{A}+\mathbf{B}}]
+ \ln Z_{\mathbf{A}}
\;\;\ge\;0.
\tag{1}
\]

\medskip
\noindent

Likewise,
\[
\mathbf{\rho}_{\mathbf{B}} = e^{-\beta \mathbf{B}}/Z_{\mathbf{B}},\quad
D\bigl(\mathbf{\rho}_{\mathbf{A}+\mathbf{B}}\|\mathbf{\rho}_{\mathbf{B}}\bigr) 
= 
\mathrm{Tr}\bigl[\mathbf{\rho}_{\mathbf{A}+\mathbf{B}}\,(\ln\mathbf{\rho}_{\mathbf{A}+\mathbf{B}}-\ln\mathbf{\rho}_{\mathbf{B}})\bigr]
\ge 0.
\]
One finds (by the same step):
\[
D\bigl(\mathbf{\rho}_{\mathbf{A}+\mathbf{B}}\|\mathbf{\rho}_{\mathbf{B}}\bigr)
=
-\,S(\mathbf{A}+\mathbf{B})
+ \beta\,\mathrm{Tr}[\mathbf{B}\,\mathbf{\rho}_{\mathbf{A}+\mathbf{B}}]
+ \ln Z_{\mathbf{B}}
\;\;\ge 0.
\tag{2}
\]

We now have: 
\[
(1)\colon
\quad
-\,S(\mathbf{A}+\mathbf{B})
+ \beta\,\mathrm{Tr}[\mathbf{A}\,\mathbf{\rho}_{\mathbf{A}+\mathbf{B}}]
+ \ln Z_{\mathbf{A}}
\;\ge 0,
\]
\[
(2)\colon
\quad
-\,S(\mathbf{A}+\mathbf{B})
+ \beta\,\mathrm{Tr}[\mathbf{B}\,\mathbf{\rho}_{\mathbf{A}+\mathbf{B}}]
+ \ln Z_{\mathbf{B}}
\;\ge 0.
\]

Further consider the fact that covariance matrices and their covariance density counterparts are always positive semi-definite. Therefore, the terms
\[
(3)\colon \quad \mathrm{Tr}[\mathbf{A}\,\mathbf{\rho}_{\mathbf{A}}] \geq 0,
\]
\[
(4)\colon \quad \mathrm{Tr}[\mathbf{B}\,\mathbf{\rho}_{\mathbf{B}}] \geq 0
\]
are always non-negative.

Since at least one eigenvalue of the covariance matrix is \(0\) due to the regularization trick (and thus their sum), the term
\[
(5)\colon \quad \ln Z_{\mathbf{C}}
\]
is also always non-negative because \(e^{0} =1\), ensuring that the trace term is always at least \(1\) resulting in \(\ln(1)=0\).

We now add the non-negative terms \((1),(2),(3),(4),(5)\) and observe that the inequality:

\[
D\bigl(\mathbf{\rho}_{\mathbf{A}+\mathbf{B}}\|\mathbf{\rho}_{\mathbf{B}}\bigr)
\;+\;
D\bigl(\mathbf{\rho}_{\mathbf{A}+\mathbf{B}}\|\mathbf{\rho}_{\mathbf{A}}\bigr)
\;+\;
\ln Z_{\mathbf{C}}
\;+\;
\beta\,\mathrm{Tr}[\mathbf{A}\,\mathbf{\rho}_{\mathbf{A}}] 
\;+\;
\beta\,\mathrm{Tr}[\mathbf{B}\,\mathbf{\rho}_{\mathbf{B}}] 
\;\;\ge 0
\]

We then exploit the fact that for a Gibbs-like state the Von Neumann entropy is given by 
\(\;S(\mathbf{A}+\mathbf{B})=\beta\,\mathrm{Tr}[(\mathbf{A}+\mathbf{B})\mathbf{\rho}_{\mathbf{A}+\mathbf{B}}] + \ln Z_{\mathbf{A}+\mathbf{B}}.\)
After expanding out the KL divergences, recalling that \(\mathbf{C}=\mathbf{A}+\mathbf{B}\), and re-arranging, we get:
\[
-\,2\,S(\mathbf{A}+\mathbf{B})
+ \beta\,\mathrm{Tr}[(\mathbf{A}+\mathbf{B})\mathbf{\rho}_{\mathbf{A}+\mathbf{B}}]
+ \ln Z_{\mathbf{A}+\mathbf{B}} 
+ S(\mathbf{A}) + S(\mathbf{B})
\;\;\ge 0.
\]
This, after some basic algebraic manipulation, allows us to conclude
\[
S(\mathbf{A}+\mathbf{B})
\;\;\le\;\;
S(\mathbf{A}) + S(\mathbf{B}).
\]
Thus \(\boxed{S(\mathbf{A}+\mathbf{B})\le S(\mathbf{A})+S(\mathbf{B})}\).

\medskip  
Finally, recall \(\mathbf{A}=\widetilde{\mathbf{C}}^1\) and \(\mathbf{B}=\widetilde{\mathbf{C}}^2\), so 
\[
S(\widetilde{\mathbf{C}}^1+\widetilde{\mathbf{C}}^2)
\;\;\le\;\;
S(\widetilde{\mathbf{C}}^1)+S(\widetilde{\mathbf{C}}^2).
\]
But each \(\widetilde{\mathbf{C}}^j\) has the same entropy as \(\mathbf{C}^j\), and \(\widetilde{\mathbf{C}}^1+\widetilde{\mathbf{C}}^2\) has the same entropy as \(\mathbf{C}^1+\mathbf{C}^2\).  So
\[
S\bigl(\mathbf{C}^1+\mathbf{C}^2\bigr)
=
S(\widetilde{\mathbf{C}}^1+\widetilde{\mathbf{C}}^2)
\;\;\le\;\;
S(\widetilde{\mathbf{C}}^1) + S(\widetilde{\mathbf{C}}^2)
=
S(\mathbf{C}^1)+S(\mathbf{C}^2).
\]
This completes the base case \(n=2\) in all detail.

\medskip
\noindent
\textbf{Inductive Step.}  
Suppose for some \(k\ge2\), \(S\bigl(\sum_{j=1}^k \mathbf{C}^j\bigr)\;\le\;\sum_{j=1}^k S(\mathbf{C}^j)\).  We show it for \(k+1\):
\[
\sum_{j=1}^{k+1}\mathbf{C}^j 
=
\Bigl(\sum_{j=1}^k \mathbf{C}^j\Bigr) + \mathbf{C}^{k+1}.
\]
By the \(n=2\) sub-additivity (applying it to \(\sum_{j=1}^k \mathbf{C}^j\) and \(\mathbf{C}^{k+1}\)),
\[
S\Bigl(\sum_{j=1}^k \mathbf{C}^j + \mathbf{C}^{k+1}\Bigr)
\;\le\;
S\!\Bigl(\sum_{j=1}^k \mathbf{C}^j\Bigr) + S(\mathbf{C}^{k+1}).
\]
But by induction hypothesis, \(S\bigl(\sum_{j=1}^k \mathbf{C}^j\bigr)\le \sum_{j=1}^k S(\mathbf{C}^j)\).  Therefore
\[
S\Bigl(\sum_{j=1}^{k+1} \mathbf{C}^j\Bigr)
\;\le\;
\sum_{j=1}^k S(\mathbf{C}^j) + S\bigl(\mathbf{C}^{k+1}\bigr)
=
\sum_{j=1}^{k+1}S(\mathbf{C}^j).
\]
Hence by induction, for any \(n\),
\[
\boxed{
S\!\Bigl(\sum_{j=1}^n \mathbf{C}^j\Bigr)
\;\;\le\;\;
\sum_{j=1}^n S\bigl(\mathbf{C}^j\bigr).
}
\]
This completes the proof.
\end{proof}

\begin{figure}[H]
    \centering
    \includegraphics[width=1.01
    \textwidth]{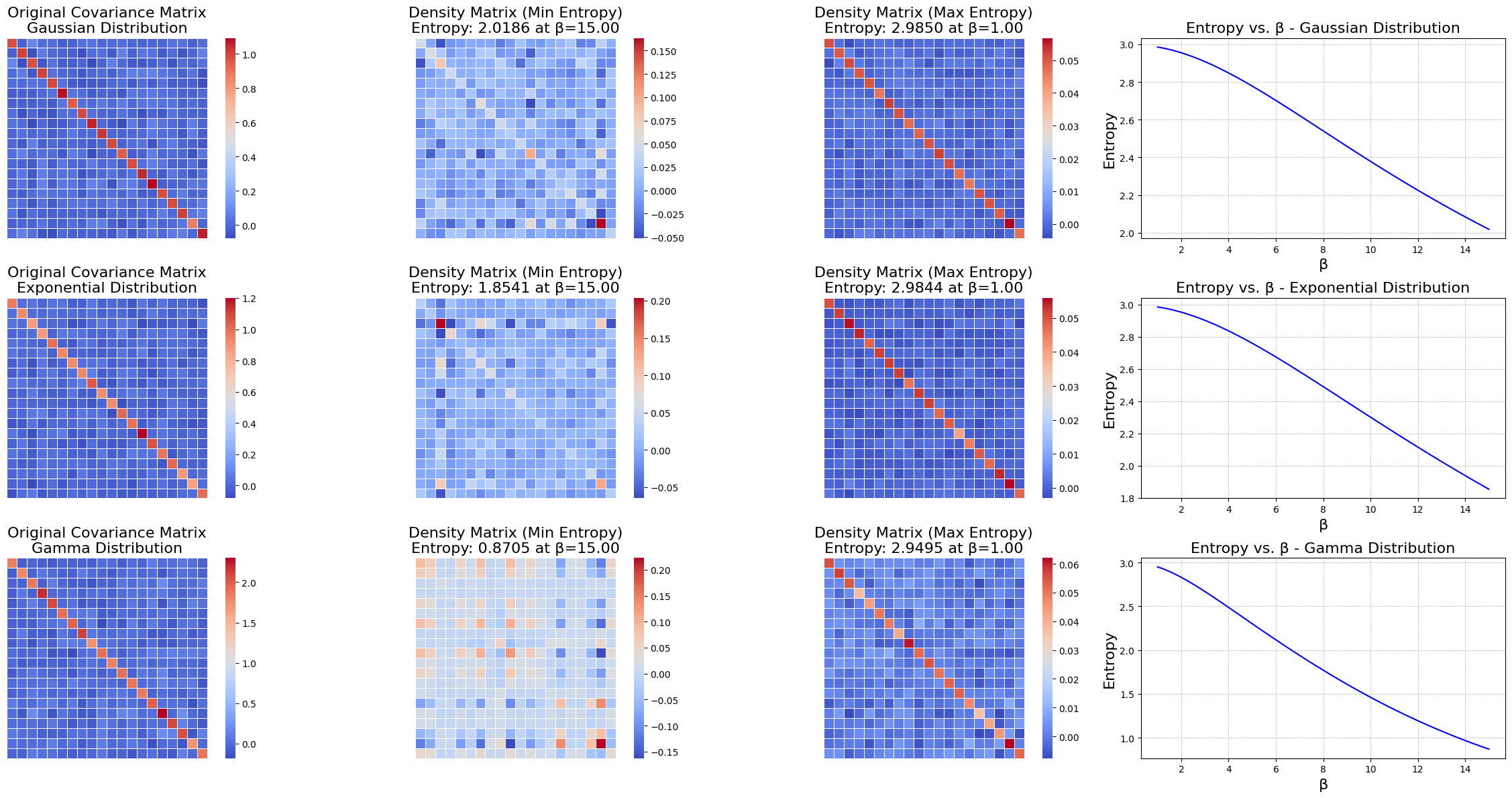}
    \caption{We sample covariance matrices from Gaussian,Exponential and Gamma distributions and observe the behaviour for different values of beta. Higher values of $\beta$ relate to increased diffusion across the covariance matrix and thus reduced regularity and thus entropy, smaller values have higher entropy. We can thus conclude that CDNN's in a higher entropy state are more stable, corresponding to traditional thermodynamical principles. }
    \label{fig:figure1}
\end{figure}


\subsection{Multi–scale Von–Neumann Entropy for \emph{Singular} Gaussians}
\label{app:singularVNE}

Let $\mathbf X=(X,Y,Z)^\top\sim\mathcal N(\mathbf 0,\Sigma)\;$ with 
$\operatorname{rank}(\Sigma)=r< p=3$.  
The classical differential entropy
\vspace*{-2mm}
\begin{equation}
  h(\mathbf X)\;=\;
  \frac{p}{2}\Bigl[1+\log(2\pi)\Bigr]\;+\;
  \frac12\log\det\Sigma                                           
  \label{eq:gauss-h}
\end{equation}
is defined w.r.t.\ Lebesgue measure on $\mathbb R^3$.
If $\det\Sigma=0$ the right term diverges to $-\infty$, so every closed-form
Gaussian entropy estimator \emph{explodes}.  We show that the multi-scale
Von–Neumann entropy (VNE) overcomes this obstruction and can
distinguish singular covariances that differ only by a global scale.


\begin{Proposition}[Density matrix ``lifts’’ singular covariances]
\label{prop:density_pd}
Let $\Sigma\succeq0$ be a $p\times p$ covariance matrix with
$\operatorname{rank}\Sigma=r<p$ (hence $\det\Sigma=0$).
For any finite $\beta\neq0$ define the density matrix
\[
   \rho_\beta(\Sigma)\;=\;
   \frac{\exp(-\beta\Sigma)}
        {\operatorname{tr}\exp(-\beta\Sigma)} .
\]
Then
\[
   \rho_\beta(\Sigma)\succ0
   \quad\text{and}\quad
   \det\rho_\beta(\Sigma)\;>\;0 .
\]
\end{Proposition}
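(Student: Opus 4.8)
The plan is to reduce everything to the spectral theorem together with the elementary fact that the exponential map sends arbitrary real eigenvalues to strictly positive ones. Since $\Sigma$ is a real covariance matrix it is symmetric and positive semidefinite, so by the spectral theorem I can write $\Sigma = U\Lambda U^\top$ with $U$ orthogonal and $\Lambda = \operatorname{diag}(\lambda_1,\dots,\lambda_p)$, where each $\lambda_i \ge 0$ and exactly $p-r$ of the $\lambda_i$ vanish (this is precisely where the hypothesis $\det\Sigma=0$ enters). I would keep the eigenvalues in this ordered, explicit form so that the rank deficiency is visible as the presence of the zero eigenvalue $\lambda_{r+1}=\cdots=\lambda_p=0$.

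First I would apply the functional calculus to obtain $\exp(-\beta\Sigma) = U\operatorname{diag}(e^{-\beta\lambda_1},\dots,e^{-\beta\lambda_p})U^\top$. The crucial observation, which is essentially the entire content of the proposition, is that for any finite $\beta$ and any real $\lambda_i$ the scalar $e^{-\beta\lambda_i}$ is strictly positive; in particular a zero eigenvalue of $\Sigma$ is mapped to $e^{0}=1>0$ rather than to $0$. Hence every eigenvalue of $\exp(-\beta\Sigma)$ is strictly positive and $\exp(-\beta\Sigma)\succ 0$, regardless of the rank deficiency of $\Sigma$. I would state this as the step that ``lifts'' the spectrum away from the origin.

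Next I would control the partition function $Z := \operatorname{tr}\exp(-\beta\Sigma) = \sum_{i=1}^p e^{-\beta\lambda_i}$. Being a finite sum of strictly positive terms, $Z$ is finite and strictly positive, using only that $\beta$ is finite. Since dividing a positive-definite matrix by a positive scalar preserves positive-definiteness, I conclude $\rho_\beta(\Sigma) = \exp(-\beta\Sigma)/Z \succ 0$. For the determinant I would simply multiply the (now strictly positive) eigenvalues, giving $\det\rho_\beta(\Sigma) = \prod_{i=1}^p \bigl(e^{-\beta\lambda_i}/Z\bigr) > 0$, which closes the argument.

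There is essentially no deep technical obstacle here; the only point requiring care is the finiteness of $\beta$, which guarantees that the exponential neither blows up nor collapses any eigenvalue to $0$. I would emphasise the mechanism behind the name ``lift'': the map $\lambda \mapsto e^{-\beta\lambda}$ carries the closed ray $[0,\infty)$ into the open ray $(0,\infty)$, strictly separating the spectrum of $\rho_\beta(\Sigma)$ from $0$ even when $\Sigma$ is singular. This is exactly what the classical Gaussian entropy formula in~\eqref{eq:gauss-h} fails to achieve, and it is the reason the multiscale von Neumann entropy remains well defined in the degenerate regime.
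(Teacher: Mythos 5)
Your proposal is correct and follows essentially the same route as the paper's proof: diagonalise $\Sigma$ by the spectral theorem, note that the map $\lambda \mapsto e^{-\beta\lambda}$ sends every eigenvalue (including the zeros, which go to $1$) to a strictly positive number so that $\exp(-\beta\Sigma)\succ 0$, and then divide by the strictly positive trace, which preserves positive-definiteness and yields $\det\rho_\beta(\Sigma)=e^{-\beta\sum_i\lambda_i}/\bigl(\sum_i e^{-\beta\lambda_i}\bigr)^p>0$. The only cosmetic difference is that the paper writes the zero block of the spectrum explicitly while you keep a single ordered list of eigenvalues; the substance is identical.
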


\begin{proof}
Diagonalise $\Sigma = Q\,\mathrm{diag}(\lambda_1,\dots,\lambda_r,
\underbrace{0,\dots,0}_{p-r})Q^\top$ with $Q$ orthogonal and
$\lambda_i>0$ for $i\le r$.
Matrix–exponential in the same basis is
\[
   \exp(-\beta\Sigma)=
   Q\,\mathrm{diag}\bigl(e^{-\beta\lambda_1},\dots,
        e^{-\beta\lambda_r},\underbrace{1,\dots,1}_{p-r}\bigr)Q^\top ,
\]
whose eigen-values are
$e^{-\beta\lambda_1},\dots,e^{-\beta\lambda_r}$ and $1$ (repeated
$p-r$ times).  All of them are \emph{strictly} positive, hence
$\exp(-\beta\Sigma)\succ0$ and
\[
   \det\!\bigl[\exp(-\beta\Sigma)\bigr]
   \;=\; e^{-\beta\sum_{i=1}^{p}\lambda_i}
   \;>\; 0 .
\]
Dividing by the positive scalar
$\operatorname{tr}\exp(-\beta\Sigma)=
 \sum_{i=1}^{p}e^{-\beta\lambda_i}$ preserves positive–definiteness and
scales every eigen-value by the same constant~$c^{-1}$.  Therefore
\[
   \rho_\beta(\Sigma)\succ0 ,\qquad
   \det\rho_\beta(\Sigma)=
   \frac{e^{-\beta\sum_i\lambda_i}}{\bigl(\sum_i e^{-\beta\lambda_i}\bigr)^p}
   \;>\;0 . \qedhere
\]
\end{proof}

\noindent
Proposition~\ref{prop:density_pd} guarantees that the
Von–Neumann entropy $S_\beta(\Sigma)$ is finite even when $\Sigma$ is
rank–deficient, circumventing the divergence of the classical
$\tfrac12\log\det\Sigma$ term.

Take two different rank-2 covariances (both $\det=0$):

\[
  \Sigma_1=\begin{pmatrix}2&0&0\\0&0&0\\0&0&0\end{pmatrix},
  \qquad
  \Sigma_2=\begin{pmatrix}1&0&0\\0&1&0\\0&0&0\end{pmatrix}.
\]

For $\beta=1$ their Von–Neumann entropies are

\[
  S_{1}(\Sigma_1)=1.28\;\text{bits},
  \qquad
  S_{1}(\Sigma_2)=1.41\;\text{bits}.
\]

Both matrices are singular, hence the log–det term diverges, but the
VNE still produces finite, \emph{different} values—lower for the
“collapsed’’ $\Sigma_{1}$, higher for the more evenly spread
$\Sigma_{2}$.  Thus VNE distinguishes degrees of randomness even among
rank-deficient covariances where classical formulas fail.

Now consider a commonly used naive estimator of the entropy of the covariance matrix.

\vspace*{-1mm}
\begin{align}
  S_{\text{naive}}(\Sigma)
  \;=\;
  -\sum_{i=1}^{p}\pi_i \log_2 \pi_i,
  \qquad
  \pi_i \;=\; \frac{\lambda_i}{\operatorname{tr}\Sigma},
  \label{eq:Snaive}
\end{align}


We begin by noting  the following observation

\begin{Proposition}[Scale blindness of the trace–normalised surrogate]
\label{Proposition:scaleBlind} 
Let $\Sigma_1\succeq0$ have rank $r<p$ and 
$\Sigma_2=\alpha\Sigma_1$ with $\alpha>0$.  
Then $S_{\mathrm{naive}}(\Sigma_2)=S_{\mathrm{naive}}(\Sigma_1)$.
\end{Proposition}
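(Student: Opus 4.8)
The plan is to exploit the scale-equivariance of the eigenvalue map together with the scale-invariance of the trace normalisation. First I would observe that if $\Sigma_2 = \alpha \Sigma_1$ with $\alpha > 0$, then the eigenvalues of $\Sigma_2$ are precisely $\{\alpha\lambda_i\}_{i=1}^{p}$, where $\{\lambda_i\}_{i=1}^{p}$ are the eigenvalues of $\Sigma_1$: multiplication by a positive scalar commutes with orthogonal diagonalisation, so $\Sigma_1 = Q\,\mathrm{diag}(\lambda_1,\dots,\lambda_p)\,Q^\top$ immediately gives $\Sigma_2 = Q\,\mathrm{diag}(\alpha\lambda_1,\dots,\alpha\lambda_p)\,Q^\top$ with the same eigenbasis $Q$. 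In particular the rank is preserved, so the pattern of zero versus strictly positive eigenvalues is identical for the two matrices.

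Next I would compute the trace-normalised weights for $\Sigma_2$. Since the trace is linear, $\operatorname{tr}\Sigma_2 = \alpha\operatorname{tr}\Sigma_1$, and therefore the $i$-th weight satisfies
\[
\pi_i^{(2)} \;=\; \frac{\alpha\lambda_i}{\operatorname{tr}\Sigma_2} \;=\; \frac{\alpha\lambda_i}{\alpha\operatorname{tr}\Sigma_1} \;=\; \frac{\lambda_i}{\operatorname{tr}\Sigma_1} \;=\; \pi_i^{(1)}.
\]
The positive factor $\alpha$ cancels exactly, so the two probability vectors $(\pi_1^{(1)},\dots,\pi_p^{(1)})$ and $(\pi_1^{(2)},\dots,\pi_p^{(2)})$ coincide.

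Finally, since $S_{\mathrm{naive}}$ depends on its argument \emph{only} through this weight vector, equality of the weights immediately yields $S_{\mathrm{naive}}(\Sigma_2) = S_{\mathrm{naive}}(\Sigma_1)$. There is no genuine obstacle in this argument; the only point deserving a word of care is the treatment of the $p-r$ vanishing eigenvalues, which under the standard convention $0\log_2 0 = 0$ contribute nothing to the sum. Because scaling by $\alpha>0$ sends zero eigenvalues to zero eigenvalues, this convention is applied to exactly the same index set for both matrices, so the degenerate terms match trivially and the equality holds verbatim. This establishes the claimed scale blindness, and contrasts sharply with the multi-scale Von--Neumann entropy $S_\beta$, whose exponential map $e^{-\beta\Sigma}$ is \emph{not} invariant under $\Sigma \mapsto \alpha\Sigma$ and therefore does detect the global scale.
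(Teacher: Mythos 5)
Your proof is correct and follows essentially the same route as the paper's: scaling the eigenvalues by $\alpha$ scales $\operatorname{tr}\Sigma$ by the same factor, so every ratio $\pi_i=\lambda_i/\operatorname{tr}\Sigma$ — and hence $S_{\mathrm{naive}}$ — is unchanged. Your additional remarks (shared eigenbasis, rank preservation, and the $0\log_2 0=0$ convention for the zero eigenvalues) merely make explicit details the paper leaves implicit.
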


\begin{proof}
Scaling each non-zero eigen-value by $\alpha$ multiplies 
$\mathrm{Tr}\Sigma$ by the same factor, leaving every ratio 
$\pi_i=\lambda_i/\mathrm{Tr}\Sigma$ unchanged; hence \eqref{eq:Snaive}
is invariant.
\end{proof}

\begin{figure}[H]
  \centering
  \includegraphics[width=0.8\textwidth]{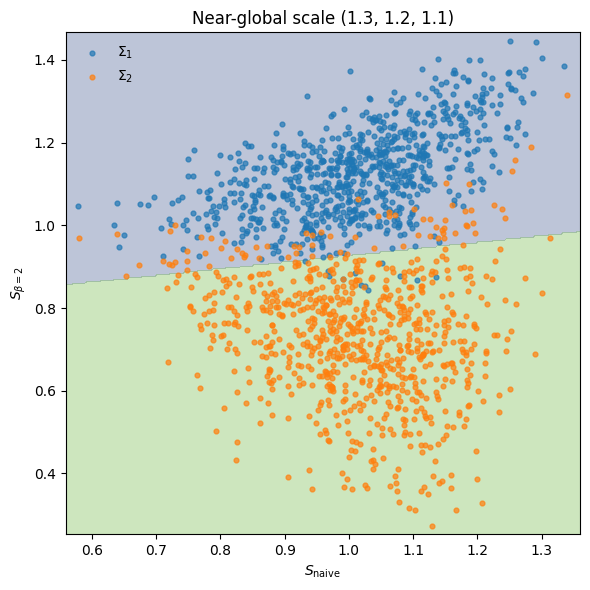}
  \caption{Scatter of naïve entropy $S_{\mathrm{naive}}$ versus von Neumann entropy $S_{\beta=2}$ for two covariance regimes $\Sigma_1$ (blue) and $\Sigma_2$ (orange) at near–global scale (eigenvalues multiplied by $(1.3,1.2,1.1)$).}
  \label{fig:near-global-scale}
\end{figure}

At the extreme global‐scale limit (e.g.\ uniform multiplication of all eigenvalues by a large factor), the naïve entropy remains exactly constant and is therefore blind to any change in overall variance magnitude.  Even when the scale change is mild,as in Figure~\ref{fig:near-global-scale}, where the spectrum shifts by a element wise multiplication of $(1.3,1.2,1.1)$,both regimes overlap almost perfectly along the $S_{\mathrm{naive}}$ axis, indicating that normalized‐spectrum methods cannot distinguish them.  

By contrast, von Neumann entropy at $\beta=2$,  
remains sensitive to absolute eigenvalue differences even when ratios are nearly equal.  In the same near–global‐scale setting, $S_{\beta=2}$ cleanly separates the two clusters into distinct vertical bands, capturing both the slight change in global variance and the subtle reshaping of the spectrum.  

Quantitatively, a simple LDA classifier thresholded on $S_{\mathrm{naive}}$ yields an AUC of only $0.4992$, essentially chance, whereas using $S_{\beta=2}$ achieves an AUC of $0.9869$, demonstrating that von Neumann entropy vastly outperforms the naïve measure at detecting even mild global‐scale changes. In cases such as spike detection in neurological signals, the naive entropy estimator may miss out crucial near global changes that are common neural responses.

Of course, there may be cases where invariance to scale may be desirable, thus CVNE acts as a \textit{complement} to existing entropic measures, rather than a replacement.

\subsection{Reconstructing the True Covariance Matrix}

Reconstructing the original covariance matrix involves formulating and solving an optimization problem that connects the spectral Properties of the covariance matrix to a density-like representation.

Begin with a symmetric positive-definite covariance matrix
\(\mathbf{C}\) that can be decomposed as
\[
  \mathbf{C}\;=\;\mathbf{U}\,\boldsymbol{\Lambda}\,\mathbf{U}^{T},
\]
where \(\mathbf{U}\) is orthogonal and
\(\boldsymbol{\Lambda}=\operatorname{diag}(\lambda_{1},\dots,\lambda_{n})\)
contains the eigenvalues.
The \emph{target distribution} is obtained by normalising the
eigenvalues,
\[
  p_{i}\;=\;\frac{\lambda_{i}}{\sum_{j}\lambda_{j}},
  \qquad i=1,\dots,n.
\]
At the same time we define the \emph{covariance–density distribution}
parameterised by an inverse temperature \(\beta\in\mathbb{R}\),
\[
  q_{\beta,i}\;=\;
  \frac{\exp(-\beta\lambda_{i})}{\sum_{j}\exp(-\beta\lambda_{j})}.
\]

\vspace{0.75\baselineskip}
\begin{definition}[Moment objective]\label{def:moment_objective}
Let
\[
  f(\beta)
  \;=\;
  \beta\sum_{i=1}^{n}p_{i}\lambda_{i}
  \;+\;
  \log\!\Bigl(\sum_{j=1}^{n}e^{-\beta\lambda_{j}}\Bigr),
\]
obtained by dropping the constant term
\(\sum_{i}p_{i}\log p_{i}\) from the
Kullback–Leibler divergence
\(D_{\mathrm{KL}}(\mathbf{p}\,\|\,\mathbf{q}_{\beta})\).
\end{definition}

\begin{Proposition}[Strict convexity]\label{Proposition:convexity}
The function \(f\colon\mathbb{R}\to\mathbb{R}\) of
\Cref{def:moment_objective} is twice continuously differentiable and
\emph{strictly convex}.  Explicitly,
\[
  f'(\beta)
    =\sum_{i=1}^{n}p_{i}\lambda_{i}\;-\;
      \frac{\sum_{j=1}^{n}\lambda_{j}e^{-\beta\lambda_{j}}}
           {\sum_{j=1}^{n}e^{-\beta\lambda_{j}}},
  \qquad
  f''(\beta)
    =\operatorname{Var}_{\mathbf{q}_{\beta}}[\lambda]\;>\;0,
\]
unless all eigenvalues are identical (a degenerate case).
\end{Proposition}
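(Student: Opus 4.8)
The plan is to recognize $f$ as an affine shift of the log-partition function $\log Z(\beta)$, where $Z(\beta)=\sum_{j=1}^{n} e^{-\beta\lambda_j}$, and to invoke the classical exponential-family identity that the second derivative of such a log-partition function equals the variance of $\lambda$ under the Gibbs weights $q_{\beta,j}=e^{-\beta\lambda_j}/Z(\beta)$. Writing $\mu:=\sum_{i} p_i\lambda_i$, which is a constant in $\beta$, we have $f(\beta)=\beta\mu+\log Z(\beta)$, and everything reduces to differentiating $\log Z$ twice.

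First I would settle the regularity claim. Each summand $e^{-\beta\lambda_j}$ is real-analytic and strictly positive, so $Z(\beta)>0$ for every $\beta\in\mathbb{R}$ and $\log Z$ is $C^{\infty}$; adding the smooth linear term $\beta\mu$ gives $f\in C^{\infty}$, hence in particular $f\in C^{2}$. Then I would compute the first derivative via the chain rule: $\tfrac{d}{d\beta}\log Z = Z'(\beta)/Z(\beta)= -\sum_j \lambda_j e^{-\beta\lambda_j}/Z(\beta) = -\mathbb{E}_{\mathbf{q}_\beta}[\lambda]$, so that $f'(\beta)=\mu-\mathbb{E}_{\mathbf{q}_\beta}[\lambda]$, which matches the stated formula.

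For the second derivative I would differentiate $\mathbb{E}_{\mathbf{q}_\beta}[\lambda]=N(\beta)/Z(\beta)$ with $N(\beta)=\sum_j \lambda_j e^{-\beta\lambda_j}$, using the two elementary identities $Z'(\beta)=-N(\beta)$ and $N'(\beta)=-\sum_j \lambda_j^{2} e^{-\beta\lambda_j}$. The quotient rule then produces $\tfrac{d}{d\beta}\bigl(N/Z\bigr)=\mathbb{E}_{\mathbf{q}_\beta}[\lambda]^{2}-\mathbb{E}_{\mathbf{q}_\beta}[\lambda^{2}]=-\operatorname{Var}_{\mathbf{q}_\beta}[\lambda]$, and since $f''(\beta)=-\tfrac{d}{d\beta}\mathbb{E}_{\mathbf{q}_\beta}[\lambda]$ we obtain $f''(\beta)=\operatorname{Var}_{\mathbf{q}_\beta}[\lambda]\ge 0$, establishing convexity.

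For strictness I would argue that because every weight $q_{\beta,j}=e^{-\beta\lambda_j}/Z(\beta)$ is strictly positive, the variance $\operatorname{Var}_{\mathbf{q}_\beta}[\lambda]$ vanishes if and only if all the $\lambda_j$ coincide; outside this degenerate case $f''(\beta)>0$ for every $\beta$, giving strict convexity on all of $\mathbb{R}$. The derivation is otherwise entirely routine, so there is no genuine analytic obstacle here; the only points that warrant care are correctly tracking the sign through the quotient-rule step (so that the negative of the derivative of a mean emerges as a \emph{positive} variance) and justifying the strictness claim through positivity of the Gibbs weights rather than merely quoting non-negativity of a variance.
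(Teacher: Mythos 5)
Your proof is correct and follows essentially the same route as the paper's: differentiate the log-partition function $\log Z(\beta)$ to get $f'(\beta)=\sum_i p_i\lambda_i-\mathbb{E}_{\mathbf{q}_\beta}[\lambda]$, apply the quotient rule to obtain $f''(\beta)=\operatorname{Var}_{\mathbf{q}_\beta}[\lambda]$, and conclude strict convexity from strict positivity of the variance when the eigenvalues are not all equal. If anything, your write-up is slightly more careful than the paper's (explicit $C^\infty$ regularity of $\log Z$, clean sign-tracking through the quotient rule, and justifying strictness via positivity of the Gibbs weights), but the underlying argument is identical.
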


\begin{proof}
Let
\(Z(\beta)=\sum_{j}e^{-\beta\lambda_{j}}\).
Then
\(f'(\beta)=\sum_{i}p_{i}\lambda_{i}-\dfrac{Z'(\beta)}{Z(\beta)}\).
Because \(Z'(\beta)=-\sum_{j}\lambda_{j}e^{-\beta\lambda_{j}}\),
\[
  f'(\beta)=\sum_{i}p_{i}\lambda_{i}-
            \frac{\sum_{j}\lambda_{j}e^{-\beta\lambda_{j}}   }
                 {\sum_{j} e^{-\beta\lambda_{j}} }
          =\sum_{i}p_{i}\lambda_{i}-\mathbb{E}_{\mathbf{q}_{\beta}}[\lambda].
\]

Differentiating once more and applying the quotient rule gives
\(f''(\beta)=\mathbb{E}_{\mathbf{q}_{\beta}}[\lambda^{2}]
            -\mathbb{E}_{\mathbf{q}_{\beta}}[\lambda]^{2}
            =\operatorname{Var}_{\mathbf{q}_{\beta}}[\lambda]\).
Because the eigenvalues are not all equal,
this variance is strictly positive, so \(f\) is strictly convex and \(C^{2}\).
\end{proof}

\begin{theorem}[Existence and uniqueness of the global minimiser]
\label{thm:unique_beta}
There exists a \emph{unique} value
\(\beta^{\star}\in\mathbb{R}\) satisfying
\[
  f'(\beta^{\star})=0\quad\Longleftrightarrow\quad
  \sum_{i=1}^{n}p_{i}\lambda_{i}
    =\frac{\sum_{j=1}^{n}\lambda_{j}e^{-\beta^{\star}\lambda_{j}}}
           {\sum_{j=1}^{n}      e^{-\beta^{\star}\lambda_{j}}}
    =\mathbb{E}_{\mathbf{q}_{\beta^{\star}}}[\lambda].
\]
This \(\beta^{\star}\) is the \textbf{global minimiser} of
\(f(\beta)\) and therefore also minimises
\(D_{\mathrm{KL}}(\mathbf{p}\,\|\,\mathbf{q}_{\beta})\).
\end{theorem}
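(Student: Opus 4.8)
The plan is to lean almost entirely on the strict convexity already established in \Cref{Proposition:convexity}, which hands us uniqueness and the global-minimiser claim essentially for free, and then to spend the real effort on an \emph{existence} argument for the stationary point. Since \(f\in C^2\) with \(f''(\beta)=\operatorname{Var}_{\mathbf q_\beta}[\lambda]>0\) whenever the eigenvalues are not all equal, the derivative \(f'\) is continuous and \textbf{strictly increasing} on \(\mathbb R\). A strictly convex \(C^1\) function has at most one critical point, and any such point is automatically its unique global minimiser; hence once a zero of \(f'\) is exhibited, both the uniqueness and the global-minimiser assertions follow with no further work, and the stated equivalence is merely the identity \(f'(\beta^\star)=0\) rewritten via \(\mathbb E_{\mathbf q_{\beta^\star}}[\lambda]=\sum_j\lambda_j e^{-\beta^\star\lambda_j}/\sum_j e^{-\beta^\star\lambda_j}\), which is exactly the formula for \(f'\) proved in \Cref{Proposition:convexity}.

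For existence I would write \(f'(\beta)=\bar\lambda_p-\mathbb E_{\mathbf q_\beta}[\lambda]\), where \(\bar\lambda_p:=\sum_i p_i\lambda_i\) is a fixed target mean, and then study the two boundary limits of the Boltzmann average. As \(\beta\to+\infty\) the weights \(q_{\beta,i}\propto e^{-\beta\lambda_i}\) concentrate on the smallest eigenvalue, so \(\mathbb E_{\mathbf q_\beta}[\lambda]\to\lambda_{\min}\); as \(\beta\to-\infty\) they concentrate on the largest, so \(\mathbb E_{\mathbf q_\beta}[\lambda]\to\lambda_{\max}\). Consequently
\[
\lim_{\beta\to+\infty}f'(\beta)=\bar\lambda_p-\lambda_{\min},
\qquad
\lim_{\beta\to-\infty}f'(\beta)=\bar\lambda_p-\lambda_{\max}.
\]
If these two limits carry opposite strict signs, continuity of \(f'\) together with the intermediate value theorem produces a root, and the strict monotonicity inherited from \(f''>0\) forces that root to be unique.

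The step I expect to be the crux is verifying that \(\bar\lambda_p\) lies \emph{strictly} inside \((\lambda_{\min},\lambda_{\max})\), since this is precisely what makes \(\beta^\star\) finite and attained rather than a value only approached as \(\beta\to\pm\infty\). Here the standing hypothesis that \(\mathbf C\) is positive definite is essential: all \(\lambda_i>0\), hence every weight \(p_i=\lambda_i/\sum_j\lambda_j\) is strictly positive, so \(\bar\lambda_p\) is a \textbf{strict} convex combination of the eigenvalues. A strict convex combination of numbers that are not all equal lies strictly between their minimum and maximum, giving \(\lambda_{\min}<\bar\lambda_p<\lambda_{\max}\), and therefore \(\lim_{\beta\to-\infty}f'(\beta)<0<\lim_{\beta\to+\infty}f'(\beta)\). (Without positive-definiteness this can fail: a spectrum taking only the values \(0\) and some common positive value pushes \(\bar\lambda_p\) onto the boundary \(\lambda_{\max}\), and the infimum is then only a limit—so this is exactly the obstruction the hypothesis rules out.)

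Finally I would assemble the pieces: \(\beta^\star\) is a critical point of the strictly convex \(f\), hence its unique global minimiser, and since \(f\) differs from \(D_{\mathrm{KL}}(\mathbf p\,\|\,\mathbf q_\beta)\) only by the \(\beta\)-independent constant \(\sum_i p_i\log p_i\) (see \Cref{def:moment_objective}), the same \(\beta^\star\) minimises the KL divergence. The sole degenerate case, all eigenvalues equal, is precisely where \(f''\equiv0\) and \(\bar\lambda_p=\lambda_{\min}=\lambda_{\max}\), making \(f\) constant and the statement vacuous, so it is the only situation the argument legitimately excludes.
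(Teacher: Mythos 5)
Your proposal is correct, and it follows the same skeleton as the paper's proof---uniqueness and globality from the strict-convexity proposition, existence from the intermediate value theorem applied to \(f'\)---but your treatment of the existence step is the right one, whereas the paper's is actually flawed. The paper asserts \(\lim_{\beta\to-\infty}f'(\beta)=+\infty\) and \(\lim_{\beta\to+\infty}f'(\beta)=-\infty\). Both claims are impossible: since \(f'(\beta)=\bar\lambda_p-\mathbb{E}_{\mathbf q_\beta}[\lambda]\) with \(\bar\lambda_p=\sum_i p_i\lambda_i\), and the Boltzmann average always lies in \([\lambda_{\min},\lambda_{\max}]\), the derivative is bounded, \(|f'(\beta)|\le\lambda_{\max}-\lambda_{\min}\); moreover the signs the paper claims would make \(f'\) decrease from \(+\infty\) to \(-\infty\), contradicting the strict convexity \(f''>0\) invoked in the same sentence. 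Your finite limits \(\bar\lambda_p-\lambda_{\min}\) (as \(\beta\to+\infty\)) and \(\bar\lambda_p-\lambda_{\max}\) (as \(\beta\to-\infty\)) are the correct ones, and you correctly identify that the IVT then produces a root only if the strict sandwich \(\lambda_{\min}<\bar\lambda_p<\lambda_{\max}\) holds---a condition the paper never verifies. Your verification (positive definiteness gives \(p_i>0\) for every \(i\), so \(\bar\lambda_p\) is a strict convex combination of eigenvalues that are not all equal) is exactly the missing ingredient, and your side remark that a spectrum of the form \(\{0,c,\dots,c\}\) makes \(\bar\lambda_p=\lambda_{\max}\) with \(f'>0\) everywhere shows the condition can genuinely fail for merely positive semi-definite matrices, so the hypothesis is doing real work. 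An equivalent repair, closer in spirit to what the paper may have intended, is to note that \(f\) itself is coercive, \(f(\beta)\sim\beta(\bar\lambda_p-\lambda_{\min})\to+\infty\) as \(\beta\to+\infty\) and \(f(\beta)\sim\beta(\bar\lambda_p-\lambda_{\max})\to+\infty\) as \(\beta\to-\infty\), but this too requires the same strict inequalities you establish. In short, your proposal does not merely reproduce the paper's argument; it corrects it.
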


\begin{proof}
By \Cref{Proposition:convexity}, \(f\) is strictly convex, hence possesses at
most one stationary point.  Recall that
\(f'(\beta)=\bar{\lambda}_{p}-\mathbb{E}_{\mathbf{q}_{\beta}}[\lambda]\),
where \(\bar{\lambda}_{p}=\sum_{i}p_{i}\lambda_{i}\).  Because the
\(p_{i}\) are strictly positive and sum to one, \(\bar{\lambda}_{p}\) is a
strict convex combination of the eigenvalues; when these are not all equal this
gives \(\lambda_{\min}<\bar{\lambda}_{p}<\lambda_{\max}\).  As
\(\beta\to-\infty\) the measure \(\mathbf{q}_{\beta}\) concentrates on
\(\lambda_{\max}\), so
\(\lim_{\beta\to-\infty}f'(\beta)=\bar{\lambda}_{p}-\lambda_{\max}<0\);
as \(\beta\to+\infty\) it concentrates on \(\lambda_{\min}\), so
\(\lim_{\beta\to+\infty}f'(\beta)=\bar{\lambda}_{p}-\lambda_{\min}>0\).
Since \(f'\) is continuous and changes sign, the intermediate value theorem
guarantees the existence of a \(\beta^{\star}\) with
\(f'(\beta^{\star})=0\).  Strict convexity then makes this point unique
and global.
\end{proof}

\vspace{0.5\baselineskip}
\noindent
With \(\beta^{\star}\) obtained (\emph{e.g.}\ numerically via Gradient Descent),
the optimal covariance-density distribution is
\[
  q_{\beta^{\star},i}
  =\frac{\exp(-\beta^{\star}\lambda_{i})}
        {\sum_{j}\exp(-\beta^{\star}\lambda_{j})},
\]
and the associated \emph{density matrix} is reconstructed as
\[
  \mathbf{M}
  =\frac{\exp(-\beta^{\star}\mathbf{C})}
        {\operatorname{tr}\!\bigl(\exp(-\beta^{\star}\mathbf{C})\bigr)}.
\]

We generate a covariance matrix from a standard Gaussian matrix and use
a BFGS routine to recover \(\beta^{\star}\).
\Cref{fig:figure1,fig:figure2} display the original matrix, its
reconstruction, and the alignment of the eigenvalue distribution.
Consequently, a \emph{Covariance Density Network} equipped with the
multi-scale filter bank \(\{\beta^{\star}\}\) is generally capable of matching (and
often surpassing) the representational power of a classic Covariance
Neural Network.

\begin{figure}[H]
  \centering
  \includegraphics[width=0.7\textwidth]{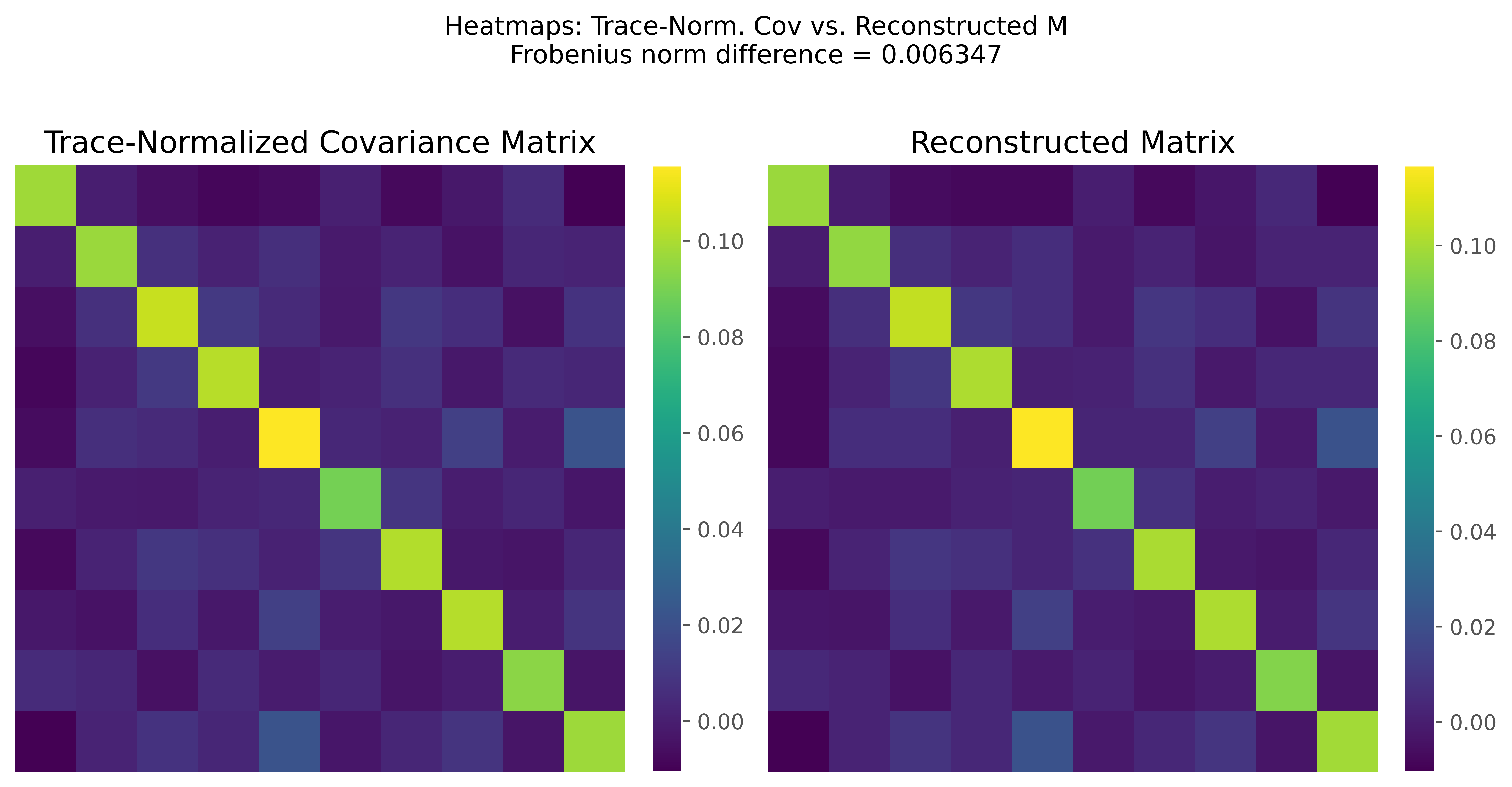}
  \caption{Original and reconstructed covariance matrices.}
  \label{fig:figure1}
\end{figure}

\begin{figure}[H]
  \centering
  \includegraphics[width=0.7\textwidth]{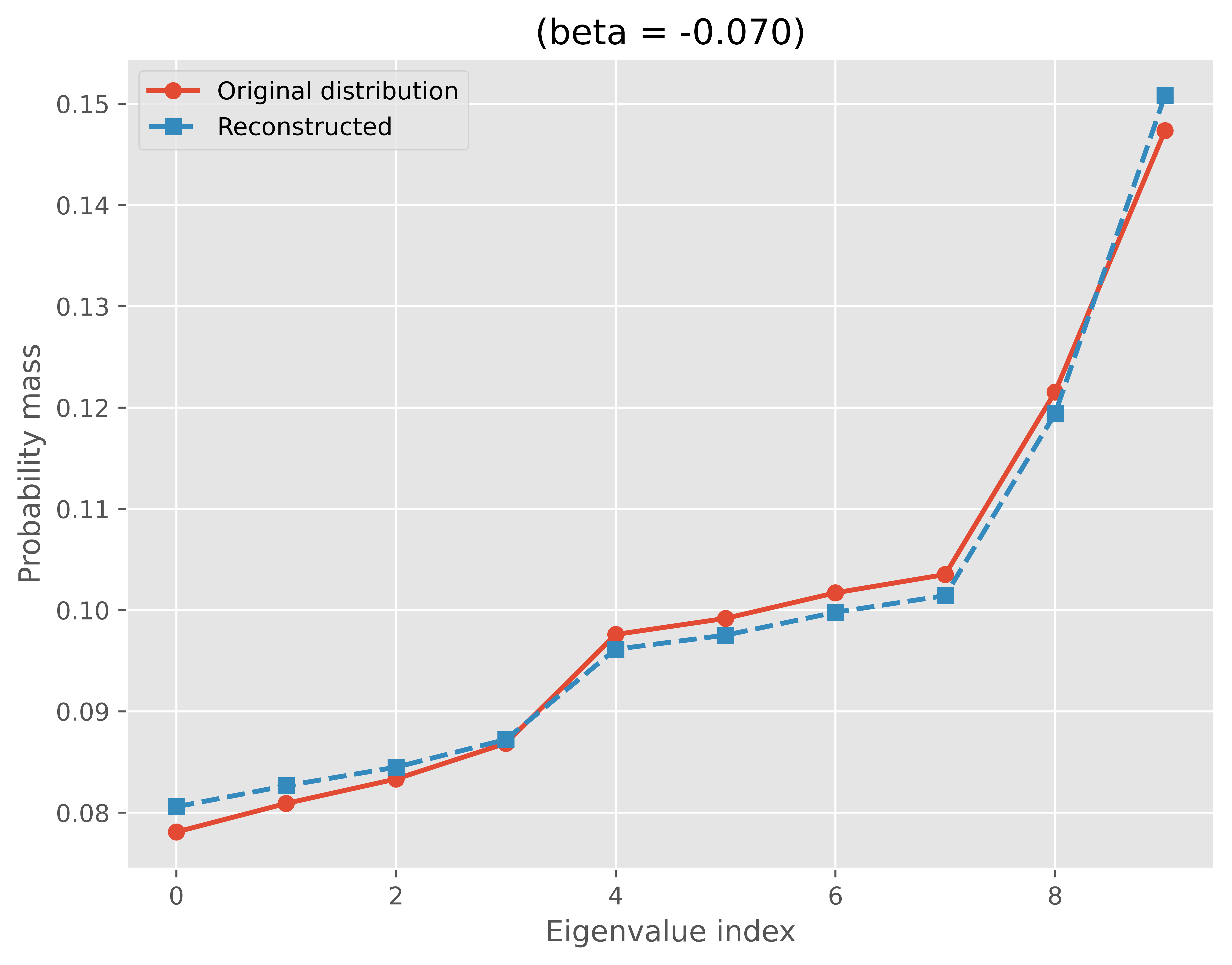}
  \caption{Eigenvalues of reconstructed and original covariance matrix.}
  \label{fig:figure2}
\end{figure}

We can also match \textit{realistic} distributions by optimizing the beta parameter. For example, we can construct a Covariance Density Matrix with a noisy estimate of the true Covariance and optimize beta to match the noisy estimate to the true one (See Figure 10).
\begin{figure*}
  \centering
  \includegraphics[width=\textwidth]{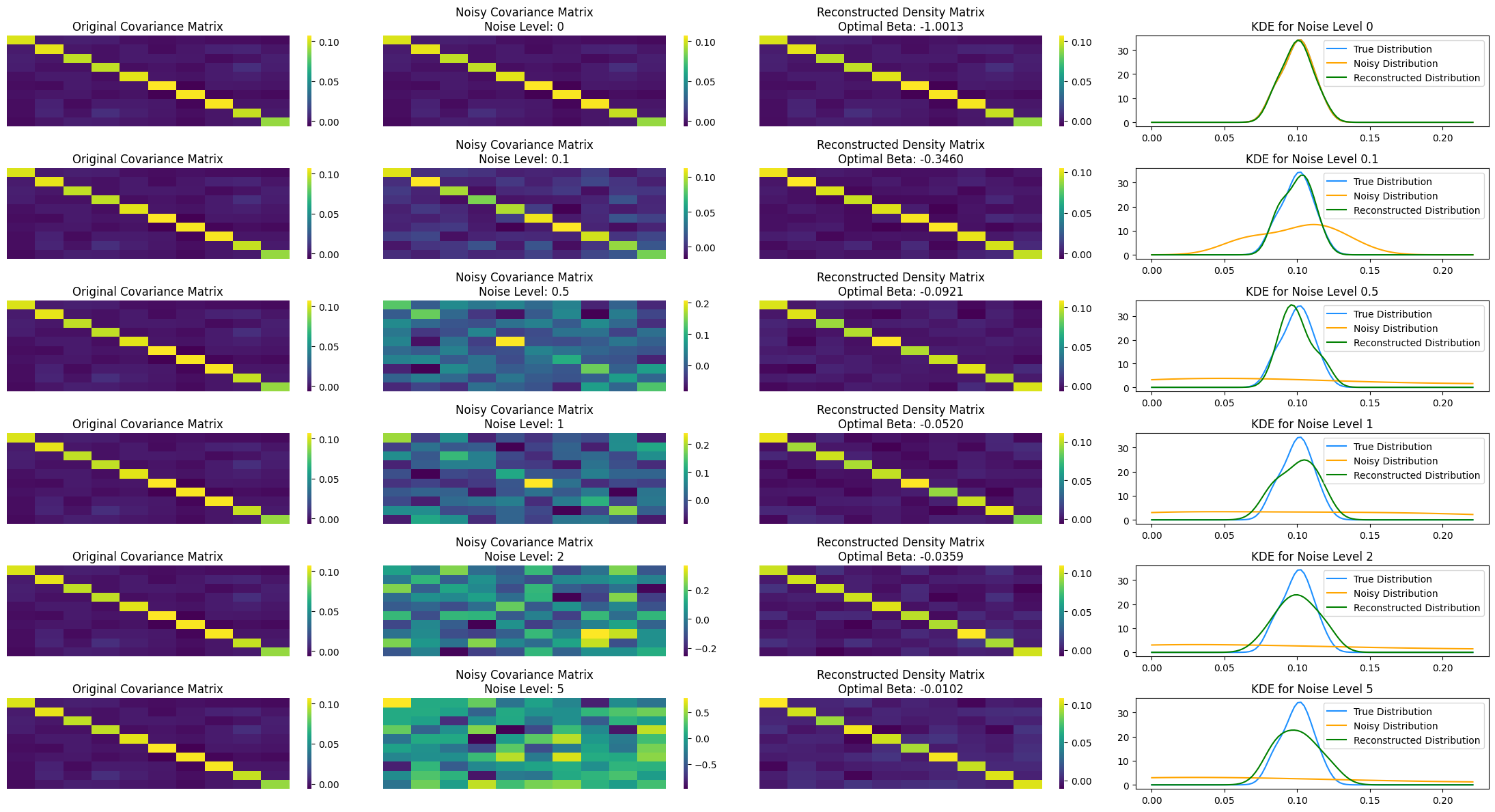}
  \caption{%
    Synthetic example: Gaussian data generate
    \(\mathbf{C}_{\text{true}}\); additive noise yields
    \(\mathbf{C}_{\text{noisy}}\).
    Matching the noisy and true eigenspectra via \(\beta\) produces a
    density matrix that closely approximates
    \(\mathbf{C}_{\text{true}}\).
    See the connection to \textbf{Minimum Probability Flow
    Learning}~\citep{mpfl}.}
  \label{fig:neuripsplot}
\end{figure*}

\begin{figure*}
  \centering
  \includegraphics[width=\textwidth]{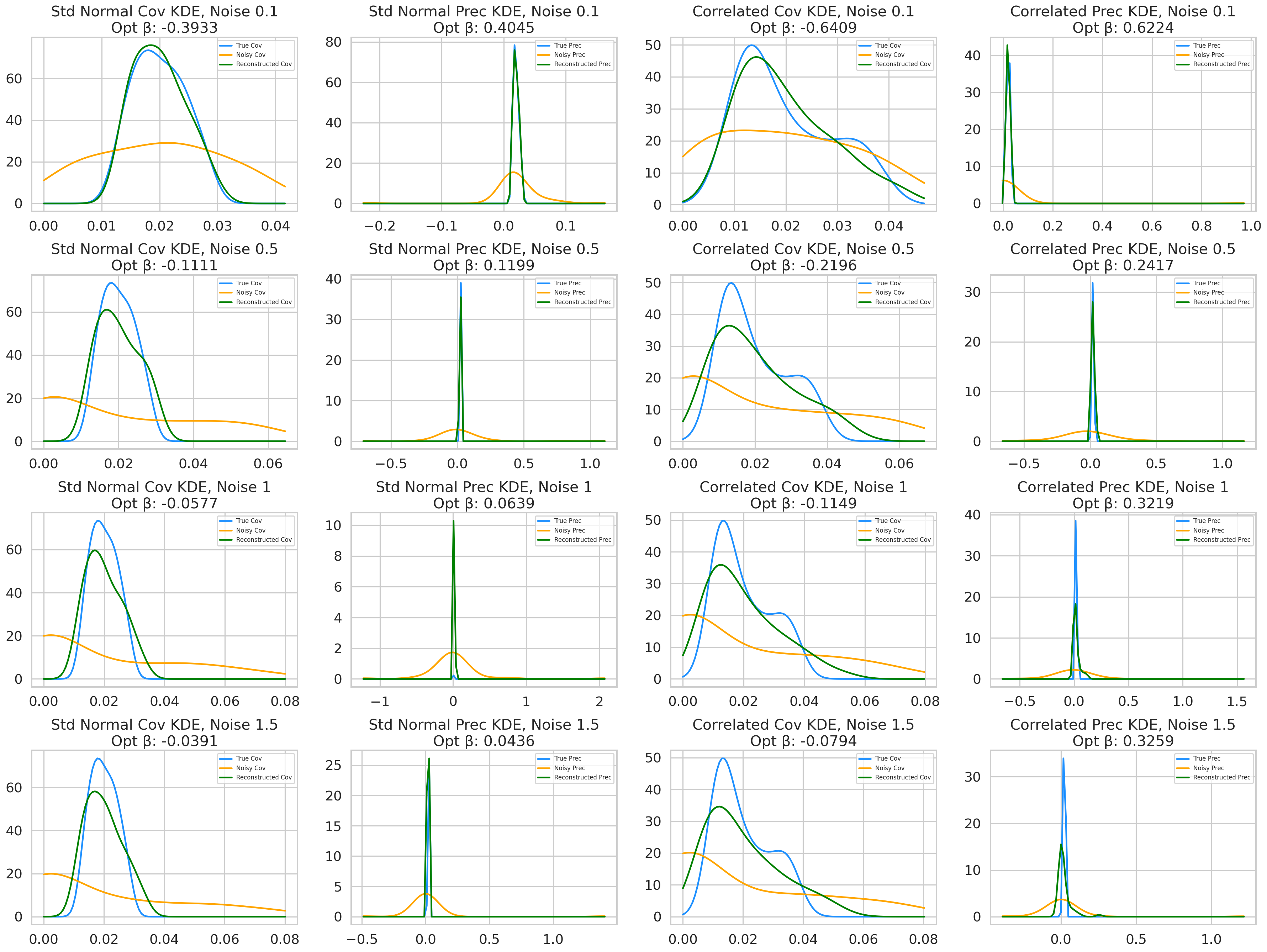}
  \caption{Reconstructing the eigenvalue distribution of a (Standard Normal vs highly correlated) Covariance and Precision matrix from a noise perturbed estimate using our moment matching routine.}
  \label{fig:neuripsplot1}
\end{figure*}

While Figure \ref{fig:neuripsplot} shows the case when we recreate a covariance matrix from \textit{standard} normal distrbution. We can also consider more challenging cases i.e covariance matrices from auto-regressive processes with non-trivial off-diagonal elements. A more intuitive approach could also be to reconstruct the precision matrix as this would lie in the positive $\beta$ domain.

Figure \ref{fig:neuripsplot1} shows the eigenvalue distribution reconstruction of the true precision and covariance matrix from highly correlated auto-regressive data. We note that while the actual entry by entry reconstruction of the covariance matrix was not ideal but the eigenvalue distribution was well recovered. We encourage future work in this domain.

\subsection*{Empirical Stability Analysis}

\subsubsection{Stability Analysis of the Operator Norm}
The Operator norm difference between original and perturbed matrices is used to evaluate stability.

In particular we measure:
\[
\|\Delta\| = \|\mathbf{C}_{\text{original}} - \mathbf{C}_{\text{perturbed}}\|,
\]
and similarly for the density matrices:
\[
\|\Delta \textbf{P}\| = \|\boldsymbol{\textbf{P}}_{\text{original}} - \boldsymbol{\textbf{P}}_{\text{perturbed}}\|.
\]

Where $\textbf{P}_{\text{perturbed}}$ is a perturbation only in the covariance matrix used to compute $\textbf{P}$.

\begin{figure}[H]
    \centering
    \includegraphics[width=0.9\textwidth]{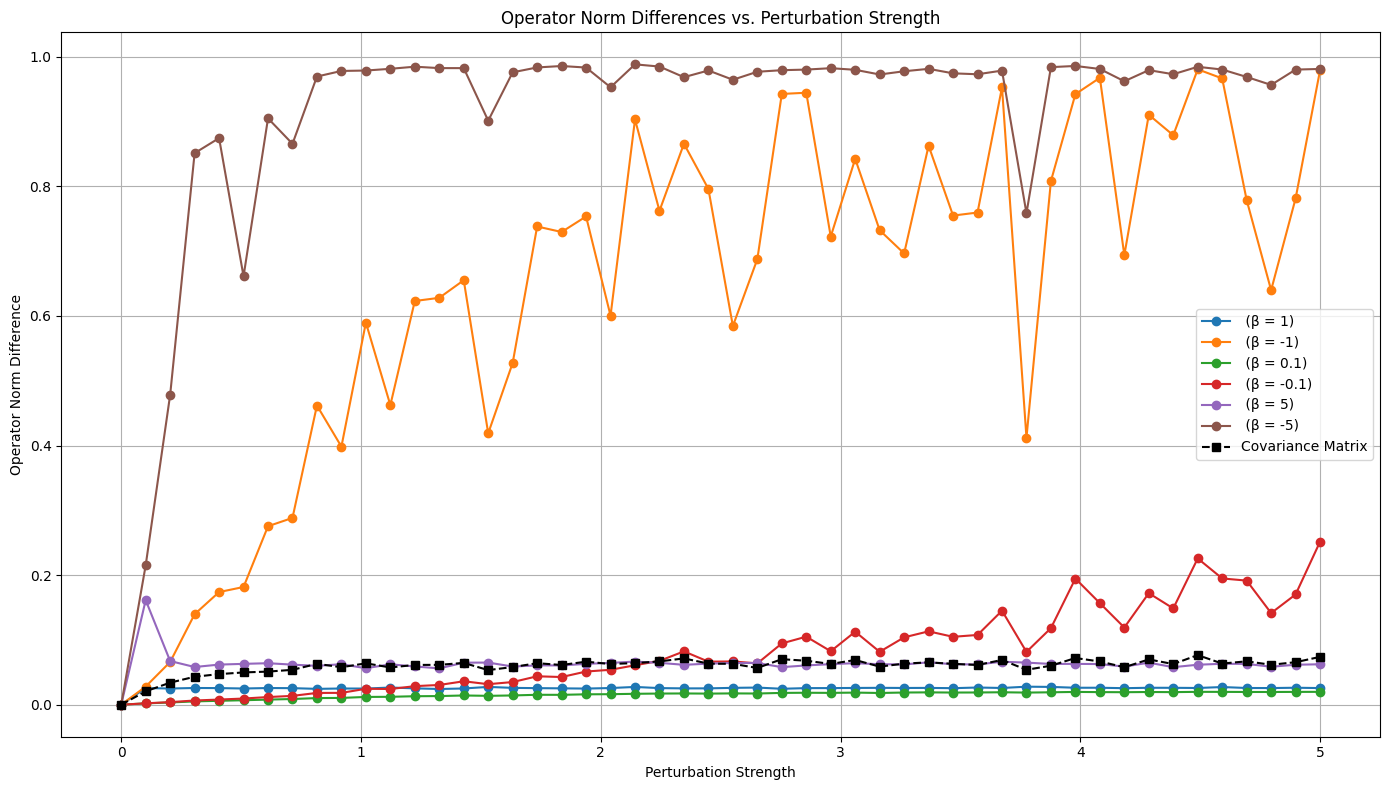}
    \caption{Stability of Covariance Density matrices to perturbations in the sample covariance matrix under the Operator norm.}
    \label{fig:stability}
\end{figure}

We generate covariance matrices from Gaussian data and add noise perturbations at different levels. Observe that we pay large penalties for negative \(\beta\); however, this tends to decrease as \(\beta\) tends to 0. Positive values of \(\beta\) show enhanced stability compared to the trace-normalized covariance matrix, although as \(\beta\) increases, the stability decreases. This decrease is much less pronounced than in the negative \(\beta\) scenario. Figure~10 thus corresponds well with Lemma 1. 

\subsubsection{Stability of CDNN in Synthetic Regression problem}

After a Taylor Expansion $\textbf{P}$ can be expressed as:
\begin{equation}
    \textbf{P} = \frac{\mathbf{I} - \beta \mathbf{C} + \frac{\beta^2 \mathbf{C}^2}{2!} - \frac{\beta^3 \mathbf{C}^3}{3!} + \cdots}{Z}.
\end{equation}

As we purely want to observe the effect of \(\mathbf{C}\) and its transformations on the stability to the sample size of the covariance matrix, we update \(\mathbf{x}\) as:
\begin{equation}
    \mathbf{x}_{\text{shifted}} = \textbf{P}\mathbf{x} \;-\; \frac{\mathbf{x}}{Z},
\end{equation}
where $\textbf{P}$ applies the density matrix $\textbf{P}$ to the input \(\mathbf{x}\).

As in Sihag et al.\ we compare the stability of CDNNs relative to perturbations in the sample covariance matrix, i.e.\ we vary the number of samples used for the construction of the covariance matrix.  We replicate the exact conditions in Sihag et al.\ and compare our approach with VNNs, Linear regression with PCA components, and PCA with a Radial Basis Function (RBF) Kernel on random linear regression problems using the routine \texttt{sklearn.datasets.make\_regression} in Python, which lets us specify various parameters. We generate two cases, one with no external noise and one with a noise level of 5 (a parameter we can tune directly in the Python dataset generation).

Figure~6 shows the regression performance under no noise. We see that at smaller values of \(\beta\) we maintain almost perfect stability; however, the MAE performance is weaker than VNNs. We conjecture that in the Friedmann regression problem \citep{friedman}, eigenmodes corresponding to lower-variance principal components are the most discriminating, and since VNNs inherently discriminate these components better, they achieve strong performance. At small values of \(\beta\) these low-variance components are not as discriminable and thus the performance suffers, but as \(\beta\) increases to larger values we can clearly see that as more low-variance components are shifted to the discriminable eigenspace the performance improves, and at \(\beta=15\) we see improved performance compared to VNNs.

\begin{figure}[H]
    \centering
    \includegraphics[width=0.9\textwidth]{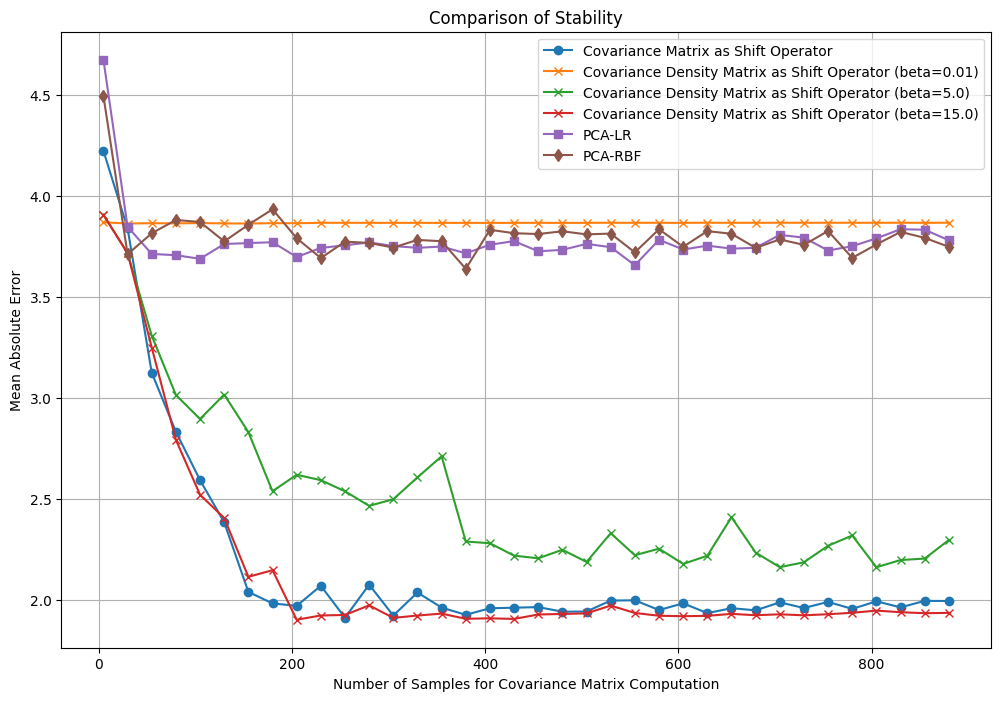}
    \caption{Regression performance under noise-free conditions.}
    \label{fig:noisefree}
\end{figure}

Under noisy conditions, regression models based on density matrices (\(\boldsymbol{\rho}\)) showed superior stability compared to covariance neural networks.

\begin{figure}[H]
    \centering
    \includegraphics[width=0.9\textwidth]{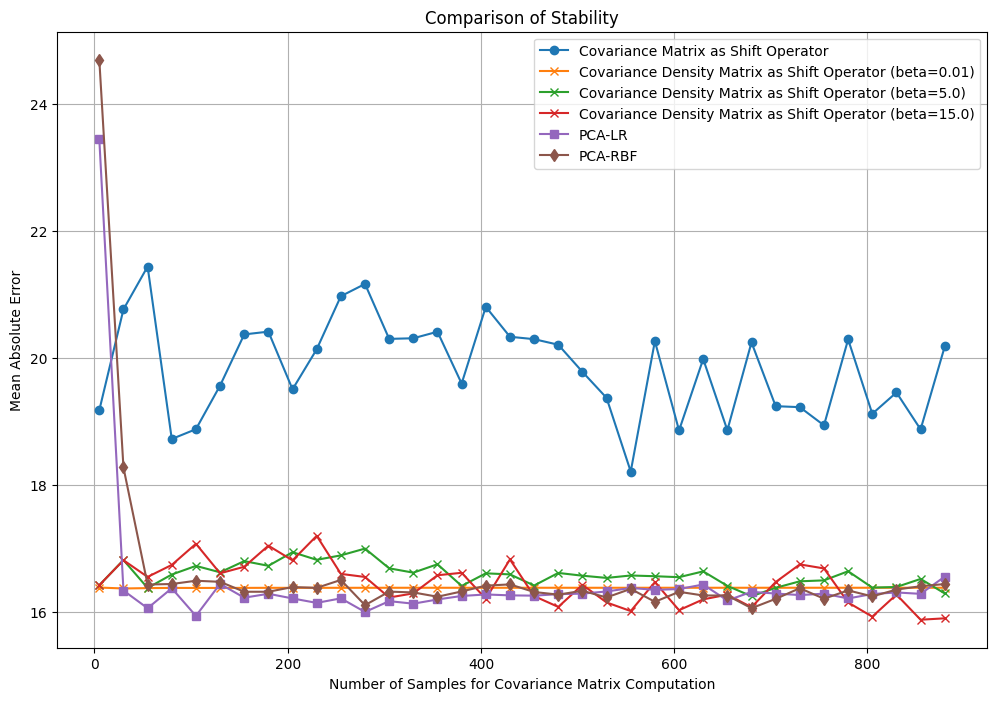}
    \caption{Regression performance under noisy conditions.}
    \label{fig:noisy}
\end{figure}

Figure~7 repeats the experiment with the same conditions but the noise level is increased to 5. We can see that VNNs now suffer a significant drop in performance. This can be attributed to the noise being mostly concentrated in the low-variance components (i.e.\ the eigenspace that VNNs are best able to discriminate), significantly reducing performance. CDNNs discriminate best in the high-variance eigenspace, so the noise is less likely to affect performance. Thus, even at small \(\beta\) we see stable and stronger performance than VNNs. PCA also operates in this high-variance space and thus does not suffer from noise as much. This suggests that CDNNs inherently exhibit a greater robustness to external noise, regardless of the value of \textit{positive} \(\beta\).

\end{document}